\documentclass{article} % For LaTeX2e
\usepackage{iclr2024_conference,times}

% Optional math commands from https://github.com/goodfeli/dlbook_notation.
%%%%% NEW MATH DEFINITIONS %%%%%

\usepackage{amsmath,amsfonts,bm}

% Mark sections of captions for referring to divisions of figures

% Highlight a newly defined term

% Figure reference, lower-case.

% Figure reference, capital. For start of sentence

% Section reference, lower-case.

% Section reference, capital.

% Reference to two sections.

% Reference to three sections.

% Reference to an equation, lower-case.
\def\eqref#1{equation~\ref{#1}}
% Reference to an equation, upper case

% A raw reference to an equation---avoid using if possible

% Reference to a chapter, lower-case.

% Reference to an equation, upper case.

% Reference to a range of chapters

% Reference to an algorithm, lower-case.

% Reference to an algorithm, upper case.

% Reference to a part, lower case

% Reference to a part, upper case

\def\1{\bm{1}}

% Random variables

% rm is already a command, just don't name any random variables m

% Random vectors

% Elements of random vectors

% Random matrices

% Elements of random matrices

% Vectors

% Elements of vectors

% Matrix

% Tensor
\DeclareMathAlphabet{\mathsfit}{\encodingdefault}{\sfdefault}{m}{sl}
\SetMathAlphabet{\mathsfit}{bold}{\encodingdefault}{\sfdefault}{bx}{n}

% Graph

% Sets

% Don't use a set called E, because this would be the same as our symbol
% for expectation.

% Entries of a matrix

% entries of a tensor
% Same font as tensor, without \bm wrapper

% The true underlying data generating distribution

% The empirical distribution defined by the training set

% The model distribution

% Stochastic autoencoder distributions

 % Laplace distribution

% Wolfram Mathworld says $L^2$ is for function spaces and $\ell^2$ is for vectors
% But then they seem to use $L^2$ for vectors throughout the site, and so does
% wikipedia.

 % See usage in notation.tex. Chosen to match Daphne's book.

\DeclareMathOperator*{\argmin}{arg\,min}

\usepackage{natbib}
\usepackage{hyperref}
\usepackage{url}
\usepackage{subfigure} 
\usepackage[utf8]{inputenc} % allow utf-8 input
\usepackage[T1]{fontenc}    % use 8-bit T1 fonts
\usepackage{booktabs}       % professional-quality tables
\usepackage{amsfonts}       % blackboard math symbols
\usepackage{nicefrac}       % compact symbols for 1/2, etc.
\usepackage{microtype}      % microtypography
\usepackage{amsthm}
\usepackage{float}
\usepackage{graphicx}
\usepackage{array}
\usepackage{multirow}
\usepackage{ragged2e}
\usepackage{color}
\usepackage{arydshln}
\usepackage{amsmath}
\usepackage{tabularray}
\usepackage{wrapfig}
\usepackage{amssymb}
\usepackage{todonotes}
\usepackage{xcolor}
\usepackage{xspace}
\usepackage{colortbl}

\usepackage{multicol}

%CUSTOM commands
\newcommand{\method}{{\ttfamily DuRM}\xspace}
\newcommand{\methodx}[1]{\method-$#1$}

\newtheorem{theorem}{Theorem}

% \ne wtheorem{proof}{Proof}
\newtheorem{proposition}{Proposition}
\newtheorem{definition}{Definition}

\title{Frustratingly Easy Model Generalization \\by Dummy Risk Minimization}

\author{
Juncheng Wang$^{1}$, Jindong Wang$^2$\thanks{Corresponding author: Jindong Wang <jindong.wang@microsoft.com>.}, Xixu Hu$^3$, Shujun Wang$^{1}$, Xing Xie$^{2}$\\
  $^{1}$The Hong Kong Polytechnic University ~
  $^{2}$Microsoft Research ~
  $^{3}$City University of Hong Kong
  }

% The \author macro works with any number of authors. There are two commands
% used to separate the names and addresses of multiple authors: \And and \AND.
%
% Using \And between authors leaves it to \LaTeX{} to determine where to break
% the lines. Using \AND forces a linebreak at that point. So, if \LaTeX{}
% puts 3 of 4 authors names on the first line, and the last on the second
% line, try using \AND instead of \And before the third author name.

\iclrfinalcopy % Uncomment for camera-ready version, but NOT for submission.
\begin{document}

\maketitle

\begin{abstract}
Empirical risk minimization (ERM) is a fundamental machine learning paradigm. However, its generalization ability is limited in various tasks. In this paper, we devise Dummy Risk Minimization (\method), a frustratingly easy and general technique to improve the generalization of ERM. \method is extremely simple to implement: just enlarging the dimension of the output logits and then optimising using standard gradient descent. Moreover, we validate the efficacy of \method on both theoretical and empirical analysis. 
Theoretically, we show that \method derives greater variance of the gradient, which facilitates model generalization by observing better flat local minima. 
Empirically, we conduct evaluations of \method across different datasets, modalities, and network architectures on diverse tasks, including conventional classification, semantic segmentation, out-of-distribution generalization, adversarial training, and long-tailed recognition.
Results demonstrate that \method could consistently improve the performance under all tasks with an almost free-lunch manner. 
The goal of \method is not achieving state-of-the-art performance, but triggering new interest in the fundamental research on risk minimization.
\end{abstract}

\vspace{-.1in}
\section{Introduction}
\vspace{-.1in}

    % Deep learning has achieved great success in various fields such as image classification~\citep{imagenet, ResNet, Transformer, CLIP}, semantic segmentation~\citep{FCN, Segformer, CityScapes, PascalVoc}, speech recognition~\citep{baevski2021unsupervised,schneider2019wav2vec}, and natural language processing~\citep{Transformer,devlin2018bert}.
    % Empirical risk minimization (ERM)~\citep{ERM} has been widely adopted as the basic machine learning paradigm.
    % Based on ERM, different algorithms are developed to improve the \emph{generalization} performance in various settings including out-of-distribution (OOD) generalization~\citep{WJD_TKDE}, long-tailed recognition~\citep{LT_Kaihua}, and adversarial defense~\citep{FGSM, PGD}.

% P1: Background   
Deep learning has demonstrated remarkable achievements across diverse fields, such as image classification~\citep{imagenet, ResNet, Transformer, CLIP}, semantic segmentation~\citep{FCN, Segformer, CityScapes, PascalVoc}, speech recognition~\citep{baevski2021unsupervised,schneider2019wav2vec}, and natural language processing~\citep{Transformer,devlin2018bert}. In machine learning community, empirical risk minimization (ERM)~\citep{ERM} serves as the fundamental paradigm, in which various algorithms are developed to enhance \emph{generalization} performance in various scenarios based on it, including out-of-distribution (OOD) generalization~\citep{WJD_TKDE}, long-tailed recognition~\citep{LT_Kaihua}, and adversarial defense~\citep{FGSM, PGD}.

% P2: Impact of ERM
To enhance model generalization, existing efforts employ various strategies, either by incorporating versatile modules into ERM or by designing modules tailored to specific settings. 
On one hand, general techniques with broad effectiveness across different tasks are applied to ERM through regularization methods (e.g., $\ell_1$ or $\ell_2$), data augmentation~\citep{RandAug}, and ensemble learning~\citep{AdaBoost}. 
On the other hand, researchers developed specific modules to improve generalization in particular tasks. 
For example, invariant risk minimization (IRM)~\citep{IRM} enhances OOD generalization by learning class label related causal features. Adversarial training~\citep{madry} improves adversarial robustness, while weight balancing techniques~\citep{LT_SSP} successfully achieved better performance in long-tailed recognition.

Albeit that most efforts show great performance, their complexity cannot be ignored.
Especially when the training data and network architectures become larger, ERM still remains a strong solution in most applications due to its simplicity.
For example, \citet{DomainBed} claimed that current algorithms do not significantly outperform ERM in OOD generalization.
Hence, how to develop a general, simple, and effective improvement to ERM for better generalization remains a major challenge.

    %     % SWA~\citep{SWA} demonstrated that a more flat local minima benefits model generalization through better convergence, while ERM-motivated model convergence is suspended at the tipping point of flat landscape without entering the optimal point.
    % Thus, the generalization of ERM is inadequate, especially under cases with outliers, \textit{i.e.}, dense classification, domain shift and adversarial attacking.
    % In such scenarios, the existence of outliers improves the uncertainty between training and testing distributions which differs the landscapes enormously~\citep{SWAD}.
    % Hence, ERM shows unsatisfying generalization performance.

% P3: ERM issue->Loss Landscape->our solution
% We first analyze the possible limitations of ERM against generalization from the perspective of the flatness of loss landscape.
To achieve better improvement with exploiting failure attributes of ERM, SWA~\citep{SWA} proved that more flat local minima could benefit model generalization via better convergence, while ERM-motivated model convergence is suspended at the tipping point of a flat landscape without entering the optimal point.
Thus, the generalization of ERM is inadequate, especially in scenarios involving outliers, \textit{i.e.}, dense classification, domain shift, and adversarial attacking.
In these scenarios, the existence of outliers leads to increased uncertainty and differs the landscapes enormously between training and testing distributions~\citep{SWAD}, resulting in unsatisfactory generalization performance of ERM.
% To this end,     

% For better generalization, existing efforts often add versatile modules to ERM or design modules for specific settings to improve model generalization.
% On the one hand, general techniques add to ERM with regularization (e.g., $\ell_1$ or $\ell_2$), dropout~\citep{DropOut}, data augmentation~\citep{RandAug}, feature normalization~\citep{BatchNorm}, ensemble learning~\citep{AdaBoost}, etc., which are broadly effective in different tasks.
% On the other hand, more researchers develop specific modules to enhance generalization under certain scenarios.
% For instance, invariant risk minimization (IRM)~\citep{IRM} attempted to improve OOD generalization by learning causal features w.r.t. class labels, adversarial training~\citep{madry} improves adversarial robustness, and adding weight balance \citep{LT_SSP} achieves better long-tailed recognition.

% Although many approaches have achieved impressive performance, it is important to acknowledge the inherent complexity associated with them. 
% As the training data and network architectures grow larger, ERM continues to be a widely adopted solution in numerous applications due to its simplicity.
% What' s more, Gulrajani \textit{et al.}~\citep{DomainBed} stated that current algorithms do not exhibit significant improvements over ERM in OOD generalization. 
% Consequently, the challenge lies in developing a general, simple, and effective enhancement to ERM that can facilitate better generalization, which remains a major challenge in the field.

In this paper, we devise a frustratingly easy paradigm called \textbf{Du}mmy \textbf{R}isk \textbf{M}inimization (\method, \figurename~\ref{fig-durm}) to improve ERM's generalization ability.
% Concretely, inspired by the core idea of instance contrastive learning~\citep{InfoNCE}, we treat all the latent outliers as the the same implicit gradient punishment, despite its entity like samples or features.
Concretely, \emph{\method enlarges the dimension of output logits for better generalization}, which is inspired by the discussion~\citep{wideMLP} about the number of hidden nodes in deep networks~\citep{wideResNet,Inception,Efficient}, where better generalization is achieved by enhancing the width of intermediate layers.
\method is extremely easy to implement: just adding additional dimensions (which we call \emph{dummy class}
% \footnote{This paper focuses on classification only while leaving the exploration on other tasks as future work.}
) to the output logits.
For instance, we design \emph{$12$} neurons instead of $10$ in the output layer of the neural network for CIFAR-10~\citep{CIFAR} classification.
Different from adding width for intermediate layers which magnifies parameter scale to better capture latent information, expanding logits provides implicit supervision for existing classes, thus facilitating model optimization.
This can also be understood as increasing the degree of freedom to the classifier.
Theoretically, we first prove that dummy classes only provide additional gradient risk and \emph{no} samples can be classified to them. 
% \sj{if no sample is classified to dummy class, is it suitable to mention explicit guidance in the above paragraph?} Maybe 'explicit indication' is better? 
We then demonstrate that \method facilitates achieving a larger gradient variance during training.
Further, with such a gradient distribution, our theory shows that models are inclined to converge towards better flat local minima, which has been well studied~\citep{SAM, SWA, SWAD} as a more generalized model state.
Empirically, we first conduct extensive experiments to validate the efficacy of \method, where the tasks include conventional classification, semantic segmentation, OOD generalization, adversarial robustness and long-tailed recognition scenarios, indicating its effectiveness in enhancing generalization in diverse scenarios.
We then analyze the impact of dummy class numbers, original class numbers, training data size, and backbones.
Besides, we also empirically show that \method aids the model to converge towards more flat local minima than ERM and \method remains compatible with existing generalization techniques.
Note that we do not exploit any advanced training or new baselines throught the experiments, thus \method does not exhibit statet-of-the-art performance.
The goal of this paper is not about achieving SOTA on these tasks, but fostering new interest in the fundamental risk minimization research.

\begin{figure}[t!]
    \centering
    \includegraphics[width=.8\textwidth]{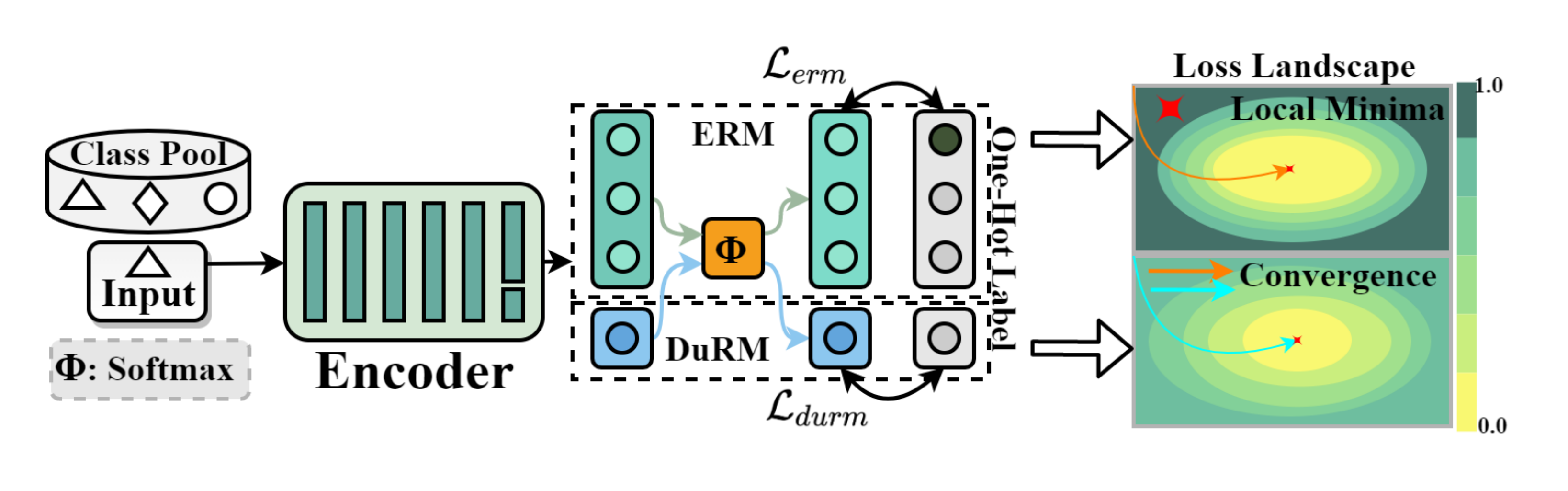}
    \vspace{-.2in}
    \caption{Illustration of \method, which adds dummy classes to the output layer while keeping the ground-truth labels unchanged. We show that \method makes model converge toward a more flat local minima than ERM theoretically (Sec.~\ref{sec-theory}) and empirically (Sec.~\ref{sec-exp}).}
    \label{fig-durm}
    \vspace{-.1in}
\end{figure}

% \begin{figure}[t!]
%     \centering
%     \vspace{-.2in}
%     \subfigure[Illustration of \method]{
%     \begin{minipage}[t]{0.7\textwidth}
%     \centering
%     \includegraphics[width=1\textwidth]{}
%     \label{fig-durm}
%     \end{minipage}%
%     }%
%     \subfigure[Performance]{
%     \begin{minipage}[t]{0.3\linewidth}
%     \centering
%     \includegraphics[width=0.8\textwidth]{}
%     \label{fig-perfradar}
%     \end{minipage}%
%     }%
%     \vspace{-.1in}
%     \caption{(a) Illustration of \method, which adds dummy classes to the output layer while keeping the ground-truth labels unchanged. We show that \method makes model converge toward a more flat local minima than ERM theoretically (Sec.~\ref{sec-theory}) and empirically (Sec.~\ref{sec-exp}). (b) The average performance of \method and ERM on five tasks, showing that \method consistently outperforms ERM.}
%     \label{fig:pipeline}
%     \vspace{-.2in}
% \end{figure}

\textbf{Contributions.} Our contributions are three-fold.
1) We devise \method, a frustratingly easy and effective technique for model generalization that is almost free lunch.
2) We present a detailed theoretical analysis of \method to guarantee its effectiveness.
3) We conduct extensive experiments to validate its performance in five diverse classification scenarios with comprehensive ablation studies.
% is exploiting a frustratingly easy paradigm DuRM and provide theoretical analysis. More concretely, we show that DuRM could improve classification generalization on probability based models. 

\section{Dummy Risk Minimization}
\label{sec-theory}
% \textbf{Overview.} In this section, we will provide theoretical guarantee to devised DuRM. Concretely, we first prove that the DuRM influences model training in providing certain gradient punishment. Specifically, DuRM aids to obtain a gradient distribution along training iteration with a greater variance. With such a gradient distribution, the model is inclined to converge to a more flat local minima, which has been well studied as a more generalized state for deep models.

\subsection{Problem Formulation}
In standard supervised classification, we are given a labelled dataset $\mathcal{D}=\{\mathbf{x}_i, y_i\}_{i=1}^{N}$, in which $\mathbf{x} \in \mathbb{R}^{d}$ is the $d$-dimensional input and $y \in \{1, \ldots, C\}$ is the output, where $C$ denotes the number of classes.
% Empirical risk minimization (ERM)~\citep{ERM} aims at learning a classifier $h_{erm}: \mathbf{x}\mapsto \mathbb{R}^C$ to achieve minimal risk on $\mathcal{D}$.
To learn the map between input and output, Empirical risk minimization (ERM)~\citep{ERM} is generally employed to achieve minimal risk on $\mathcal{D}$ by learning a classifier $h_{erm}: \mathbf{x}\mapsto \mathbb{R}^C$.
Specifically, $h$ maps the original inputs to \emph{logits} $\mathbf{z}\in \mathbb{R}^{C}$ that can be further transformed into classification probability, in which each element indicates the model confidence of the corresponding category.
The learning objective of ERM is formulated as
\begin{equation}
    h^* = \argmin_{h_{erm}\in \mathcal{H}}\frac{1}{N} \sum_{i=1}^{N} \ell(h_{erm}(\mathbf{x}_i);\mathbf{y}_i),
    \label{ERM}
\end{equation}
where $\ell(\cdot, \cdot)$ is the classification loss such as cross entropy and $\mathbf{y}_i\in\mathbb{N}^{C}$ is the one-hot vector.

\begin{definition}[Dummy risk minimization (\method)]
\method is a frustratingly easy extension of ERM to improve its generalization ability. The core of \method is the newly added $C_{d}$ \emph{dummy classes} on the original $C$ classes, i.e., \method solves a $C$-class classification problem using a $(C+C_d)$-class classification instantiation:
\begin{equation}
    h^* = \argmin_{\substack{h_{erm} \in \mathcal{H}, \\h_{durm}\in \mathcal{H}}} \frac{1}{N} \sum_{i=1}^{N}\left \{   \ell[\mathit{cat } {\color{red} (} h_{erm}(\mathbf{x}_i),h_{erm}(\mathbf{x}_i){\color{red} )} ,\mathit{cat}{\color{green} (}  \mathbf{y}_i,\mathbf{0}{\color{green} )} ] \right\},
    \label{DuRM}
\end{equation}

where $h_{durm}(\mathbf{x}_i)\in\mathbb{R}^{C_d}$ is the output logits of \method and $\mathbf{0}$ denotes the label vector with the same dimension as $h_{durm}(\mathbf{x}_i)$, indicating that there is no supervision information for dummy classes.
% \footnote{The representation of Eq.~(\ref{DuRM}) has been approximated as Normalized Weighted Geometric Mean\citep{NWGM}: $\widehat{\mathbb{E}}[\text{softmax}(h_{pred}(\mathbf{x}))]\triangleq \text{softmax}(\widehat{\mathbb{E}}[h_{pred}(\mathbf{x})])  
%         \Rightarrow \widehat{\mathbb{E}}[h_{pred}(\textbf{x})]=\text{concat}(\textbf{W}_{erm}\cdot\mathbf{m},\textbf{W}_{durm}\cdot\mathbf{m})$, where $\mathbf{m}$ is the output feature from penultimate layer and $\mathbf{W}$ is the weight for the last linear layer. }
\end{definition}

\figurename~\ref{fig-durm} illustrates the main idea of \method.
\method degenerates to the original ERM when $C_d=0$.

\subsection{\method as Gradient Regularization}
\label{sec:grad}

In this section, we theoretically analyze the impact of \method on the gradient before analyzing its contribution to generalization ability.

\paragraph{No samples are classified into dummy classes in \method}
First, we show that \emph{no} samples can be classified as dummy classes.
Inspired by gradient decoupling in loss computing~\citep{LearnedClassifier, decoupleContrast}, the gradient for one neuron in softmax with class $k \in [1, C+C_d]$ after training for one epoch is formulated as
\begin{equation}
     -g_{k} = \sum_{N}\frac{\partial \ell }{\partial \mathbf{z}_{k}}=\overbrace{ \sum^{N_{\bar{k}}}_{i=1,c_i\ne k}\mathbf{p}^{(i)}_{k}}^{\text{push term}}+\overbrace{\sum^{N_{k}}_{j=1,c_j=k}(\mathbf{p}^{(j)}_{k}-1)}^{\text{pull term}}=\mathcal{F}_{\text{push}}(\mathbf{z}_{k})+\mathcal{F}_{\text{pull}}(\mathbf{z}_{k}),
     \label{ce_grad}
\end{equation}
where $\mathbf{z}$ denotes logit and $\mathbf{p}$ is the probability vector, with subscripts $c$ and $k$ as their indices.
$N_{k}$ and $N_{\bar{k}}$ are the number of samples belonging to and not belonging to class $k$, respectively.

Note that Eq.~(\ref{ce_grad}) are divided into two terms: the \emph{push} term $\mathcal{F}_{\text{push}}(\mathbf{z}_{k})$ and the \emph{pull} term $\mathcal{F}_{\text{pull}}(\mathbf{z}_{k})$.
Since the gradient is taken w.r.t. the logits of class $k$, the pull term $\mathcal{F}_{\text{pull}}(\mathbf{z}_{k})$ will pull the current samples toward class $k$, while the push term $\mathcal{F}_{\text{push}}(\mathbf{z}_{k})$ is pushing samples away from class $k$.
Because no sample should be classified into dummy classes, no $\mathcal{F}_{\text{pull}}$ will be implemented to pull samples towards the dummy class, while each sample is being pushed away from the dummy class.
Therefore, no samples are classified as dummy classes in \method.
By now, the influence of the dummy class is only on the push term $\mathcal{F}_{\text{push}}(\mathbf{z}_{c})$ in loss computing.

However, the push term in Eq.~(\ref{ce_grad}) is difficult to quantize due to the unknown status of the deep model prediction.
To better analyze the influence of \method on model generalization, we turn to modelling the influence on the gradient.
% Specifically, we derive that \method mainly works on deploying influence of $\mathcal{F}_{\text{push}}(\mathbf{z}_c)$ and motivates gradient distribution along \textbf{training iteration} to be with greater variance. 

\paragraph{\method aids to derive gradients with greater variance}
Without loss of generality, we assume that during training, the probability distribution of prediction confidence to class $c$ is composed of two Gaussian distributions of samples belonging and not belonging to class $c$, formulated as
\begin{equation}
    P(\mathbf{p}_c)=\alpha \cdot \mathcal{N}(\mu_{c_n},\sigma_{c_n}^2)+(1-\alpha) \cdot \mathcal{N}(\mu_{c_p},\sigma_{c_p}^2),\text{where} \lim_{N_{c_n} \to \infty}\mu_{c_n}=0^+, \lim_{N_{c_p} \to \infty}\mu_{c_p}=1^-,
    \label{pc_dist}
\end{equation}
where $\alpha$ is the coefficient, $\mu$ and $\sigma^2$ are the mean and variance of Gaussian distribution, respectively.

Subscripts $c_{p(ositive)}$ and $c_{n(egative)}$ indicate whether samples belong to class $c$ or not.
As $N \to \infty$, the mean probability of negative and positive samples are approaching to $0$ and $1$, respectively. 

Denote $g_c$ as the gradient trained with ERM for class $c$.
Then, combining Eq.~(\ref{pc_dist}) with Eq.~(\ref{ce_grad}), $g_c$ has the following probability distribution:
\begin{equation}
    P(g_c)=\alpha \cdot \mathcal{N}(-\mu_{c_n},\sigma_{c_n}^2)+(1-\alpha) \cdot \mathcal{N}(1-\mu_{c_p},\sigma_{c_p}^2).
    \label{gc_dist}
\end{equation}

Recall that Eq.~(\ref{ce_grad}) shows the model gradient will be influenced, so we model this influence as $g_d$ and derive an \method version of the gradient as $\widehat{g}_c\triangleq g_c+g_d$.
Intuitively, since there is no evidence showing that \method can increase or decrease the ERM gradient $g_c$, we can moderately assume $g_d\sim \mathcal{N}(0, \sigma_d^2)$.
Now, we have the following theorem:
\begin{theorem}[\method's influence on gradient]
    Denote $g_c$ and $\hat{g}_c$ as the gradient of ERM and \method on class $c$, respectively. $\mathbb{E}$ and $\mathbb{D}$ are the expectation and variance, respectively. Then, the equality of $\mathbb{E}(\widehat{g}_c)=\mathbb{E}(g_c)$ and inequality of $\mathbb{D}(\widehat{g}_c)\ge\mathbb{D}(g_c)$ hold.
    \label{thm: gradientRegular}
\end{theorem}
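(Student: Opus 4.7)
The plan is to reduce the theorem to two elementary facts about the decomposition $\widehat{g}_c = g_c + g_d$ together with the modelling assumption $g_d \sim \mathcal{N}(0, \sigma_d^2)$ introduced just above the theorem. So no new machinery is needed; the work is in unpacking the right probabilistic facts and justifying one independence-type assumption.

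First, I would establish the expectation identity. By linearity of expectation and the fact that $g_d$ is centred,
\begin{equation*}
\mathbb{E}(\widehat{g}_c) \;=\; \mathbb{E}(g_c + g_d) \;=\; \mathbb{E}(g_c) + \mathbb{E}(g_d) \;=\; \mathbb{E}(g_c) + 0 \;=\; \mathbb{E}(g_c),
\end{equation*}
which is the first claim. This step requires nothing beyond the stated mean-zero assumption on $g_d$.

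Second, for the variance inequality, I would use the standard identity
\begin{equation*}
\mathbb{D}(\widehat{g}_c) \;=\; \mathbb{D}(g_c) + \mathbb{D}(g_d) + 2\,\mathrm{Cov}(g_c, g_d).
\end{equation*}
Since $\mathbb{D}(g_d) = \sigma_d^2 \ge 0$, it suffices to argue $\mathrm{Cov}(g_c, g_d) = 0$, after which the conclusion $\mathbb{D}(\widehat{g}_c) = \mathbb{D}(g_c) + \sigma_d^2 \ge \mathbb{D}(g_c)$ is immediate.

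The main obstacle is precisely the justification that $g_c$ and $g_d$ are (at least) uncorrelated. Here I would invoke the analysis already carried out around Eq.~(\ref{ce_grad}): the gradient contribution of the dummy classes enters only through the push term $\mathcal{F}_{\text{push}}(\mathbf{z}_c)$ attached to the newly added output neurons, which carry no supervision signal (the label on dummy coordinates is $\mathbf{0}$ by definition of \method). Thus $g_d$ can be treated as an exogenous, label-independent perturbation of the ERM gradient on the original $C$ classes, motivating the independence assumption implicit in modelling $g_d \sim \mathcal{N}(0, \sigma_d^2)$ separately from $g_c$. I would state this explicitly as a working assumption (consistent with the ``moderately assume'' phrasing in the text) and then the covariance vanishes, closing the proof. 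If one prefers a weaker hypothesis, it is enough to assume $\mathrm{Cov}(g_c, g_d) \ge 0$, which still yields $\mathbb{D}(\widehat{g}_c) \ge \mathbb{D}(g_c)$ and is arguably more defensible than full independence.
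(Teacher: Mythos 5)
Your proposal is correct and takes essentially the same route as the paper: the identical decomposition $\mathbb{D}(\widehat{g}_c)=\mathbb{D}(g_c)+\mathbb{D}(g_d)+2\,\mathrm{Cov}(g_c,g_d)$, the same identification of the covariance term as the crux, and the same conclusion via $\mathbb{E}(g_d)=0$ and $\mathbb{D}(g_d)=\sigma_d^2\ge 0$. The only divergence is that the paper attempts to \emph{derive} $\mathrm{Cov}(g_c,g_d)=0$ through a product-of-Gaussian-densities computation (which is really an independence assumption in disguise), whereas you state uncorrelatedness of $g_c$ and $g_d$ as an explicit working hypothesis --- a more transparent, and arguably more defensible, presentation of the same step.
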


\begin{proof}(informal; formal proof is in Appendix~\ref{proof:thm1})
    Let us divide $\mathbb{D}(\widehat{g}_c)$ into three terms: $\mathbb{D}(g_c)$, $\mathbb{D}(g_d)$ and a covariance item.
    Since $\mathbb{D}(g_c)$ is composed of two independent sub-Gaussian distributions with zero mean, the covariance between two sub-Gaussian with $g_d$ is derived as $0$.
    Thus, we have $\mathbb{D}({\widehat{g_c}})-\mathbb{D}({g_c})=\mathbb{D}({g_d})\ge0$. 
    And the expectation equality holds since $\mathbb{E}(g_d)=0$. 
\end{proof}

\paragraph{Empirical observation on the prediction variance}
We implement \method under a toy setting to analyze the sample variance.
As depicted in Fig.~\ref{fig:grad_var}, a \method solution is deployed to compare with ERM. Starting with the curve content, \method achieves a greater gradient variance for each category. Moreover, the dashed line marked the epoch when the best-validated model emerges, and \method outperforms ERM on accuracy for all categories, which is not exhibited in the figure. The toy experiment provides an empirical guarantee to the above derived theoretical results and aids us to further exploit how \method works. To control the results, we deploy a deliberate failure case of \method in Appendix~\ref{app:fail}  for a more comprehensive analysis.

\begin{figure}
    \centering
    \includegraphics[width=.9\linewidth]{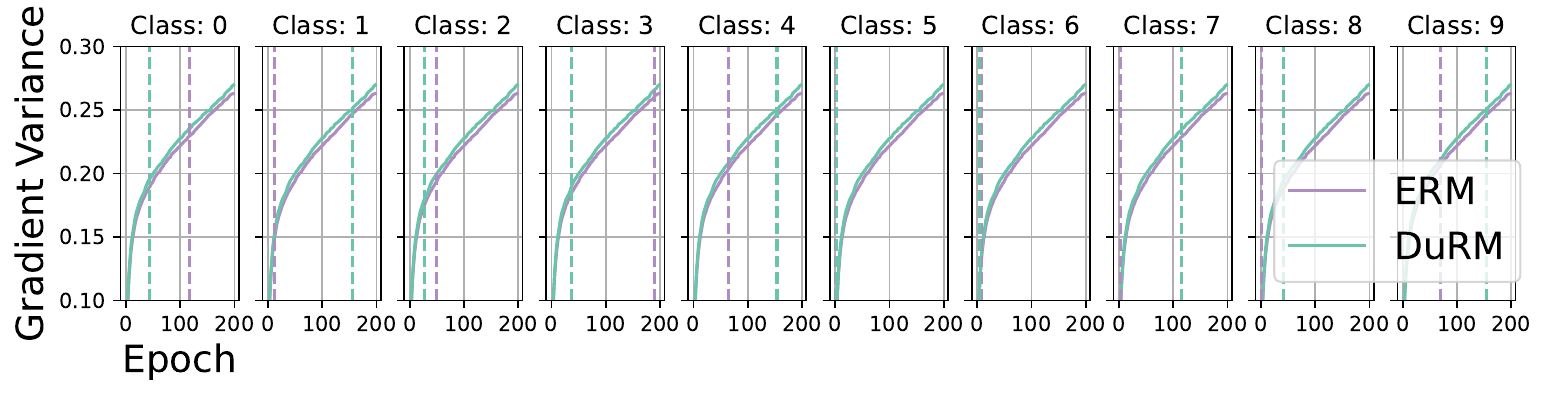}
    \vspace{-.1in}
    \caption{The convergence curve of training a ResNet-18 on CIFAR-10 with ERM and \method. Two methods are trained by 200 epochs, where we record the gradient of cross-entropy loss to logits of each category in an epoch. The dashed lines denote the epoch where the best-validated model emerged. Moreover, there are mean accuracy of $84.64\%$ and $84.76\%$ for ERM with DuRM, respectively. What has not been exhibited is that the class-wise accuracy marked in dashed lines for \method outperforms ERM, which further shows the superiority of \method. Best viewed in colour and zoomed. }
    \label{fig:grad_var}
    \vspace{-.2in}
\end{figure}

% \begin{figure}[t]
% \centering
% \subfigure[Successful case of \method]{
% \begin{minipage}[t]{0.5\linewidth}
% \centering
% \includegraphics[width=1.0\linewidth]{images/success.pdf}
% \end{minipage}%
% }%
% \subfigure[Failure case of \method]{
% \begin{minipage}[t]{0.5\linewidth}
% \centering
% \includegraphics[width=1.0\linewidth]{}
% \end{minipage}%
% }%
% \centering
% \caption{The convergence curve of training a ResNet-18 on CIFAR-10 with ERM and two solutions of \method, with (a) one successful case and (b) one failure case. All methods are trained with 200 epochs, in which we record the gradient of cross entropy loss to logits of each category in an epoch. The dashed lines denote the epoch where the best validated model emergence. What's more, there are mean accuracy of $84.64\%$, $84.76\%$ and $84.54\%$ for ERM, DuRM-Success and DuRM-Failure respectively. Best view in color and zoomed. }
% \label{fig:grad_var}
% \end{figure}

\subsection{Generalization Analysis of \method}

The previous section shows that \method derives greater variance on gradients during training.
In this section, we analyze the generalization of \method by showing that the greater gradient variance brought by \method facilitates model convergence to local minima.

\paragraph{Greater gradient variances facilitate convergence to flat local minima}
To measure the flatness of a local minima, we first apply a two-order Taylor expansion to the loss function when gradient descent is on the local minima:
\begin{equation}
    \label{Taylor}
    \ell (\mathbf{w}+\mathbf{v})-\ell (\mathbf{w})=\mathbf{v}^{\small{\text{T}}} \mathbf{J}(\mathbf{w})+\frac{1}{2} \mathbf{v}^{\small{\text{T}}}\mathbf{H}(\mathbf{w})\mathbf{v}+o(\left \| \mathbf{v} \right \|^2 ),
\end{equation}
where $\mathbf{w}$ is the model weights on the local minima, $\mathbf{v}$ is a weight step implemented upon $\mathbf{w}$ to escape the local minima.
$\mathbf{J}(\mathbf{w})$ and $\mathbf{H}(\mathbf{w})$ are Jacobian and Hessian matrix, respectively.
Then, the convergence state of a model can be recognized as the stability under the sense of Lyapunov~\citep{Lyapunov}:
\begin{definition}[Stability under the Sense of Lyapunov~\citep{Lyapunov}]
    \label{Lyapunov}
    The equilibrium point $\mathbf{w}$ at the original time is stable under the sense of Lyapunov, \textit{i.i.f.}:
    \begin{equation}
        \forall \mathbf{v}, \operatorname{ s.t. } \left \| \mathbf{v}\right \| \le\delta(\mathbf{v})\Rightarrow \left \| \ell(\mathbf{w}+\mathbf{v})- \ell(\mathbf{w})\right \|\le\varepsilon,
    \end{equation}
    where $\delta(\cdot)$ limits the upper bound of disturbance to $\mathbf{v}$ as a small scale and $\varepsilon$ is defined as the \emph{stability}.
\end{definition}

In \method, when $\mathbf{w}$ is the local minima, the model can then escape the local minima with a step disturbance of $\mathbf{v}$:
\begin{equation}
    \label{local}
    \varepsilon \le \ell (\mathbf{w}+\mathbf{v})- \ell(\mathbf{w}), \operatorname{ s.t. } \ell(\mathbf{w}+\mathbf{v}) > \ell(\mathbf{w}).
\end{equation}
To this end, we are able to give the definition of flatness.
\begin{definition}[Flatness of Loss Landscape in Local Minima]
    Given a model parameter $\mathbf{w}$ in flat local minima, then, manually deploying a weight disturbance $\mathbf{v}$ to $\mathbf{w}$ makes $\mathbf{w}+\mathbf{v}$ escape the local minima and meet Eq.~\eqref{local}.
    The flatness of loss landscape in given local minima $\tau \propto \frac{1}{\varepsilon}$. 
    \label{def: flatness}
\end{definition}
Recall Eq.~\eqref{local}, since $\mathbf{w}$ is on the local minima, we have $\det{(\text{J})}=0$.
To better understand how \method influences the flatness of local minima $\varepsilon$, we need to scale up the inequality.
Then, suppose $\rho$ is the greatest eigenvalue to Hessian matrix.
We can loosen the upper bound of $\varepsilon$ as:
\begin{equation}
    \label{loosen}
    \varepsilon\le\frac{\rho \left \| \mathbf{v} \right \|^2 }{2} +o(\left \| \mathbf{v} \right \|^2)\approx \frac{\rho \left \| \mathbf{v} \right \|^2 }{2}.
\end{equation}

Hence, to achieve a more flat (greater $\tau$) local minima, there should be a smaller $\varepsilon$. Furthermore, achieving a tighter upper bound of $\varepsilon$ could be such a solution. Recall Thm.~\ref{thm: gradientRegular} that shows \method works as a gradient regularization by improving the gradient variance.
With respect to $\rho$, its corresponding eigenvector denotes the steepest direction, which is also the gradient direction.
Therefore, we clarify the correlation between g with $\rho$ in the following proposition, whose proof is in Appendix~\ref{proof:prop1}.

\begin{proposition}
    \label{hessian}
    Let $\mathbf{H}(\mathbf{w})$ be the Hessian matrix for the model with parameter $\mathbf{w}$. With the definition of eigenvector $\mathbf{s}$ and greatest eigenvalue $\rho$ of $\mathbf{H}(\mathbf{w})\mathbf{s}=\rho\mathbf{s}$, there lies a positive correlation of $\rho\propto g$. 
\end{proposition}

When the model converges to a local minimum, we can assume the step has the minimum gradient $g_{(1)}$ in $T$ steps.
To achieve a flat local minimum, there should be a tighter upper bound of $\varepsilon$ to achieve a better flatness $\tau$.
Then, according to Prop.~\ref{hessian} and the above assumption, a smaller $g_{(1)}$ could be the solution.
To parameterize this issue, we assume $g\sim \mathcal{N}(\mu_g, \sigma_g^2)$\footnote{Since Eq.~\eqref{gc_dist} demonstrates that the mean values of two sub-Gaussian distribution are aligned with each other, a mixed Gaussian distributions can be approximately degenerated to a Gaussian distribution.}.
Hence, comparing gradients $g$ for ERM and $\widehat{g}$ for our \method, the population mean along with variance have $\mu=\widehat{\mu}$ and $\sigma^2<\widehat{\sigma}^2$ respectively.
Given a fixed model initialization point, let ERM and \method optimize such a model with equal steps $T$ and we can sample $T$ gradient points.
We have the minimal order statistics among $T$ samples as $g_{(1)}$ and $\widehat{g}_{(1)}$, whose probability density function can be derived as:\
\begin{align}
    \label{pdf_g}
    f_{g_{(1)}}(g) & = T\left[1-F_g(g)\right]^{T-1}\cdot f_g(g)\notag \\
    &  = T[1-\frac{1}{\sqrt{2\pi\sigma^2}}\int_{-\infty }^{g}\exp(-\frac{(g-\mu)^2}{2\sigma^2} )\mathrm{d}g  ]^{T-1}\cdot\frac{1}{\sqrt{2\pi\sigma^2}}\cdot\exp(-\frac{(g-\mu)^2}{2\sigma^2}),
\end{align}
where $F$ is the distribution function. Similar formulation goes to $f_{\widehat{g}_{(1)}}(g)$ by replacing $\mu, \sigma$ with $\widehat{\mu}, \widehat{\sigma}$.
Then, we derive the following theorem to show how that \method obtains a smaller $g_{(1)}$.
\begin{theorem}[\method obtains better local minima]
    \label{thm: p_grad}
    Given gradients $g\sim \mathcal{N}(\mu_1, \sigma_1^2) $ and $\widehat{g}\sim \mathcal{N}(\mu_2, \sigma_2^2) $, where $\mu_1=\mu_2$ and $\sigma_1<\sigma_2$. Assume gradient descent has $T$ steps. The inequality between minimum order statistics of empirical gradients holds with a high probability of $P(g_{(1)}\ge\widehat{g}_{(1)})=\frac{1}{2}+T\int [1-F_g(\widehat{g})]^T-[1-F_{\widehat{g}}(\widehat{g})]^T\cdot (1-F_{\widehat{g}_{(1)}}) ^{T-1}\mathrm{d}F_{\widehat{g}_{(1)}} $, such a probability owns a numerical solution of $0.5+\Theta(\frac{\sigma_1}{\sigma_2})$, where $\Theta(\frac{\sigma_1}{\sigma_2})\ge0, \forall \sigma_1<\sigma_2\in \mathbb{R}$.   
\end{theorem}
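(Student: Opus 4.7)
The plan is to reduce $P(g_{(1)} \ge \widehat{g}_{(1)})$ to a single integral via the order-statistic density in Eq.~(\ref{pdf_g}), isolate a symmetric $1/2$ baseline, and then control the sign and magnitude of the residual via a Gaussian tail comparison that reduces to a one-parameter function of $r=\sigma_1/\sigma_2$. Since the $T$ gradient steps underlying $g_{(1)}$ and the $T$ steps underlying $\widehat{g}_{(1)}$ come from two independent training runs, I would first write
\begin{equation*}
P(g_{(1)} \ge \widehat{g}_{(1)}) = \int P(g_{(1)} \ge y)\, f_{\widehat{g}_{(1)}}(y)\, dy = \int [1-F_g(y)]^T\, f_{\widehat{g}_{(1)}}(y)\, dy,
\end{equation*}
using the standard identity $P(\min_i X_i \ge y)=[1-F_X(y)]^T$.

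Next I would isolate the baseline by adding and subtracting $[1-F_{\widehat{g}}(y)]^T$ inside the integrand. The resulting term $\int [1-F_{\widehat{g}}(y)]^T f_{\widehat{g}_{(1)}}(y)\, dy$ equals $P(\widehat{g}_{(1)}' \ge \widehat{g}_{(1)})$ for an independent copy $\widehat{g}_{(1)}'$ of $\widehat{g}_{(1)}$, which is exactly $\tfrac{1}{2}$ by the symmetry of iid continuous samples. This yields
\begin{equation*}
P(g_{(1)} \ge \widehat{g}_{(1)}) = \tfrac{1}{2} + \int \left([1-F_g(y)]^T - [1-F_{\widehat{g}}(y)]^T\right) f_{\widehat{g}_{(1)}}(y)\, dy,
\end{equation*}
which matches the formula stated in the theorem after plugging in the explicit form of $f_{\widehat{g}_{(1)}}$ from Eq.~(\ref{pdf_g}).

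For the sign, I would invoke the classical Gaussian comparison: with common mean and $\sigma_1<\sigma_2$, the two CDFs cross only at $y=\mu$, with $F_{\widehat{g}}(y) > F_g(y)$ below $\mu$ (higher variance means heavier lower tail) and the inequality reversed above. Hence $[1-F_g(y)]^T - [1-F_{\widehat{g}}(y)]^T \ge 0$ on $\{y\le\mu\}$, which is precisely where $f_{\widehat{g}_{(1)}}$ concentrates, since extreme-value theory gives $\widehat{g}_{(1)} \approx \mu - \sigma_2\sqrt{2\log T}$ for large $T$. The contribution from $\{y>\mu\}$ is negligible in comparison, so the residual is non-negative and strictly positive whenever $\sigma_1<\sigma_2$. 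To reach the quantitative form $\Theta(\sigma_1/\sigma_2)$, I would rescale via $u=(y-\mu)/\sigma_2$ so that both tails reduce to $1-\Phi(\cdot)$ terms and the correction becomes a function of $r$ alone, vanishing at $r=1$ and increasing as $r \to 0$.

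The main obstacle is producing a clean closed form for $\Theta(\sigma_1/\sigma_2)$: the bracket mixes two Gaussian tails raised to the $T$th power, so direct integration is out of reach. I expect to fall back on Mills-ratio asymptotics $1-\Phi(z)\sim \phi(z)/z$ for large $|z|$ together with a Laplace-type expansion of $f_{\widehat{g}_{(1)}}$ around its mode to extract the announced rate, while the weaker qualitative conclusion $P(g_{(1)}\ge\widehat{g}_{(1)}) \ge \tfrac{1}{2}$ already follows from the pointwise sign argument alone.
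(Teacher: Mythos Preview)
Your proposal is correct and follows essentially the same route as the paper: condition on $\widehat{g}_{(1)}$ to obtain $\int[1-F_g(y)]^T f_{\widehat g_{(1)}}(y)\,dy$, split off the symmetric baseline $\tfrac{1}{2}$ by adding and subtracting $[1-F_{\widehat g}(y)]^T$, and then argue the residual is non-negative via the single-crossing of the two Gaussian CDFs at $\mu$ combined with the left-tail concentration of $f_{\widehat g_{(1)}}$. The only divergence is at the final step: the paper does not pursue your Mills-ratio/Laplace asymptotics for $\Theta(\sigma_1/\sigma_2)$ but instead declares the integral analytically intractable and establishes positivity by plotting $\Delta F$ and reporting a numerical solution.
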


Proof can be found in Appendix~\ref{proof:thm2}.
Then, considering the positive correlation between $\rho$ with $g_{(1)}$, the model with greater gradient variance is inclined to derive a smaller $\rho$, in which the $\varepsilon$ is with a tighter upper bound and a better flatness $\tau$ is achieved. Then, \citep{SAM, SWA, SWAD} have shown theoretical or empirical result in demonstrating that a more flat local minima is a more generalized model state.

% \begin{theorem}
%     \label{biased}
%     Let $g\sim \mathcal{G}$ be the gradient distribution with respect to convergence time step. Suppose $\mathbf{g}=\{g_1, g_2, \dots, g_t\}$ be a sampled sequence of $g$ in SGD, and let $\bar{g} = \frac{1}{t}\sum_{i=1}^t g_i$ be their average. The sample variance of the gradients with respect to time $s_t^2$ is a biased estimation for the variance of the gradients, i.e., $\text{Var}(g)$, where $\text{Var}(g) = \mathbb{E}[(g - \mathbb{E}[g])^2]$ is the population variance of the gradients.
% \end{theorem}

% \begin{proof}
%     (unofficial version) We firstly expand the expectation to sample variance. Since the gradients are not independent identically distributed along time steps, the existence of covariance among gradients incurs $\mathbb{E}(s^2_t)\ne q s^2_t$. Please refer to supplementary to see official proof.
% \end{proof}
\paragraph{Empirical evidence to flat local minima}
Inspired by \citep{DpforUf}, we empirically show that dummy class could converge to a flat local minima.
First, we measure the gradient norm during training, denoted as $\left \| \mathbf{w} \right \|^2 $.
Then, we compute the model distance between time step $t$ and the initialized weight as $\left \| \mathbf{w}_t-\mathbf{w}_0 \right \|^2 $.
As shown in \figurename~\ref{fig:gradnorm}, the model distances vary in the same path between ERM and \method.
However, \method has a greater gradient norm than ERM, indicating that \method has more flatness.
% \footnote{For better explanation, let us take an example in the process of downhill.
% Two men running in the same race, in which the predefined checking altitudes reflect the schedule (model distance in \figurename~\ref{fig:gradnorm}). With stepping towards the same check point, the man via a longer distance (gradient norm) is in a more flat downhill manner.}

\begin{figure}[t!]
\centering
\subfigure[Model distance]{
\begin{minipage}[t]{0.25\linewidth}
\centering
\includegraphics[width=1\textwidth]{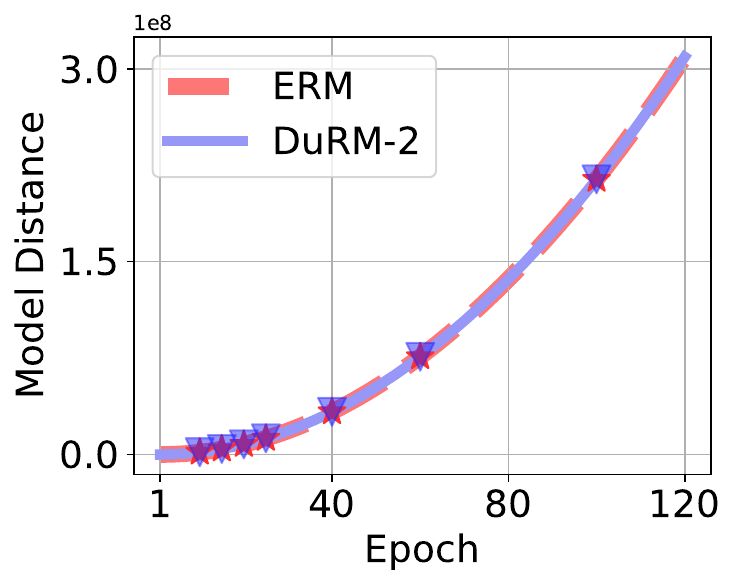}
\end{minipage}%
}
\subfigure[Gradient norm]{
\begin{minipage}[t]{0.25\linewidth}
\centering
\includegraphics[width=1\textwidth]{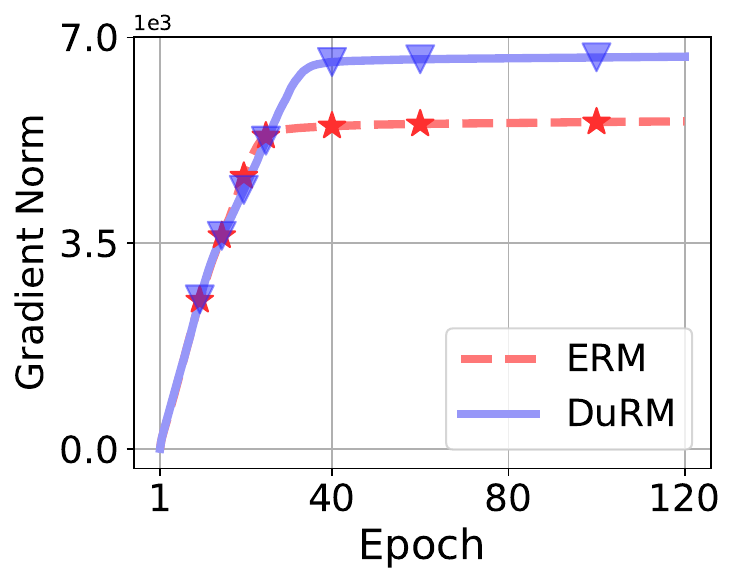}
\end{minipage}%
}%
\subfigure[Illustration of flat path]{
\begin{minipage}[t]{0.32\linewidth}
\centering
\includegraphics[width=1.0\linewidth]{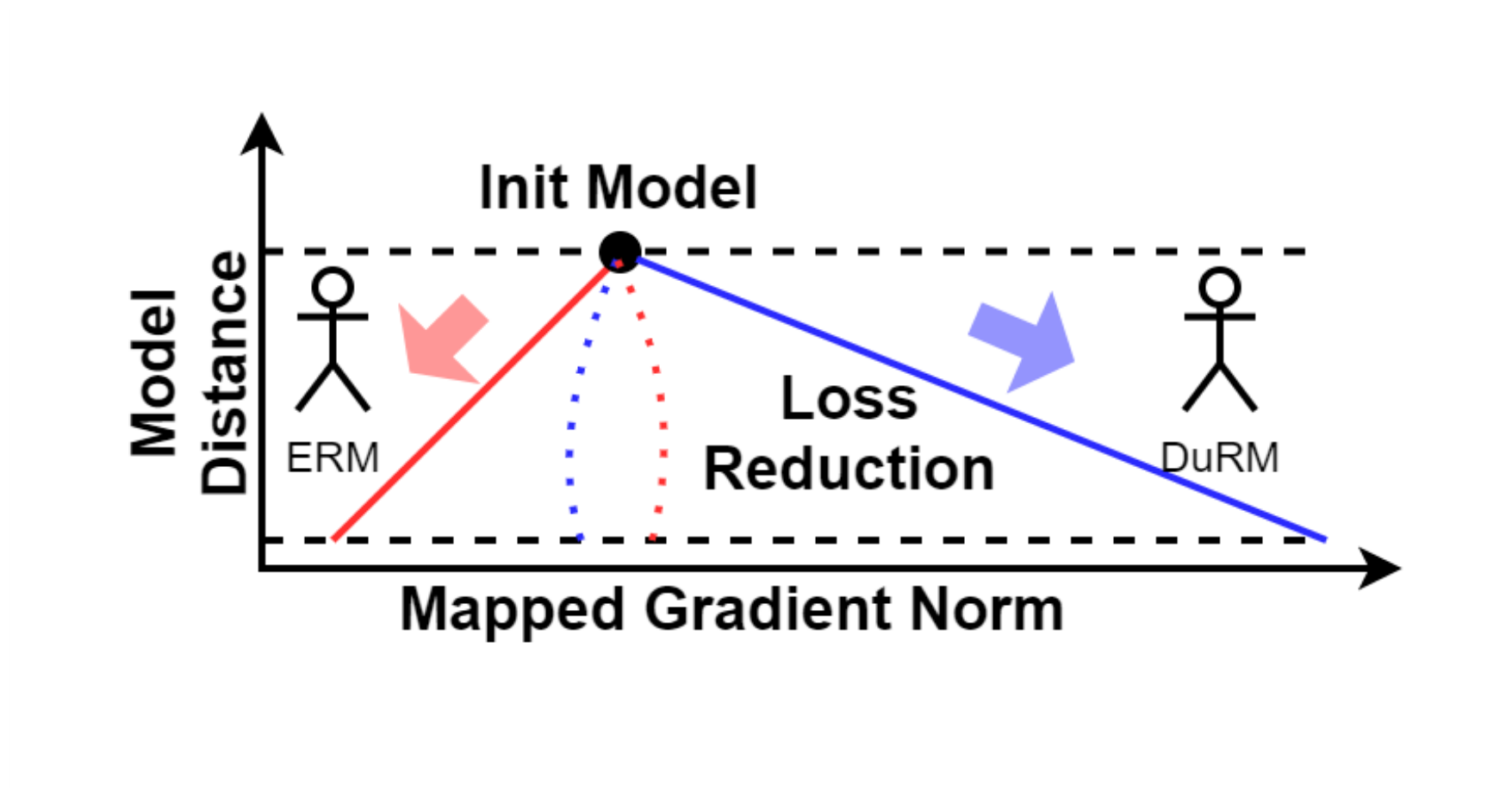}
\end{minipage}%
}%
\centering
\vspace{-.1in}
\caption{(a) The curve change of model distance during training. (b) The curve change of cumulative gradient norm along training epoch. (c) An illustration on convergence with a more flat path.}
% \vspace{-.1in}
\label{fig:gradnorm}
\end{figure}

% \begin{minipage}{0.65\textwidth}
% \includegraphics[width=\linewidth]{images/grad norm.pdf}
% \captionof{figure}{}
% \label{fig:gradnorm}
% \end{minipage}
% \hfill
% \begin{minipage}{0.35\textwidth}
% As shown in Fig.~\ref{fig:gradnorm}, the model distances are varying in the same path between ERM with \method. However, the \method model has a greater gradient norm than ERM. For better claim, let us take an example in the process of downhill. Two men running in the same race, in which the predefined checking altitudes reflect the schedule (model distance in Fig.~\ref{fig:gradnorm}). With stepping towards the same check point, the man via a more pace (gradient norm) is in a more flat downhill manner.
% \end{minipage}

% \begin{figure}[h]
%     \centering
%     \includegraphics[width=0.9\textwidth]{}
%     \caption{(a) Model distance. (b) Gradient norm. (c) A toy example on the problem: Given two paths with the same displacement, the path with longer route is more flat.}
%     \label{fig:gradnorm}
% \end{figure}

\section{Experiments}
\label{sec-exp}

In this section, we conduct extensive experiments to evaluate \method on diverse tasks: conventional sample-wise classification, semantic segmentation, out-of-distribution generalization, adversarial robustness, and long-tailed recognition.
Then, we present thorough synopsis to explore why and how \method works.
All experiments are repeated three times with different seeds to control randomness.
% without specialized notification.

\subsection{Main Results}

% Before introducing the results, we want to claim that \method is an improved version of ERM and is orthogonal to both general and specific generalization techniques.
% Thus, we will mainly focus on evaluating \method over ERM in different settings by leveraging 

% Except for experiments on ImageNet-1K, which are implemented on four NVIDIA A100 (80Gb RAM), the rest of experiments are implemented on one NVIDIA 3090 (24Gb RAM).
% Each experiment is repeated with three trials using different random seeds.
% See Appendix~\ref{} for more details about experimental settings.

\paragraph{Conventional Sample-wise Classification}

\begin{table}
\centering
\caption{Accuracy on different classification datasets.}
% \vspace{-.1in}
\label{tb-classification}
\resizebox{.8\textwidth}{!}{
\begin{tabular}{cccc|ccccc}
\toprule
Dataset     & Model       & ERM   &    \method                 & & Dataset      & Model       & ERM   &   \method                  \\ \midrule
ImageNet-1K & ResNet18~     & 69.74\tiny{$\pm$0.09}          & \textbf{69.76}\tiny{$\pm$0.06} & & Oxford-Pet   & ResNet-18~     & 90.23\tiny{$\pm$0.19}          & \textbf{90.30}\tiny{$\pm$0.21} \\
CIFAR-10    & ResNet-18~    & 82.08\tiny{$\pm$0.34}          & \textbf{82.11}\tiny{$\pm$0.30} & & Oxford-Pet   & ResNext-50~  & 93.51\tiny{$\pm$0.16}          & \textbf{93.57}\tiny{$\pm$0.17} \\
CIFAR-10    & ViT-Tiny~      & 92.79\tiny{$\pm$0.67}          & \textbf{94.38}\tiny{$\pm$0.44} & & Oxford-Pet   & MLP-Mixer-B~  & \textbf{85.06}\tiny{$\pm$0.33} & 84.71\tiny{$\pm$ 0.47}          \\
CIFAR-10    & ResNet-50~    & \textbf{84.24}\tiny{$\pm$0.22} & 84.10\tiny{$\pm$0.57}          & & Flower-102   & ResNet-18~    & 85.07\tiny{$\pm$0.24}          & \textbf{85.08}\tiny{$\pm$0.19} \\
CIFAR-100   & ResNet-18~    & 58.42\tiny{$\pm$0.33}          & \textbf{58.50}\tiny{$\pm$0.38} & & Flower-102   & ResNet-101~   & 86.69\tiny{$\pm$0.23}          & \textbf{86.91}\tiny{$\pm$0.12} \\
CIFAR-100   & ResNet-50~    & 60.17\tiny{$\pm$2.06}          & \textbf{61.03}\tiny{$\pm$0.11} & & Flower-102   & ViT-Tiny~      & 87.33\tiny{$\pm$1.56}          & \textbf{88.71}\tiny{$\pm$1.17} \\
CIFAR-100   & Resnext-50~   & \textbf{65.02}\tiny{$\pm$0.35} & 64.44\tiny{$\pm$0.10}          & & THUCNews     & Transformer~   & 89.55\tiny{$\pm$0.46}          & \textbf{89.92}\tiny{$\pm$0.31} \\
STL-10      & Swin-Tiny~    & 95.55\tiny{$\pm$0.47}          & \textbf{96.33}\tiny{$\pm$0.14} & & UrbanSound8K & LSTM~         & 65.10\tiny{$\pm$0.08}          & \textbf{66.00}\tiny{$\pm$1.01}  \\
STL-10      & MobileViT-S~ & 94.32\tiny{$\pm$0.15}          & \textbf{95.03}\tiny{$\pm$0.53} & & STL-10       & ResNet-18~     & 85.20\tiny{$\pm$0.75}          & \textbf{86.21}\tiny{$\pm$0.73} \\ \bottomrule
\end{tabular}
}
\vspace{-.1in}
\end{table}

First of all, we apply \method to conventional (sample-wise) classification tasks.
We adopt CIFAR-10/100~\citep{CIFAR}, STL-10~\citep{STL10}, Oxford-Pet-35~\citep{Cats}, Flower-102~\citep{Flower}, ImageNet-1K~\citep{imagenet}, THUCNews-10~\citep{THUCNews}, and UrbanSound8K-10~\citep{Urbansound} datasets. 
Different backbones are used for different datasets spanning from ResNet~\citep{ResNet}, ResNext~\citep{Resnext} to Transformer~\citep{Transformer}, LSTM~\citep{LSTM} (ViT~\citep{ViT}), Swin~\citep{Swin}, Mobile ViT~\citep{MobileVit} and MLP-Mixer~\citep{MLP_Mixer}, providing a holistic evaluation across datasets and backbones.

\tablename~\ref{tb-classification} shows the Top-1 classification accuracy using \methodx{2} (i.e., adding $2$ dummy classes) without adding specific generalization techniques such as EMA and mixup~\citep{Mixup}.
Experiments in \tablename~\ref{tb-trick} show that \method can also work seamlessly with these techniques.
Results of using other dummy classes are presented in Appendix~\ref{app:clas}.
It is noted that \method outperforms ERM in most cases ($39$ out of $52$, combining Appendix~\ref{app:clas}) and it reduces the variance of the results.

\paragraph{Semantic Segmentation}
Then, we apply \method to semantic segmentation tasks.
We leverage three popular benchmarks: CityScapes~\citep{CityScapes}, Pascal-VOC 2012~\citep{PascalVoc}, and LoveDA~\citep{LoveDA}.
\method is plugged into different segmentation algorithms: FCN-ResNet101~\citep{FCN}, Segformer B1 and B5~\citep{Segformer}, Deeplab-V3 with a ResNet101~\citep{Deeplab}, HRNet-W48~\citep{HRNet} and PSPNet coordinating with a backbone model of ResNet18~\citep{PSPNet}, UperNetSwin~\citep{UperNet}, and Segmentor~\citep{Segmenter}.
We borrow the codebase from MMSegmentaion~\citep{MMSeg} for implementation.\footnote{All datasets should be tested online limited submission trials, we did not use multi-trail but fixed the seed to promise reproduction. Moreover, the `-' in \tableautorefname~\ref{tab:Seg} are because the online tested results did not feed them back.}

\begin{wraptable}{r}{.5\textwidth}
\centering
\vspace{-.2in}
\caption{Results on LoveDA dataset.}
\vspace{.1in}
\label{tb-loveda}
\resizebox{.5\textwidth}{!}{
\begin{tabular}{|c|c|c|c|c|}
\toprule
Method      & UperNetSwin~    & +\method       & Segmentor~    & +\method       \\ \midrule
Background  & 44.75          & \textbf{45.65} & 43.93          & \textbf{45.27} \\ 
Building    & \textbf{57.47} & 55.85          & \textbf{59.03} & 58.79          \\ 
Road        & \textbf{56.40}  & 56.22          & 57.00             & \textbf{57.77} \\ 
Water       & \textbf{78.39} & 77.24          & 79.01          & \textbf{80.28} \\ 
Barren      & 12.39          & \textbf{16.65} & 16.69          & \textbf{18.36} \\ 
Forest      & 45.31          & \textbf{46.43} & 45.76          & \textbf{45.85} \\ 
Agriculture & 56.52          & \textbf{56.71} & 60.94          & \textbf{61.04} \\ \midrule
mIOU        & 50.18          & \textbf{50.68} & 51.76          & \textbf{52.48} \\ \bottomrule
\end{tabular}
}
\vspace{-.1in}
\end{wraptable}

\tablename~\ref{tab:Seg} shows the mIoU and mACC on CityScapes and Pascal VOC datasets using \methodx{1}.
Detailed class-wise results are presented in Appendix~\ref{app:seg}.
We see that \method can boost the performance of most segmentation algorithms. 
% Despite that the mIoU and mAcc on the validation set of Pascal Voc with Deeplab fails to make improvement, the more vital performance on the testing set achieves improvement, which further support the generalization capacity.
Moreover, LoveDA~\citep{LoveDA} is more challenging since it is composed of remote sensing images which are more complex.
It further contains a special class \textit{Background} filled with unannotated objects.
As shown in \tablename~\ref{tb-loveda}, \method achieves the largest improvement on \textit{Background} and \textit{Barren} (a hard category), when coordinating with ~\citep{UperNet} and ~\citep{Segmenter}.
The surprise is, when implementing Segmentor + \methodx{1} on LoveDA, it outputs samples in the dummy class to some unannotated pixels, indicating that that \method can facilitate the model to learn representations, serving beyond as a fine-grained classifier. 
We further show visualizations in Appendix~\ref{app:loveda} to better support our claim.

% Beyond two above mentioned mainstream benchmarks of segmentation, we also leverage a remote sensing segmentation benchmark called LoveDA, 

% In semantic segmentation, the input is an image sampled from real world and composed by different objects. Then, the deep model is required to output equal or smaller resolution and class-count equaling channels feature map, in which the softmax is leveraged in pixel-level to measure the probability of pixel belonging to each class. The semantic segmentation is fragile to be influenced by outliers or hard samples, such as scale shift issue and rare objects issue. To this end, we implement \method on two mainstream benchmarks to verify the generalization. The adopted benchmarks are CityScapes-19 and Pascal Voc-2012-22. Albeit that the modules to decode feature embedding to obtain classification head in these models are different, the core idea is always the pixel-wise softmax, which means the paradigm is still under our theoretical framework. As for the experiments, we borrow the code-base from MMSegmentaion and deploy all experiments on one NVIDIA A100 (80Gb RAM).  Furthermore, the adopted DuRM is DuRM-1, which shows strong generalization on semantic segmentation. 

\begin{table}[t!]
\centering
\caption{Results on CityScapes and Pascal VOC semantic segmentation datasets.}
\label{tab:Seg}
\resizebox{.8\textwidth}{!}{
\begin{tabular}{|l|cc|cc|l|cc|cc|}
\toprule
\multirow{2}{*}{CityScapes} &
  \multicolumn{2}{c|}{\textbf{mIoU} (\%)} &
  \multicolumn{2}{c|}{mACC (\%)} &
  \multirow{2}{*}{Pascal VOC} &
  \multicolumn{2}{c|}{\textbf{mIoU} (\%)} &
  \multicolumn{2}{c|}{mACC (\%)} \\ \cline{2-5} \cline{7-10} 
 &
  \multicolumn{1}{c|}{Val} &
  \textbf{Test} &
  \multicolumn{1}{c|}{Val} &
  \textbf{Test} &
   &
  \multicolumn{1}{c|}{Val} &
  \textbf{Test} &
  \multicolumn{1}{c|}{Val} &
  \textbf{Test} \\ \midrule
FCN-R101~ &
  \multicolumn{1}{c|}{64.05} &
  61.35 &
  \multicolumn{1}{c|}{70.97} &
  - &
  DLB-R101~&
  \multicolumn{1}{c|}{\textbf{63.72}} &
  61.55 &
  \multicolumn{1}{c|}{\textbf{74.17}} &
  - \\ 
+\method &
  \multicolumn{1}{c|}{\textbf{65.05}} &
  \textbf{63.35} &
  \multicolumn{1}{c|}{\textbf{72.36}} &
  - &
  +\method &
  \multicolumn{1}{c|}{63.54} &
  \textbf{61.64} &
  \multicolumn{1}{c|}{73.29} &
  - \\ \midrule
Mit B0~ &
  \multicolumn{1}{c|}{73.57} &
  73.56 &
  \multicolumn{1}{c|}{81.88} &
  - &
  HRNet-W48~ &
  \multicolumn{1}{c|}{57.53} &
  57.78 &
  \multicolumn{1}{c|}{65.24} &
  - \\ 
+\method &
  \multicolumn{1}{c|}{\textbf{75.99}} &
  \textbf{74.23} &
  \multicolumn{1}{c|}{\textbf{83.92}} &
  - &
  +\method &
  \multicolumn{1}{c|}{\textbf{59.16}} &
  \textbf{59.63} &
  \multicolumn{1}{c|}{\textbf{66.62}} &
  - \\ \midrule
MiT B5~ &
  \multicolumn{1}{c|}{81.82} &
  80.66 &
  \multicolumn{1}{c|}{88.69} &
  - &
  PSPNet-R18~ &
  \multicolumn{1}{c|}{57.87} &
  56.29 &
  \multicolumn{1}{c|}{68.83} &
  - \\ 
+\method &
  \multicolumn{1}{c|}{\textbf{82.08}} &
  \textbf{80.88} &
  \multicolumn{1}{c|}{\textbf{88.90}} &
  - &
  +\method &
  \multicolumn{1}{c|}{\textbf{57.98}} &
  \textbf{56.96} &
  \multicolumn{1}{c|}{\textbf{68.92}} &
  - \\ \midrule \midrule
AVG improvement &
  \multicolumn{1}{c|}{1.23} &
  0.96 &
  \multicolumn{1}{c|}{1.21} &
  - &
  AVG improvement &
  \multicolumn{1}{c|}{0.52} &
  0.87 &
  \multicolumn{1}{c|}{0.19} &
  - \\ 
  \bottomrule
\end{tabular}
}
\end{table}

% Please add the following required packages to your document preamble:
% 

\begin{table}[t!]
\centering
\caption{Results on out-of-distribution generalization datasets. }
\label{tb-ood}
\resizebox{.8\textwidth}{!}{
\begin{tabular}{cc|cccccc}
\toprule
 Dataset &
   Method &
  Vanilla &
  SWAD~&
  DANN~&
  VReX~ &
  RSC~ &
  MMD~ \\ \midrule
\multirow{2}{*}{VLCS} &
  ERM &
  \textbf{72.42}\tiny{$\pm$0.09} &
  75.79\tiny{$\pm$0.57} &
  69.29\tiny{$\pm$0.96} &
  \textbf{75.16}\tiny{$\pm$0.67} &
  74.87\tiny{$\pm$0.56} &
  74.39\tiny{$\pm$0.74} \\ 
 &
  +\method &
  71.94\tiny{$\pm$0.85}$^{\textcolor{green}{\downarrow}}$ &
  \textbf{76.50}\tiny{$\pm$0.23}$^{\textcolor{red}{\uparrow}}$ &
  \textbf{69.59}\tiny{$\pm$1.36}$^{\textcolor{red}{\uparrow}}$ &
  74.92\tiny{$\pm$0.87}$^{\textcolor{green}{\downarrow}}$ &
  \textbf{75.20}\tiny{$\pm$0.31}$^{\textcolor{red}{\uparrow}}$ &
  \textbf{75.15}\tiny{$\pm$0.86}$^{\textcolor{red}{\uparrow}}$ \\ \midrule
\multirow{2}{*}{OfficeHome} &
  ERM &
  62.65\tiny{$\pm$0.21} &
   61.86\tiny{$\pm$0.15} &
  59.38\tiny{$\pm$0.38} &
  63.27\tiny{$\pm$0.21} &
  61.30\tiny{$\pm$0.10} &
  \textbf{63.46}\tiny{$\pm$0.15} \\ 
 &
  +\method &
  \textbf{62.93}\tiny{$\pm$0.23}$^{\textcolor{red}{\uparrow}}$ &
  \textbf{62.07}\tiny{$\pm$0.11}$^{\textcolor{red}{\uparrow}}$ &
  \textbf{59.70}\tiny{$\pm$0.15}$^{\textcolor{red}{\uparrow}}$ &
  \textbf{63.46}\tiny{$\pm$0.24}$^{\textcolor{red}{\uparrow}}$ &
  \textbf{61.60}\tiny{$\pm$0.28}$^{\textcolor{red}{\uparrow}}$ &
  63.12\tiny{$\pm$0.11}$^{\textcolor{green}{\downarrow}}$ \\ \midrule
\multirow{2}{*}{PACS} &
  ERM &
  81.78\tiny{$\pm$0.21} &
  82.96\tiny{$\pm$0.32} &
  77.98\tiny{$\pm$0.96} &
  \textbf{81.13}\tiny{$\pm$0.78} &
  \textbf{80.97}\tiny{$\pm$0.91} &
  81.10\tiny{$\pm$0.43} \\ 
 &
  +\method &
  \textbf{81.86}\tiny{$\pm$0.54}$^{\textcolor{red}{\uparrow}}$ &
  \textbf{83.22}\tiny{$\pm$0.28}$^{\textcolor{red}{\uparrow}}$ &
  \textbf{78.58}\tiny{$\pm$1.30}$^{\textcolor{red}{\uparrow}}$ &
  80.81\tiny{$\pm$0.56}$^{\textcolor{green}{\downarrow}}$ &
  80.29\tiny{$\pm$0.91}$^{\textcolor{green}{\downarrow}}$ &
  \textbf{81.47}\tiny{$\pm$0.16}$^{\textcolor{red}{\uparrow}}$ \\ \midrule
\multirow{2}{*}{TerraInc} &
  ERM &
  \textbf{41.66}\tiny{$\pm$1.39} &
  41.84\tiny{$\pm$0.58} &
  \textbf{33.11}\tiny{$\pm$2.39} &
  42.00\tiny{$\pm$0.83} &
  44.44\tiny{$\pm$1.41} &
  37.47\tiny{$\pm$1.28} \\
 &
  +\method &
  40.71\tiny{$\pm$1.26}$^{\textcolor{green}{\downarrow}}$ &
  \textbf{42.88}\tiny{$\pm$0.77}$^{\textcolor{red}{\uparrow}}$ &
  32.36\tiny{$\pm$3.39}$^{\textcolor{green}{\downarrow}}$ &
  \textbf{42.19}\tiny{$\pm$1.56}$^{\textcolor{red}{\uparrow}}$ &
  \textbf{44.77}\tiny{$\pm$1.35}$^{\textcolor{red}{\uparrow}}$ &
  \textbf{37.67}\tiny{$\pm$1.45}$^{\textcolor{red}{\uparrow}}$ \\ \bottomrule
\end{tabular}
}
\end{table}

\paragraph{Out-of-distribution generalization}
We then apply \method to OOD generalization that evaluates generalization under distribution shifts.
Following previous works~\citep{DomainBed}, we use ResNet-18 as the backbone and then plug \method in some state-of-the-art OOD approaches including ERM, DANN~\citep{DANN}, VRex~\citep{VRex}, RSC~\citep{RSC}, MMD~\citep{MMD}, and SWAD~\citep{SWAD}.
The evaluation datasets include VLCS~\citep{VLCS}, OfficeHome~\citep{OfficeHome}, PACS~\citep{PACS}, and TerraInc~\citep{TerraInc}.

\tablename~\ref{tb-ood} shows the averaged results on each datasets using \methodx{1} and the details results on each domain are in Appendix~\ref{app:ood}.
We observe that \method can boost the performance of existing algorithms in OOD generalization tasks in diverse distribution shift settings ranging from natural shift (e.g., Office-Home) to correlation shift (e.g., TerraInc). 
This again proves its effectiveness.

\begin{wraptable}{r}{.35\textwidth}
\centering
\vspace{-.3in}
\caption{Results of adversarial robustness (CIFAR-10).}
\label{tb-adv}
\resizebox{.35\textwidth}{!}{
\begin{tabular}{cc|cc}
\toprule
\multicolumn{2}{c}{FGSM Attack~} & \multicolumn{2}{c}{FGSM + AT} \\ \midrule
FGSM          & +\method        & FGSM                 & +\method               \\ 
28.49\tiny{$\pm$0.25}         & \textbf{30.13}\tiny{$\pm$0.52}           & 53.54\tiny{$\pm$0.25}                & \textbf{54.18}\tiny{$\pm$0.31}                  \\ \midrule \midrule
\multicolumn{2}{c}{PGD Attack~}  & \multicolumn{2}{c}{PGD + AT}  \\ \midrule
PGD           & +\method        & PGD                  & +\method               \\ 
12.55\tiny{$\pm$0.47}         & \textbf{13.79}\tiny{$\pm$0.67}           & 46.85\tiny{$\pm$0.19}                & \textbf{47.10}\tiny{$\pm$0.38}       \\ \bottomrule           
\end{tabular}
}
% \vspace{-.3in}
\end{wraptable}

\paragraph{Adversarial robustness}
We further evaluate \method for adversarial robustness~\citep{madry}, where two popular attacks, FGSM~\citep{FGSM} and PGD~\citep{PGD} are adopted.
We conduct experiments on CIFAR-10~\citep{CIFAR} following previous works~\citep{Robustbench}.
We perform \method in both attack and defense (i.e., use adversarial training on FGSM and PGD) scenarios and report the robust accuracy.
Results in \tablename~\ref{tb-adv} demonstrate that \method is more robust to these attacks, showing its efficacy in handling adversarial perturbations.

\begin{wraptable}{r}{.3\textwidth}
\centering
\vspace{-.3in}
\caption{Results on long-tailed CIFAR-10 dataset.}
\label{tb-longtail}
\resizebox{.3\textwidth}{!}{
\begin{tabular}{|r|c|c|}
\toprule
Imb. ratio        & vanilla & +\method \\ \midrule
100 & 63.29\tiny{$\pm$0.82}   & \textbf{63.33}\tiny{$\pm$0.33}     \\ 
50  & 69.63\tiny{$\pm$0.21}   & \textbf{69.97}\tiny{$\pm$0.75}    \\ 
10  & 81.01\tiny{$\pm$0.34}   & \textbf{81.15}\tiny{$\pm$0.46}    \\ \bottomrule
\end{tabular}
}
\vspace{-.1in}
\end{wraptable}

\paragraph{Long-tailed recognition}
Finally, we evaluate \method in long-tailed recognition tasks.
We implement three different imbalanced ratios\footnote{Imbalance ratio is the ratio between the number of samples of the head (biggest) class with tail (smallest) class, e.g., there are 1000 samples for head and 10 samples for tail, resulting an imbalanced ratio of 100.} on CIFAR-10-LT~\citep{CIFAR}, following previous works~\citep{LT_SSP, LT_Kaihua}.
As shown in \tablename~\ref{tb-longtail}, \method is consistently more robust than ERM in different imbalance settings.

\subsection{How Many Dummy Classes Do You Need?}

Now we discuss the influence of the number of dummy classes $C_d$. To exploit an empirical answer, we run extensive experiments by varying $C_d$ from $1$ to $40$ on three datasets, namely CIFAR-10 in \figurename~\ref{fig:swap_cifar}, STL in \ref{fig:swap_stl} and Oxford-Pet in Appendix~\ref{app:oxford}.
As shown, it reveals that there is \emph{no} explicit correlation between the number of dummy classes and model performance. To this end, we assert that the number of dummy classes is not a latent factor influencing the model generalization. \textbf{Therefore, \method can be easily used in real applications without extensively setting the number of dummy classes.} Additionally, the ablation results in \figurename~\ref{ablationstudy} further support this empirical conclusion.

\begin{figure}[t!]
\centering
\subfigure[CIFAR10 \method Swap]{
\begin{minipage}[t]{0.24\linewidth}
\centering
\includegraphics[width=1.0\linewidth]{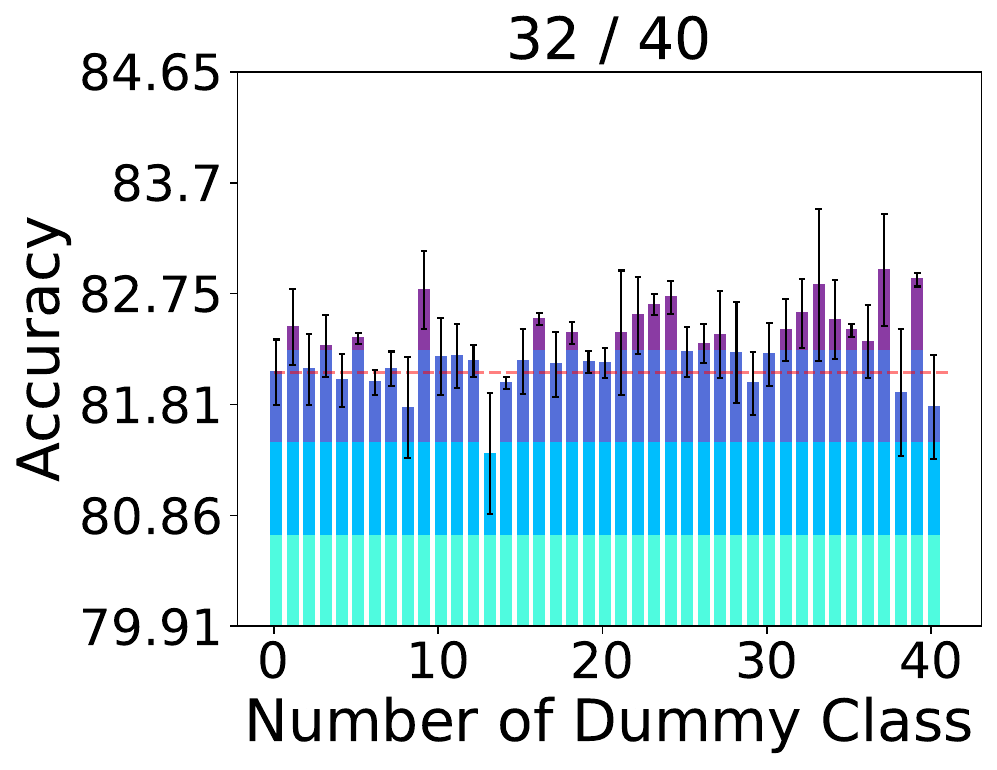}
\label{fig:swap_cifar}
\end{minipage}%
}%
\subfigure[STL \method Swap]{
\begin{minipage}[t]{0.24\linewidth}
\centering
\includegraphics[width=1.0\linewidth]{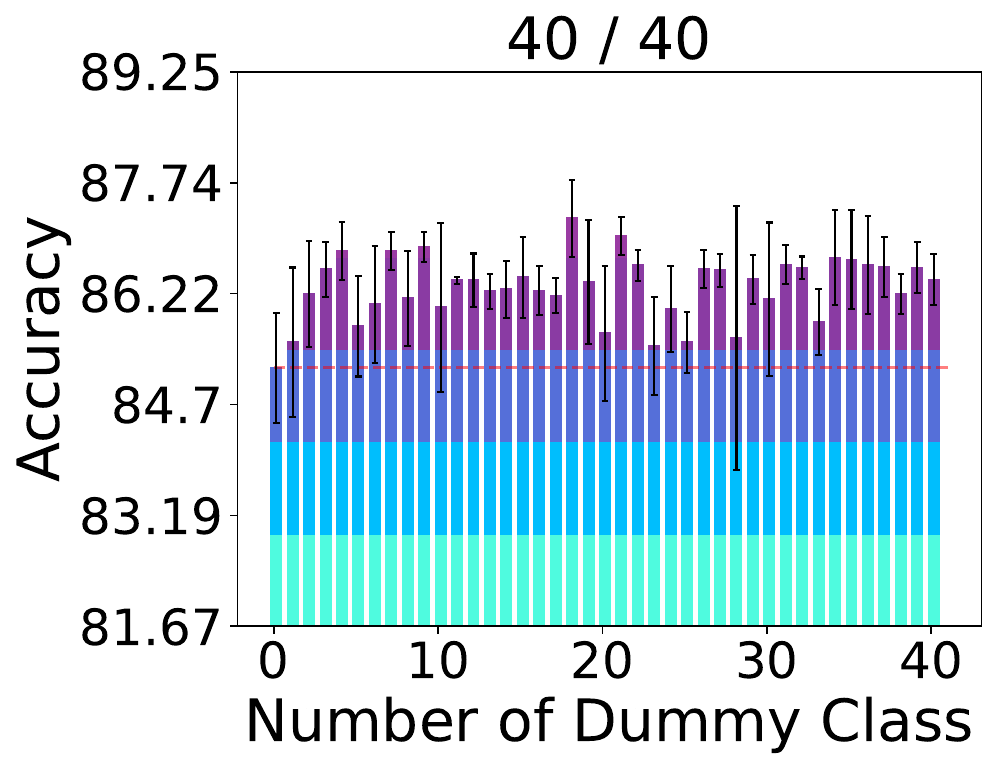}
\label{fig:swap_stl}
\end{minipage}%
}
\subfigure[\methodx{2} Grad. Fraction]{
\begin{minipage}[t]{0.24\linewidth}
\centering
\includegraphics[width=1\textwidth]{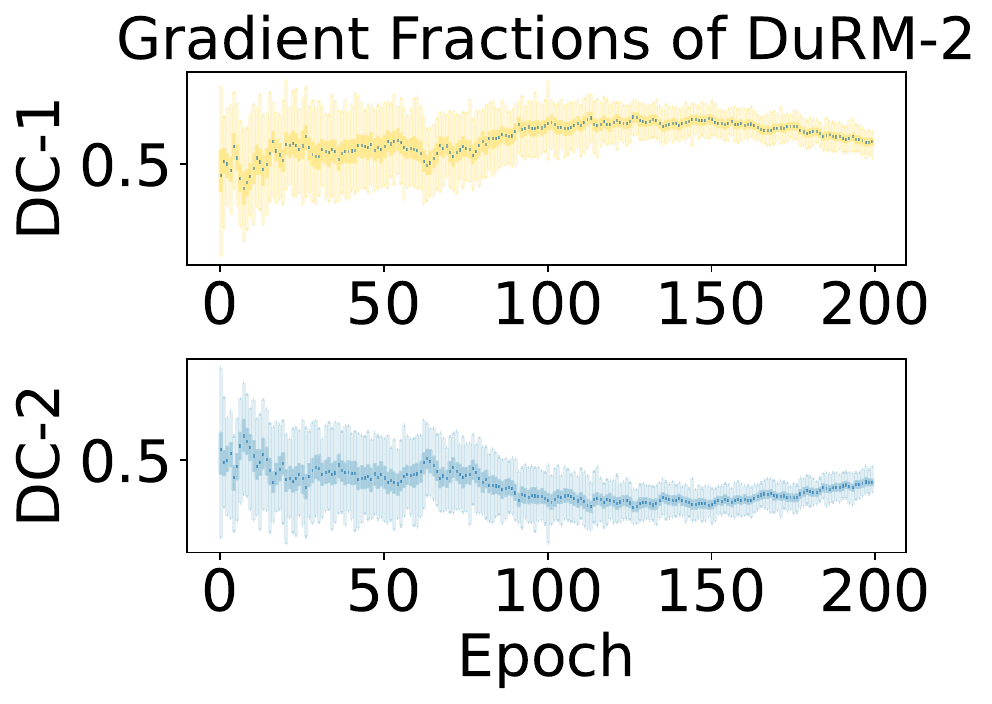}
\label{fig:gf_dc2}
\end{minipage}%
}
\subfigure[\methodx{3} Grad. Fraction]{
\begin{minipage}[t]{0.24\linewidth}
\centering
\includegraphics[width=1\textwidth]{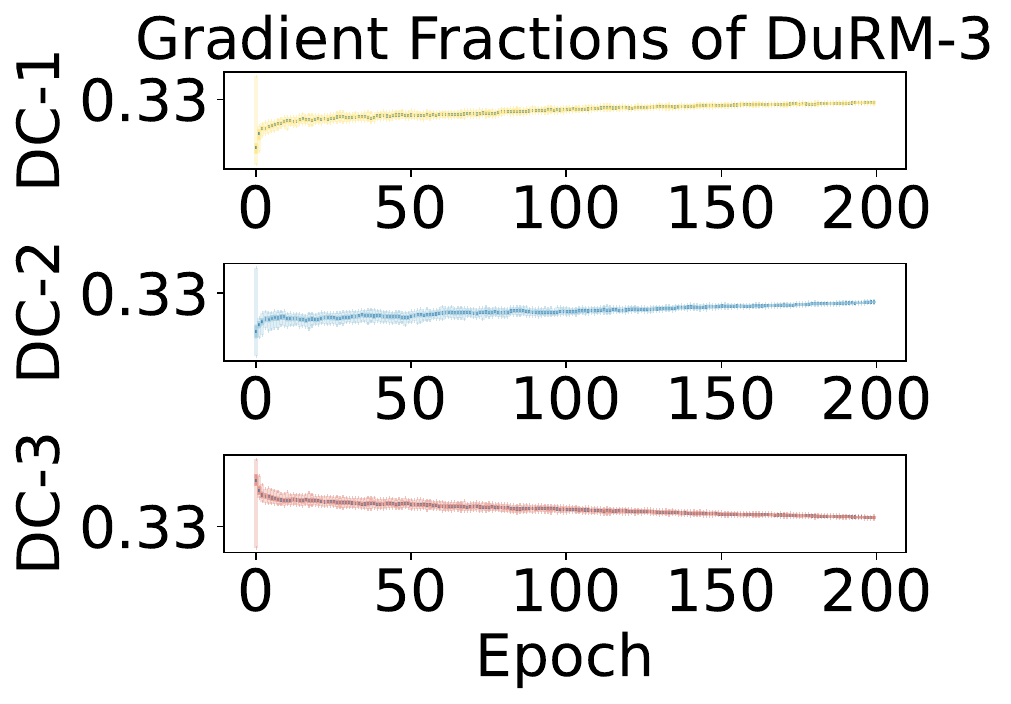}
\label{fig:gf_dc3}
\end{minipage}%

}%
\centering
\caption{(a) and (b) are accuracy varying curve along the number of dummy class on CIFAR10 and STL, in which the titles denote the counts of \method outperforming ERM. The different colors are to distinguish the accuracy range. (c) and (d) gradient fraction convergence curves for dummy classes under \methodx{2} and \methodx{3}, respectively. }
\vspace{-.1in}
\end{figure}

\subsection{Analyzing the Gradient Fraction on Dummy Classes}
Based on the above conclusion, there may arise a question of whether only head dummy classes are effective in \method, resulting in there goes no explicit correlation between model generalization with dummy class quantity. 
To answer that question, we record the gradient fraction for multiple dummy class settings, i.e., the gradient contribution of each dummy class is recorded with $C_d \in \{2, 3, 4\}$, as shown in \figurename~\ref{fig:gf_dc2} and \ref{fig:gf_dc3}. The gradient fraction for each dummy class can be derived via the fraction between prediction probability of corresponding category with the sum probability to all dummy classes.
Please note that the results of $C_d=4$ is in Appendix~\ref{app:grad_frac4}.
Results show that the fraction is converging towards the \emph{averaging} level, which means that all dummy classes are treated \emph{equally}.
Hence, each dummy class in \method is equally effective.
As for the results variance among different dummy classes, this could be understood from the perspective of gradient transport randomness, which is reflected as the convergence curve of gradient fraction in \figurename~\ref{fig:gf_dc2} and \ref{fig:gf_dc3}.

\subsection{Ablation study on Model Scale, Number of Class, and Data Scale}
To better understand \method and provide a usage guideline, we conduct comprehensive ablation experiments to perform controlled analysis on factors including model scale, the number of original class and dataset scale.
We manually generate some subsets from ImageNet-1K~\citep{imagenet}.
For each factor, we pick three levels of options.
Moreover, to eliminate bias from the generated subset, we generate three different subsets under each data-related factor and repeat the experiments with three trials. 

\begin{figure}[t!]
\centering
\subfigure[Control Number of Samples]{
\begin{minipage}[t]{0.3\linewidth}
\centering
\includegraphics[width=1\textwidth]{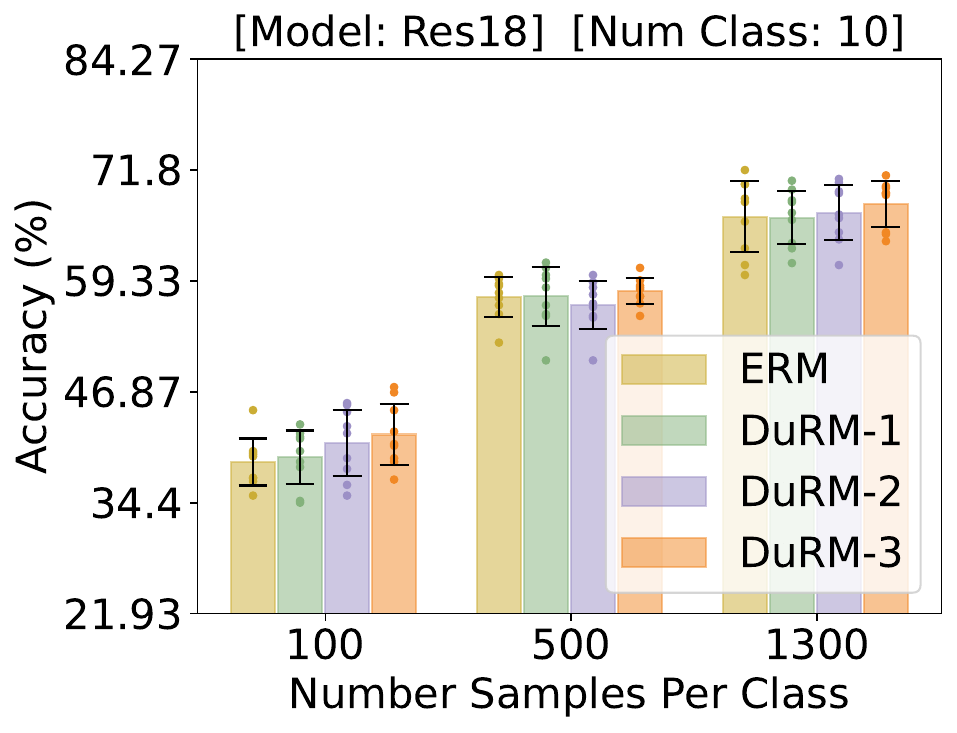}
\label{fig:control_sample}
\end{minipage}%
}
\subfigure[Control Model Scales]{
\begin{minipage}[t]{0.3\linewidth}
\centering
\includegraphics[width=1\textwidth]{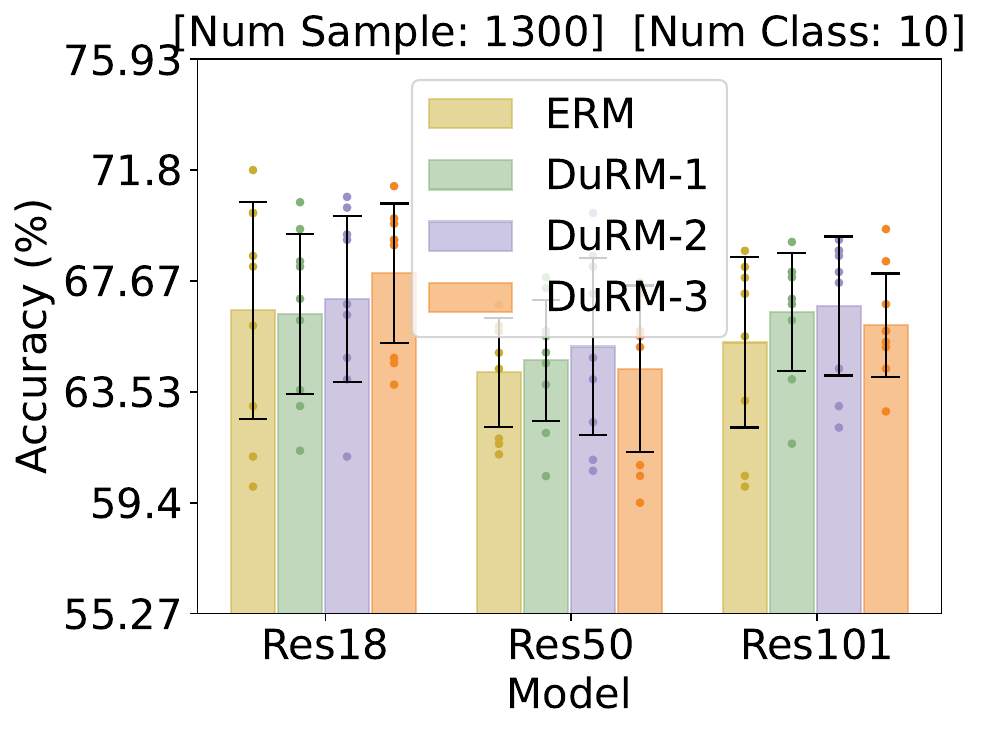}
\label{fig:control_model}
\end{minipage}%
}%
\subfigure[Control Number of Class]{
\begin{minipage}[t]{0.3\linewidth}
\centering
\includegraphics[width=1.0\linewidth]{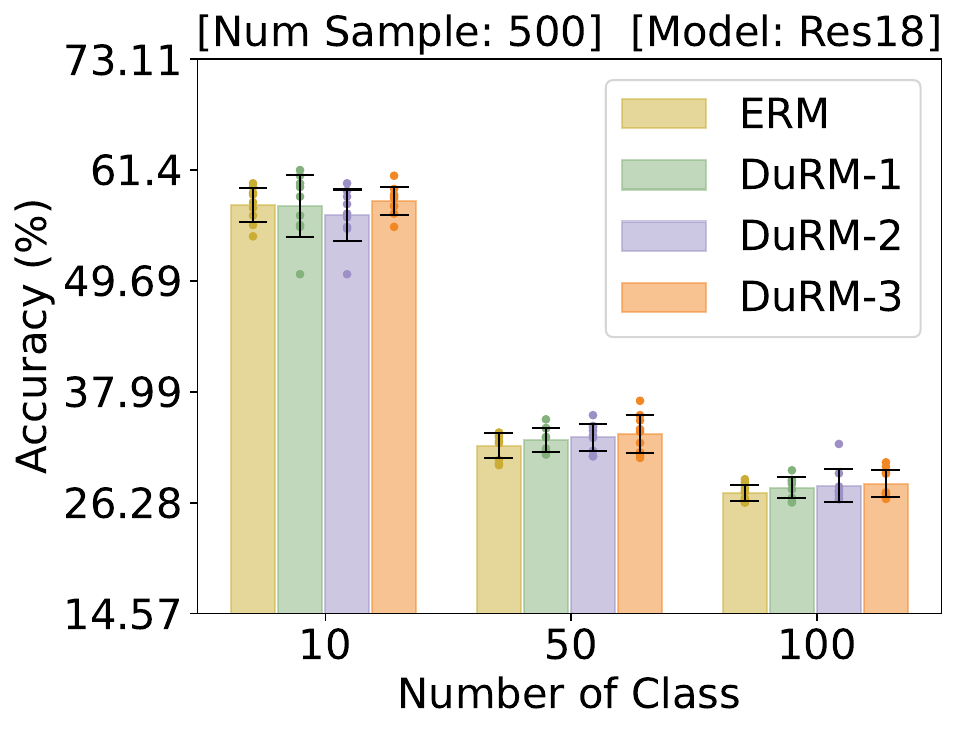}
\label{fig:control_class}
\end{minipage}%
}%
\centering
\vspace{-.1in}
\caption{The controlled experiments on manually generated subsets from ImageNet. Under each data-centric setting, we generate three different datasets. Then, we also repeat each experiments with three trials. To this end, the bars are averaged under nine values.}
\label{ablationstudy}
\end{figure}

As shown in \figurename~\ref{fig:control_sample}, we control the number of samples per class.
Comparing the improvement under $100$ with $1300$, we notice that \method achieves larger improvement under smaller data scale.
Then, the results of model scale in \figurename~\ref{fig:control_model} reflect that \method works better under more over-fitting scenarios.
Later, we control the number of class in \figurename~\ref{fig:control_class}, where the results further reveals better performance of \method under over-fitting. Therefore, summarizing all the results, it is demonstrated that our \method is effective under multi-scene on data-centric or model-centric variance.
To this end, the above results facilitates us demonstrating that \method works as a model regularization, especially in reducing over-fitting to enhance generalization.

\subsection{Compatibility with Other Regularization Technique}

\begin{wraptable}{r}{.35\textwidth}
\centering
\vspace{-.2in}
\caption{Compatibility to other regularization on CIFAR-10.}
\label{tb-trick}
\vspace{.1in}
\resizebox{.35\textwidth}{!}{
\begin{tabular}{|c|c|c|}
\toprule
Method    & ERM                 & +\method                \\ \midrule
Vanilla   & 79.63\tiny{$\pm$0.09}~         & \textbf{80.01}\tiny{$\pm$0.06} \\
EarlyStop & 78.68\tiny{$\pm$0.35}          & \textbf{78.96}\tiny{$\pm$0.12} \\
L2        & 79.70\tiny{$\pm$0.25}          & \textbf{79.78}\tiny{$\pm$0.21} \\
Momentum  & 81.98\tiny{$\pm$0.18}          & \textbf{82.33}\tiny{$\pm$0.30} \\
MixUp     & \textbf{82.91}\tiny{$\pm$0.86} & 82.76\tiny{$\pm$0.81}          \\
EMA     & 79.62\tiny{$\pm$0.17} & \textbf{79.92}\tiny{$\pm$0.38}          \\
SWA       & 93.46\tiny{$\pm$0.17}          & \textbf{93.48}\tiny{$\pm$0.14}  \\
\bottomrule
\end{tabular}
}
\vspace{-.1in}
\end{wraptable}

In this subsection, we plug our \method into some popular regularization technique to verify the compatibility of \method.
Concretely, we plug \method into: Early Stop, L2 Regularization, Gradient Momentum, Mixup~\citep{Mixup}, Exponentially Momentum Average (EMA), Stochastic Weight Average (SWA)~\citep{SWA}.
We conduct experiments on CIFAR-10 dataset with ERM as the baseline.
As shown in \figurename~\ref{tb-trick}, \method can be seamlessly integrated into existing regularization technique to achieve further improvements.
This indicates that \method can be easily applied to a broad range of applications for better performance.

% \section{Limitation}

% This initial study has the following limitations.
% First, we admit that \method only achieves marginal improvement (around $.5\%$) and is even inferior to ERM on some tasks.
% The failure cases could due to data and task specificity.
% But it is noteworthy that \method is extremely easy to implement at almost no cost while surpassing most tasks, representing \emph{almost} a free-lunch to unpainfully improve different tasks.
% Thus, it is still useful as a general technique.
% Second, due to the large number of experiments and limited computation resources, we cannot test all related algorithms, datasets, and backbones.
% We tried our best to cover a wide range of factors to give a thorough evaluation of \method.
% Third, we did not derive a generalization bound of \method to explicitly show its advantage over ERM, which can be left for future work.
% Fourth, as noted in the introduction, this paper only focuses on classification-based tasks, so our analysis is all based on cross-entropy loss.
% Extending the experiments and analysis to other loss functions can also be future work.

% % \wjd{we need to analyze the relation between \method and other general generalization techniques such as dropout (in introduction section)}

% % \wjd{relation with label smooth}

% % \wjd{Experimental limitation: we do not test all sotas in all settings}

% % \wjd{we only achieved marginal improvement. they might challenge us: is it worth it? Or are the marginal improvements be ignored?}

\section{Conclusion}
This paper is motivated by the unsatisfactory generalization capacity of ERM in classification tasks.
We devised a frustratingly easy method called Dummy Risk Minimization (\method), which is almost a free lunch solution upon ERM to achieve a better generalization.
\method adds several dummy classes to logits without modifying the original label space.
We then provided theoretical analysis on \method in pushing model convergence towards a better flat local minimum.
Empirically, we conducted extensive experiments to support our theory and provided an empirical understanding of \method.
In a nutshell, this paper provided the community with an interpretable and free-lunch regularization to classification models.
We hope this method could inspire new interest on generalization research.

\bibliographystyle{iclr2024_conference}
\bibliography{references}

\begin{thebibliography}{62}
\providecommand{\natexlab}[1]{#1}
\providecommand{\url}[1]{\texttt{#1}}
\expandafter\ifx\csname urlstyle\endcsname\relax
  \providecommand{\doi}[1]{doi: #1}\else
  \providecommand{\doi}{doi: \begingroup \urlstyle{rm}\Url}\fi

\bibitem[Ajakan et~al.(2016)Ajakan, Germain, Larochelle, Laviolette, and Marchand]{DANN}
Hana Ajakan, Pascal Germain, Hugo Larochelle, Fran{\c{c}}ois Laviolette, and Mario Marchand.
\newblock Domain-adversarial neural networks.
\newblock \emph{JMLR}, 2016.

\bibitem[Arjovsky et~al.(2019)Arjovsky, Bottou, Gulrajani, and Lopez-Paz]{IRM}
Martin Arjovsky, L{\'e}on Bottou, Ishaan Gulrajani, and David Lopez-Paz.
\newblock Invariant risk minimization.
\newblock \emph{arXiv preprint arXiv:1907.02893}, 2019.

\bibitem[Baevski et~al.(2021)Baevski, Hsu, Conneau, and Auli]{baevski2021unsupervised}
Alexei Baevski, Wei-Ning Hsu, Alexis Conneau, and Michael Auli.
\newblock Unsupervised speech recognition.
\newblock \emph{NeurIPS}, 34:\penalty0 27826--27839, 2021.

\bibitem[Beery et~al.(2018)Beery, Van~Horn, and Perona]{TerraInc}
Sara Beery, Grant Van~Horn, and Pietro Perona.
\newblock Recognition in terra incognita.
\newblock In \emph{ECCV}, pp.\  456--473, 2018.

\bibitem[Carlini \& Wagner(2017)Carlini and Wagner]{PGD}
Nicholas Carlini and David Wagner.
\newblock Towards evaluating the robustness of neural networks.
\newblock In \emph{2017 IEEE SSP}, pp.\  39--57, 2017.

\bibitem[Cha et~al.(2021)Cha, Chun, Lee, Cho, Park, Lee, and Park]{SWAD}
Junbum Cha, Sanghyuk Chun, Kyungjae Lee, Han-Cheol Cho, Seunghyun Park, Yunsung Lee, and Sungrae Park.
\newblock Swad: Domain generalization by seeking flat minima.
\newblock In \emph{NeurIPS}, pp.\  22405--22418, 2021.

\bibitem[Chen et~al.(2017)Chen, Papandreou, Schroff, and Adam]{Deeplab}
Liang-Chieh Chen, George Papandreou, Florian Schroff, and Hartwig Adam.
\newblock Rethinking atrous convolution for semantic image segmentation.
\newblock \emph{arXiv preprint arXiv:1706.05587}, 2017.

\bibitem[Coates et~al.(2011)Coates, Ng, and Lee]{STL10}
Adam Coates, Andrew Ng, and Honglak Lee.
\newblock An analysis of single-layer networks in unsupervised feature learning.
\newblock In \emph{AISTATS}, pp.\  215--223, 2011.

\bibitem[Contributors(2020)]{MMSeg}
MMSegmentation Contributors.
\newblock {MMSegmentation}: Openmmlab semantic segmentation toolbox and benchmark.
\newblock \url{https://github.com/open-mmlab/mmsegmentation}, 2020.

\bibitem[Cordts et~al.(2016)Cordts, Omran, Ramos, Rehfeld, Enzweiler, Benenson, Franke, Roth, and Schiele]{CityScapes}
Marius Cordts, Mohamed Omran, Sebastian Ramos, Timo Rehfeld, Markus Enzweiler, Rodrigo Benenson, Uwe Franke, Stefan Roth, and Bernt Schiele.
\newblock The cityscapes dataset for semantic urban scene understanding.
\newblock In \emph{CVPR}, pp.\  3213--3223, 2016.

\bibitem[Croce et~al.(2021)Croce, Andriushchenko, Sehwag, Debenedetti, Flammarion, Chiang, Mittal, and Hein]{Robustbench}
Francesco Croce, Maksym Andriushchenko, Vikash Sehwag, Edoardo Debenedetti, Nicolas Flammarion, Mung Chiang, Prateek Mittal, and Matthias Hein.
\newblock Robustbench: A standardized adversarial robustness benchmark.
\newblock In \emph{NeurIPS}, 2021.

\bibitem[Cubuk et~al.(2020)Cubuk, Zoph, Shlens, and Le]{RandAug}
Ekin~D Cubuk, Barret Zoph, Jonathon Shlens, and Quoc~V Le.
\newblock Randaugment: Practical automated data augmentation with a reduced search space.
\newblock In \emph{CVPR Workshops}, pp.\  702--703, 2020.

\bibitem[Deng et~al.(2009)Deng, Dong, Socher, Li, Li, and Fei-Fei]{imagenet}
Jia Deng, Wei Dong, Richard Socher, Li-Jia Li, Kai Li, and Li~Fei-Fei.
\newblock Imagenet: A large-scale hierarchical image database.
\newblock In \emph{CVPR}, pp.\  248--255, 2009.

\bibitem[Devlin et~al.(2018)Devlin, Chang, Lee, and Toutanova]{devlin2018bert}
Jacob Devlin, Ming-Wei Chang, Kenton Lee, and Kristina Toutanova.
\newblock Bert: Pre-training of deep bidirectional transformers for language understanding.
\newblock \emph{arXiv preprint arXiv:1810.04805}, 2018.

\bibitem[Dosovitskiy et~al.(2021)Dosovitskiy, Beyer, Kolesnikov, Weissenborn, Zhai, Unterthiner, Dehghani, Minderer, Heigold, Gelly, et~al.]{ViT}
Alexey Dosovitskiy, Lucas Beyer, Alexander Kolesnikov, Dirk Weissenborn, Xiaohua Zhai, Thomas Unterthiner, Mostafa Dehghani, Matthias Minderer, Georg Heigold, Sylvain Gelly, et~al.
\newblock An image is worth 16x16 words: Transformers for image recognition at scale.
\newblock \emph{ICLR}, 2021.

\bibitem[Everingham et~al.(2015)Everingham, Eslami, Van~Gool, Williams, Winn, and Zisserman]{PascalVoc}
Mark Everingham, SM~Ali Eslami, Luc Van~Gool, Christopher~KI Williams, John Winn, and Andrew Zisserman.
\newblock The pascal visual object classes challenge: A retrospective.
\newblock \emph{IJCV}, 111:\penalty0 98--136, 2015.

\bibitem[Fang et~al.(2013)Fang, Xu, and Rockmore]{VLCS}
Chen Fang, Ye~Xu, and Daniel~N Rockmore.
\newblock Unbiased metric learning: On the utilization of multiple datasets and web images for softening bias.
\newblock In \emph{ICCV}, pp.\  1657--1664, 2013.

\bibitem[Foret et~al.(2021)Foret, Kleiner, Mobahi, and Neyshabur]{SAM}
Pierre Foret, Ariel Kleiner, Hossein Mobahi, and Behnam Neyshabur.
\newblock Sharpness-aware minimization for efficiently improving generalization.
\newblock \emph{ICLR}, 2021.

\bibitem[Freund \& Schapire(1997)Freund and Schapire]{AdaBoost}
Yoav Freund and Robert~E Schapire.
\newblock A decision-theoretic generalization of on-line learning and an application to boosting.
\newblock \emph{JCSC}, 55\penalty0 (1):\penalty0 119--139, 1997.

\bibitem[Goodfellow et~al.(2015)Goodfellow, Shlens, and Szegedy]{FGSM}
Ian~J Goodfellow, Jonathon Shlens, and Christian Szegedy.
\newblock Explaining and harnessing adversarial examples.
\newblock 2015.

\bibitem[Gulrajani \& Lopez-Paz(2020)Gulrajani and Lopez-Paz]{DomainBed}
Ishaan Gulrajani and David Lopez-Paz.
\newblock In search of lost domain generalization.
\newblock \emph{arXiv preprint arXiv:2007.01434}, 2020.

\bibitem[He et~al.(2016)He, Zhang, Ren, and Sun]{ResNet}
Kaiming He, Xiangyu Zhang, Shaoqing Ren, and Jian Sun.
\newblock Deep residual learning for image recognition.
\newblock In \emph{CVPR}, pp.\  770--778, 2016.

\bibitem[Huang et~al.(2020)Huang, Wang, Xing, and Huang]{RSC}
Zeyi Huang, Haohan Wang, Eric~P Xing, and Dong Huang.
\newblock Self-challenging improves cross-domain generalization.
\newblock In \emph{ECCV}, pp.\  124--140, 2020.

\bibitem[Izmailov et~al.(2018)Izmailov, Podoprikhin, Garipov, Vetrov, and Wilson]{SWA}
Pavel Izmailov, Dmitrii Podoprikhin, Timur Garipov, Dmitry Vetrov, and Andrew~Gordon Wilson.
\newblock Averaging weights leads to wider optima and better generalization.
\newblock In \emph{UAI}, 2018.

\bibitem[Krizhevsky et~al.(2009)Krizhevsky, Hinton, et~al.]{CIFAR}
Alex Krizhevsky, Geoffrey Hinton, et~al.
\newblock Learning multiple layers of features from tiny images.
\newblock 2009.

\bibitem[Krueger et~al.(2021)Krueger, Caballero, Jacobsen, Zhang, Binas, Zhang, Le~Priol, and Courville]{VRex}
David Krueger, Ethan Caballero, Joern-Henrik Jacobsen, Amy Zhang, Jonathan Binas, Dinghuai Zhang, Remi Le~Priol, and Aaron Courville.
\newblock Out-of-distribution generalization via risk extrapolation (rex).
\newblock In \emph{ICML}, pp.\  5815--5826, 2021.

\bibitem[LAWRENCE(1996)]{wideMLP}
CGS LAWRENCE.
\newblock What size neural network gives optimal generalization? convergence propaties of backpropagation.
\newblock \emph{Technical Report UMIACS-TR-96-22 and CS-TR-3617}, 1996.

\bibitem[Li et~al.(2017)Li, Yang, Song, and Hospedales]{PACS}
Da~Li, Yongxin Yang, Yi-Zhe Song, and Timothy~M Hospedales.
\newblock Deeper, broader and artier domain generalization.
\newblock In \emph{ICCV}, pp.\  5542--5550, 2017.

\bibitem[Li \& Sun(2007)Li and Sun]{THUCNews}
Jingyang Li and Maosong Sun.
\newblock Scalable term selection for text categorization.
\newblock In \emph{EMNLP-CoNLL}, pp.\  774--782, 2007.

\bibitem[Liu et~al.(2021)Liu, Lin, Cao, Hu, Wei, Zhang, Lin, and Guo]{Swin}
Ze~Liu, Yutong Lin, Yue Cao, Han Hu, Yixuan Wei, Zheng Zhang, Stephen Lin, and Baining Guo.
\newblock Swin transformer: Hierarchical vision transformer using shifted windows.
\newblock In \emph{ICCV}, pp.\  10012--10022, 2021.

\bibitem[Liu et~al.(2023)Liu, Xu, Jin, Shen, and Darrell]{DpforUf}
Zhuang Liu, Zhiqiu Xu, Joseph Jin, Zhiqiang Shen, and Trevor Darrell.
\newblock Dropout reduces underfitting.
\newblock \emph{arXiv preprint arXiv:2303.01500}, 2023.

\bibitem[Long et~al.(2015)Long, Shelhamer, and Darrell]{FCN}
Jonathan Long, Evan Shelhamer, and Trevor Darrell.
\newblock Fully convolutional networks for semantic segmentation.
\newblock In \emph{CVPR}, pp.\  3431--3440, 2015.

\bibitem[Madry et~al.(2017)Madry, Makelov, Schmidt, Tsipras, and Vladu]{madry}
Aleksander Madry, Aleksandar Makelov, Ludwig Schmidt, Dimitris Tsipras, and Adrian Vladu.
\newblock Towards deep learning models resistant to adversarial attacks.
\newblock In \emph{ICLR}, 2017.

\bibitem[Mehta \& Rastegari(2022)Mehta and Rastegari]{MobileVit}
Sachin Mehta and Mohammad Rastegari.
\newblock Mobilevit: Light-weight, general-purpose, and mobile-friendly vision transformer.
\newblock In \emph{ICLR}, 2022.

\bibitem[Nilsback \& Zisserman(2008)Nilsback and Zisserman]{Flower}
Maria-Elena Nilsback and Andrew Zisserman.
\newblock Automated flower classification over a large number of classes.
\newblock In \emph{ICVGIP}, pp.\  722--729, 2008.

\bibitem[Parkhi et~al.(2012)Parkhi, Vedaldi, Zisserman, and Jawahar]{Cats}
Omkar~M Parkhi, Andrea Vedaldi, Andrew Zisserman, and CV~Jawahar.
\newblock Cats and dogs.
\newblock In \emph{CVPR}, pp.\  3498--3505, 2012.

\bibitem[Radford et~al.(2021)Radford, Kim, Hallacy, Ramesh, Goh, Agarwal, Sastry, Askell, Mishkin, Clark, et~al.]{CLIP}
Alec Radford, Jong~Wook Kim, Chris Hallacy, Aditya Ramesh, Gabriel Goh, Sandhini Agarwal, Girish Sastry, Amanda Askell, Pamela Mishkin, Jack Clark, et~al.
\newblock Learning transferable visual models from natural language supervision.
\newblock In \emph{ICML}, pp.\  8748--8763, 2021.

\bibitem[Sain(1996)]{ERM}
Stephan~R Sain.
\newblock The nature of statistical learning theory.
\newblock \emph{Springer science business media}, 1996.

\bibitem[Salamon \& Bello(2017)Salamon and Bello]{Urbansound}
Justin Salamon and Juan~Pablo Bello.
\newblock Deep convolutional neural networks and data augmentation for environmental sound classification.
\newblock \emph{IEEE SPL}, 24\penalty0 (3):\penalty0 279--283, 2017.

\bibitem[Schneider et~al.(2019)Schneider, Baevski, Collobert, and Auli]{schneider2019wav2vec}
Steffen Schneider, Alexei Baevski, Ronan Collobert, and Michael Auli.
\newblock wav2vec: Unsupervised pre-training for speech recognition.
\newblock \emph{arXiv preprint arXiv:1904.05862}, 2019.

\bibitem[Shevitz \& Paden(1994)Shevitz and Paden]{Lyapunov}
Daniel Shevitz and Brad Paden.
\newblock Lyapunov stability theory of nonsmooth systems.
\newblock \emph{IEEE TAC}, 39\penalty0 (9):\penalty0 1910--1914, 1994.

\bibitem[Shi et~al.(2015)Shi, Chen, Wang, Yeung, Wong, and Woo]{LSTM}
Xingjian Shi, Zhourong Chen, Hao Wang, Dit-Yan Yeung, Wai-Kin Wong, and Wang-chun Woo.
\newblock Convolutional lstm network: A machine learning approach for precipitation nowcasting.
\newblock In \emph{NeurIPS}, volume~28, 2015.

\bibitem[Strudel et~al.(2021)Strudel, Garcia, Laptev, and Schmid]{Segmenter}
Robin Strudel, Ricardo Garcia, Ivan Laptev, and Cordelia Schmid.
\newblock Segmenter: Transformer for semantic segmentation.
\newblock In \emph{ICCV}, pp.\  7262--7272, 2021.

\bibitem[Szegedy et~al.(2017)Szegedy, Ioffe, Vanhoucke, and Alemi]{Inception}
Christian Szegedy, Sergey Ioffe, Vincent Vanhoucke, and Alexander Alemi.
\newblock Inception-v4, inception-resnet and the impact of residual connections on learning.
\newblock In \emph{AAAI}, volume~31, 2017.

\bibitem[Tan \& Le(2019)Tan and Le]{Efficient}
Mingxing Tan and Quoc Le.
\newblock Efficientnet: Rethinking model scaling for convolutional neural networks.
\newblock In \emph{ICML}, pp.\  6105--6114, 2019.

\bibitem[Tang et~al.(2020)Tang, Huang, and Zhang]{LT_Kaihua}
Kaihua Tang, Jianqiang Huang, and Hanwang Zhang.
\newblock Long-tailed classification by keeping the good and removing the bad momentum causal effect.
\newblock In \emph{NeurIPS}, pp.\  1513--1524, 2020.

\bibitem[Tolstikhin et~al.(2021)Tolstikhin, Houlsby, Kolesnikov, Beyer, Zhai, Unterthiner, Yung, Steiner, Keysers, Uszkoreit, et~al.]{MLP_Mixer}
Ilya~O Tolstikhin, Neil Houlsby, Alexander Kolesnikov, Lucas Beyer, Xiaohua Zhai, Thomas Unterthiner, Jessica Yung, Andreas Steiner, Daniel Keysers, Jakob Uszkoreit, et~al.
\newblock Mlp-mixer: An all-mlp architecture for vision.
\newblock In \emph{NeurIPS}, volume~34, pp.\  24261--24272, 2021.

\bibitem[Tzeng et~al.(2014)Tzeng, Hoffman, Zhang, Saenko, and Darrell]{MMD}
Eric Tzeng, Judy Hoffman, Ning Zhang, Kate Saenko, and Trevor Darrell.
\newblock Deep domain confusion: Maximizing for domain invariance.
\newblock \emph{arXiv preprint arXiv:1412.3474}, 2014.

\bibitem[Vaswani et~al.(2017)Vaswani, Shazeer, Parmar, Uszkoreit, Jones, Gomez, Kaiser, and Polosukhin]{Transformer}
Ashish Vaswani, Noam Shazeer, Niki Parmar, Jakob Uszkoreit, Llion Jones, Aidan~N Gomez, {\L}ukasz Kaiser, and Illia Polosukhin.
\newblock Attention is all you need.
\newblock In \emph{NeurIPS}, volume~30, 2017.

\bibitem[Venkateswara et~al.(2017)Venkateswara, Eusebio, Chakraborty, and Panchanathan]{OfficeHome}
Hemanth Venkateswara, Jose Eusebio, Shayok Chakraborty, and Sethuraman Panchanathan.
\newblock Deep hashing network for unsupervised domain adaptation.
\newblock In \emph{CVPR}, pp.\  5018--5027, 2017.

\bibitem[Wang et~al.(2022)Wang, Lan, Liu, Ouyang, Qin, Lu, Chen, Zeng, and Yu]{WJD_TKDE}
Jindong Wang, Cuiling Lan, Chang Liu, Yidong Ouyang, Tao Qin, Wang Lu, Yiqiang Chen, Wenjun Zeng, and Philip Yu.
\newblock Generalizing to unseen domains: A survey on domain generalization.
\newblock \emph{IEEE TKDE}, 2022.

\bibitem[Wang et~al.(2020)Wang, Sun, Cheng, Jiang, Deng, Zhao, Liu, Mu, Tan, Wang, et~al.]{HRNet}
Jingdong Wang, Ke~Sun, Tianheng Cheng, Borui Jiang, Chaorui Deng, Yang Zhao, Dong Liu, Yadong Mu, Mingkui Tan, Xinggang Wang, et~al.
\newblock Deep high-resolution representation learning for visual recognition.
\newblock \emph{IEEE TPAMI}, 43\penalty0 (10):\penalty0 3349--3364, 2020.

\bibitem[Wang et~al.(2021)Wang, Zheng, Ma, Lu, and Zhong]{}
Junjue Wang, Zhuo Zheng, Ailong Ma, Xiaoyan Lu, and Yanfei Zhong.
\newblock Loveda: A remote sensing land-cover dataset for domain adaptive semantic segmentation.
\newblock In \emph{NeurIPS}, 2021.

\bibitem[Wang et~al.(2023)Wang, Li, Tan, Liu, and Li]{UperNet}
Zhenhua Wang, Jing Li, Zhilian Tan, Xiangfeng Liu, and Mingjie Li.
\newblock Swin-upernet: A semantic segmentation model for mangroves and spartina alterniflora loisel based on upernet.
\newblock \emph{Electronics}, 12\penalty0 (5):\penalty0 1111, 2023.

\bibitem[Xie et~al.(2021)Xie, Wang, Yu, Anandkumar, Alvarez, and Luo]{Segformer}
Enze Xie, Wenhai Wang, Zhiding Yu, Anima Anandkumar, Jose~M Alvarez, and Ping Luo.
\newblock Segformer: Simple and efficient design for semantic segmentation with transformers.
\newblock In \emph{NeurIPS}, pp.\  12077--12090, 2021.

\bibitem[Xie et~al.(2017)Xie, Girshick, Doll{\'a}r, Tu, and He]{Resnext}
Saining Xie, Ross Girshick, Piotr Doll{\'a}r, Zhuowen Tu, and Kaiming He.
\newblock Aggregated residual transformations for deep neural networks.
\newblock In \emph{CVPR}, pp.\  1492--1500, 2017.

\bibitem[Yang et~al.(2022)Yang, Xie, Chen, Li, Lin, and Tao]{LearnedClassifier}
Yibo Yang, Liang Xie, Shixiang Chen, Xiangtai Li, Zhouchen Lin, and Dacheng Tao.
\newblock Do we really need a learnable classifier at the end of deep neural network?
\newblock 2022.

\bibitem[Yang \& Xu(2020)Yang and Xu]{LT_SSP}
Yuzhe Yang and Zhi Xu.
\newblock Rethinking the value of labels for improving class-imbalanced learning.
\newblock In \emph{NeurIPS}, pp.\  19290--19301, 2020.

\bibitem[Yeh et~al.(2022)Yeh, Hong, Hsu, Liu, Chen, and LeCun]{decoupleContrast}
Chun-Hsiao Yeh, Cheng-Yao Hong, Yen-Chi Hsu, Tyng-Luh Liu, Yubei Chen, and Yann LeCun.
\newblock Decoupled contrastive learning.
\newblock In \emph{ECCV}, pp.\  668--684, 2022.

\bibitem[Zagoruyko \& Komodakis(2016)Zagoruyko and Komodakis]{wideResNet}
Sergey Zagoruyko and Nikos Komodakis.
\newblock Wide residual networks.
\newblock \emph{arXiv preprint arXiv:1605.07146}, 2016.

\bibitem[Zhang et~al.(2017)Zhang, Cisse, Dauphin, and Lopez-Paz]{Mixup}
Hongyi Zhang, Moustapha Cisse, Yann~N Dauphin, and David Lopez-Paz.
\newblock Mixup: Beyond empirical risk minimization.
\newblock \emph{arXiv preprint arXiv:1710.09412}, 2017.

\bibitem[Zhao et~al.(2017)Zhao, Shi, Qi, Wang, and Jia]{PSPNet}
Hengshuang Zhao, Jianping Shi, Xiaojuan Qi, Xiaogang Wang, and Jiaya Jia.
\newblock Pyramid scene parsing network.
\newblock In \emph{CVPR}, pp.\  2881--2890, 2017.

\end{thebibliography}

\newpage
\appendix

\begin{center}
    \Large{\textbf{Appendix: Frustratingly Easy Model Generalization by Dummy Risk Minimization}}
\end{center}

\paragraph{Overview}
This is the Appendix for Submission 4297, named "Frustratingly Easy Model Generalization by Dummy Risk Minimization". Concretely, this file is organized as following:
\begin{itemize}
    \item Proofs to proposed theorems and propositions;
    \item Implementation details to experiments conducted in Sec.~\ref{sec-exp};
    \item Additional experimental results and analysis.
\end{itemize}

\section{Proof}
\subsection{Proof to Thm.~\ref{thm:thm1_app}}\label{proof:thm1}
\begin{theorem}[\method's influence on gradient]
    Denote $g_c$ and $\hat{g}_c$ as the gradient of ERM and \method on class $c$, respectively. $\mathbb{E}$ and $\mathbb{D}$ are the expectation and variance, respectively. Then, the equality of $\mathbb{E}(\widehat{g}_c)=\mathbb{E}(g_c)$ and inequality of $\mathbb{D}(\widehat{g}_c)>\mathbb{D}(g_c)$ hold.
    \label{thm:thm1_app}
\end{theorem}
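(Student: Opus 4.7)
The plan is to write $\widehat{g}_c = g_c + g_d$ following the modeling assumption introduced just before the theorem statement, and then dispatch each claim via elementary moment identities. For the expectation, linearity yields $\mathbb{E}(\widehat{g}_c) = \mathbb{E}(g_c) + \mathbb{E}(g_d)$, and since $g_d\sim\mathcal{N}(0,\sigma_d^2)$ by construction, $\mathbb{E}(g_d)=0$, so the equality $\mathbb{E}(\widehat{g}_c)=\mathbb{E}(g_c)$ drops out immediately. I would place this at the start of the proof as a one-line warm-up to clear the easy half.

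For the variance I would expand
\begin{equation*}
\mathbb{D}(\widehat{g}_c) \;=\; \mathbb{D}(g_c) + \mathbb{D}(g_d) + 2\,\mathrm{Cov}(g_c, g_d),
\end{equation*}
so the strict inequality reduces to showing that the covariance term vanishes; once that holds, $\mathbb{D}(\widehat{g}_c) - \mathbb{D}(g_c) = \sigma_d^2 > 0$ is immediate under the standing non-degeneracy $\sigma_d^2 > 0$. To control the covariance I would use the mixture representation of $g_c$ from \eqref{gc_dist}, writing $g_c = \mathbf{1}_{A}\,\xi_n + (1-\mathbf{1}_{A})\,\xi_p$, where $A$ is the event that the sample lies outside class $c$, $\xi_n\sim\mathcal{N}(-\mu_{c_n},\sigma_{c_n}^2)$, and $\xi_p\sim\mathcal{N}(1-\mu_{c_p},\sigma_{c_p}^2)$. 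Bilinearity of $\mathrm{Cov}$ then collapses the task to showing $\mathrm{Cov}(\xi_n, g_d) = \mathrm{Cov}(\xi_p, g_d) = 0$.

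The main obstacle is justifying these two vanishing covariances; this is the step the informal sketch handles quickly but which quietly invokes an independence assumption. The natural route is to appeal back to the construction in Sec.~\ref{sec:grad}: $g_d$ arises purely from push terms acting on the dummy output neurons, which are structurally disjoint from the class-$c$ logit that drives $\xi_n$ and $\xi_p$, so it is legitimate (and consistent with the paper's unconditional Gaussian modeling of $g_d$) to treat $g_d$ as statistically independent of each sub-Gaussian component of $g_c$. Once independence is granted, $\mathrm{Cov}(\xi_\cdot, g_d) = \mathbb{E}(\xi_\cdot)\mathbb{E}(g_d) - \mathbb{E}(\xi_\cdot)\mathbb{E}(g_d) = 0$, and summing over the two components yields $\mathrm{Cov}(g_c, g_d) = 0$. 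Substituting back gives $\mathbb{D}(\widehat{g}_c) = \mathbb{D}(g_c) + \sigma_d^2 > \mathbb{D}(g_c)$, which is exactly the strict inequality claimed. I would close by noting that the asymptotic conditions $\mu_{c_n}\to 0^+$ and $\mu_{c_p}\to 1^-$ from \eqref{pc_dist} are not actually needed for the variance argument, only for tying the sketch back to the paper's narrative about ``zero-mean sub-Gaussians.''
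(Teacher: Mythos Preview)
Your proposal is correct and shares the paper's high-level skeleton---linearity for the expectation, the expansion $\mathbb{D}(\widehat{g}_c)=\mathbb{D}(g_c)+\mathbb{D}(g_d)+2\,\mathrm{Cov}(g_c,g_d)$, then killing the covariance---but you handle the covariance step differently from the appendix proof. The paper does not argue via independence: instead it computes the pointwise product $f_{g_{c_n}}(g)\cdot f_{g_d}(g)$ of the two Gaussian densities explicitly (a half-page calculation), reads off the mean of the resulting unnormalized Gaussian as $(-\mu_{c_n}\sigma_d^2+\mu_d\sigma_{c_n}^2)/(\sigma_d^2+\sigma_{c_n}^2)$, and then uses \emph{both} $\mu_d=0$ and the asymptotic limits $\mu_{c_n}\to 0^+$, $\mu_{c_p}\to 1^-$ from \eqref{pc_dist} to force each cross-term to zero. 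Your route is shorter and more transparent, and as you correctly observe it does not need the asymptotic mean conditions at all---only $\mathbb{E}(g_d)=0$ matters. The trade-off is that you promote the independence of $g_d$ from $g_c$ to an explicit modeling assumption (which the setup in Sec.~\ref{sec:grad} supports but never states formally), whereas the paper tries to compute its way to zero without naming it. One small wrinkle in your write-up: bilinearity applied to $g_c=\mathbf{1}_A\xi_n+(1-\mathbf{1}_A)\xi_p$ gives $\mathrm{Cov}(\mathbf{1}_A\xi_n,g_d)+\mathrm{Cov}((1-\mathbf{1}_A)\xi_p,g_d)$, not $\mathrm{Cov}(\xi_n,g_d)$ and $\mathrm{Cov}(\xi_p,g_d)$, since the indicator $\mathbf{1}_A$ is itself random; but once $g_d$ is independent of all of $g_c$ you get $\mathrm{Cov}(g_c,g_d)=0$ in one line anyway, so the mixture decomposition is superfluous and the conclusion stands.
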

\begin{proof}
    To prove Thm.~\ref{thm:thm1_app}, let us start from understanding Eq.~\eqref{ce_grad}. Given a data point sampled $\mathbf{x}$ along with its one-hot label $\mathbf{y}$, where its scale value $\mathbf{y}_k=1$, the classifier $h$ maps the $\mathbf{x}$ into logits $\mathbf{z}$. Then, a softmax further transfers $\mathbf{z}$ into probability $\mathbf{p}$, where each elem $\mathbf{p}_c$ is the model confidence to $\mathbf{x}$ of being class $c$. Therefore, the gradient $g$ of cross entropy loss to logits $\mathbf{z}$ can be derived as:
    \begin{equation}
        \frac{\partial \ell}{\partial \mathbf{z}} =\frac{\partial \ell}{\partial \mathbf{p}}\cdot\frac{\partial \mathbf{p}}{\partial \mathbf{z}}=  -(\mathbf{p}-\mathbf{y}).
        \label{ce_loss_grad_derivation}
    \end{equation}
    When given the whole dataset $\mathcal{D}$, the gradient for class $k$ can be summed as:
    \begin{equation}
        -g_{k} = \sum_{N}\frac{\partial \ell }{\partial z_{k}}=\overbrace{ \sum^{N_{\bar{k}}}_{i=1,c_i\ne k}\mathbf{p}^{(i)}_{k}}^{\text{push term}}+\overbrace{\sum^{N_{k}}_{j=1,c_j=k}(\mathbf{p}^{(j)}_{k}-1)}^{\text{pull term}}=\mathcal{F}_{\text{push}}(\mathbf{z}_{k})+\mathcal{F}_{\text{pull}}(\mathbf{z}_{k}),
        \label{app:ce_grad}
    \end{equation}
    which is Eq.~\eqref{ce_grad} in the main text.
    Then, let us put gradient Eq.~\eqref{app:ce_grad} aside first.
    The prediction probability distribution $P(\mathbf{p}_c)$ is approximated as:
    \begin{equation}
        P(\mathbf{p}_c)=\alpha \cdot \mathcal{N}(\mu_{c_n},\sigma_{c_n}^2)+(1-\alpha) \cdot \mathcal{N}(\mu_{c_p},\sigma_{c_p}^2).
        \label{app:p_dist}
    \end{equation}
    Let us combine Eq.~\eqref{app:p_dist} with \eqref{app:ce_grad}, then, the distribution of gradient $P(\mathbf{g}_c)$ for class $c$ can be derived as:
    \begin{equation}
        P(g_c)=\alpha \cdot \mathcal{N}(-\mu_{c_n},\sigma_{c_n}^2)+(1-\alpha) \cdot \mathcal{N}(1-\mu_{c_p},\sigma_{c_p}^2).
    \end{equation}
    Thus, we model the $g_c\sim \alpha \cdot \mathcal{N}(0, \sigma_{c_n}^2) + (1-\alpha)\cdot \mathcal{N}(0, \sigma_{c_p}^2)$, where the two sub-Gaussian are independent. Then, when adding \method upon ERM, there goes with a gradient regularization of $g_d\sim \mathcal{N}(0, \sigma_d^2)$. A gradient of $\widehat{g}_c$ for whole \method gradient can be derived as
    \begin{align}
        \widehat{g}_c = g_c+g_d & \Rightarrow \mathbb{E}(\widehat{g}_c) = \mathbb{E}(g_c)\notag \\
        \mathbb{D}(\widehat{g}_c)&=\mathbb{D}(g_c)+\mathbb{D}(g_d)+2\text{Cov}(g_c,g_d)\notag \\
        &=\alpha^2\sigma_{c_n}^2+(1-\alpha)^2\sigma_{c_p}^2 + \sigma_d^2+\notag\\
        &\quad  2\alpha\left [ \mathbb{E}(g_{c_n}\cdot g_d) -\mathbb{E}(g_{c_n})\cdot \mathbb{E}(g_d))\right ] +\notag\\
        &\quad  2(1-\alpha )\left [ \mathbb{E}(g_{c_p}\cdot g_d) -\mathbb{E}(g_{c_p})\cdot \mathbb{E}(g_d)) \right ], 
        \label{grad_hat}
    \end{align}
    where $\mathbb{E}(g_{c_n}), \mathbb{E}(g_{c_p}), \mathbb{E}(g_d)$ equal to zero. To further understand Eq.~\eqref{grad_hat}, we need to put our concentration on $\mathbb{E}(g_{c_n}\cdot g_d)$ and $\mathbb{E}(g_{c_p}\cdot g_d)$. To derive the item, we analyze the general form which is the product between two Gaussian distribution.

    Given two Gaussian distribution, whose probability density function is as Eq.~\eqref{product}, which is also their product result.
    \begin{align}
        f_p(g)\cdot f_n(g)&=\frac{1}{2\pi\sigma_p\sigma_n }\exp\left [ -\left(\frac{(g-\mu_p)^2}{2\sigma _p^2}+ \frac{(g-\mu_n)^2}{2\sigma _n^2} \right) \right ]\notag \\
        &= \frac{1}{2\pi\sigma_p\sigma_n }\exp\left [ \frac{(\sigma_n^2+\sigma_p^2)g^2-2(\mu_p\sigma_n^2+\mu_n\sigma_p^2)g+(\mu_p^2\sigma_p^2+\mu_n^2\sigma_p^2)}{2\sigma_n^2\sigma_p^2}  \right ]\notag\\
        &= \frac{1}{2\pi\sigma_p\sigma_n }\exp\left [\frac{g^2-2g\frac{\mu_p\sigma_n^2+\mu_n\sigma_p^2}{\sigma_n^2+\sigma_p^2}+\frac{\mu_p^2\sigma_n^2+\mu_n^2\sigma_p^2}{\sigma_n^2+\sigma_p^2}}{\frac{2\sigma _n^2\sigma_p^2}{\sigma_n^2+\sigma_p^2}} \right]\notag\\
        &=\frac{1}{2\pi\sigma_p\sigma_n }\exp\left [\frac{\left(g-\frac{\mu_n\sigma_p^2+\mu_p\sigma_n^2}{\sigma_n^2+\sigma_p^2} \right)^2+\frac{\mu_n\sigma_p^2+\mu_p\sigma_n^2}{\sigma_n^2+\sigma_p^2}- \left(\frac{\mu_n\sigma_p^2+\mu_p\sigma_n^2}{\sigma_n^2+\sigma_p^2}\right)^2}{\frac{2\sigma _n^2\sigma_p^2}{\sigma_n^2+\sigma_p^2}}    \right]\notag\\
        &=\frac{1}{2\pi\sigma_p\sigma_n }\exp\left [\frac{ \left(g-\frac{\mu_n\sigma_p^2+\mu_p\sigma_n^2}{\sigma_n^2+\sigma_p^2} \right)^2}{\frac{2\sigma _n^2\sigma_p^2}{\sigma_n^2+\sigma_p^2}}+    \frac{\frac{\mu_n\sigma_p^2+\mu_p\sigma_n^2}{\sigma_n^2+\sigma_p^2}-\left(\frac{\mu_n\sigma_p^2+\mu_p\sigma_n^2}{\sigma_n^2+\sigma_p^2} \right)^2}{\frac{2\sigma _n^2\sigma_p^2}{\sigma_n^2+\sigma_p^2}}\right]\notag\\
        &=\frac{1}{2\pi\sigma_p\sigma_n }\exp\left [\frac{ \left(g-\frac{\mu_n\sigma_p^2+\mu_p\sigma_n^2}{\sigma_n^2+\sigma_p^2} \right)^2}{\frac{2\sigma _n^2\sigma_p^2}{\sigma_n^2+\sigma_p^2}}+  \frac{(\mu_n^2\sigma_p^2+\mu_p^2\sigma_n^2)(\sigma_n^2+\sigma_p^2)-(\mu_n\sigma_p^2+\mu_p\sigma_n^2)^2}{2\sigma_n^2\sigma_p^2(\sigma_n^2+\sigma_p^2)}\right]\notag\\
        &=\frac{1}{2\pi\sigma_p\sigma_n }\exp\left [\frac{ \left(g-\frac{\mu_n\sigma_p^2+\mu_p\sigma_n^2}{\sigma_n^2+\sigma_p^2} \right)^2}{\frac{2\sigma _n^2\sigma_p^2}{\sigma_n^2+\sigma_p^2}}+  \frac{(\mu_p-\mu_n)^2}{2\sigma_p^2+2\sigma_n^2}\right] \notag \\  
        &=\frac{1}{2\pi\sigma_p\sigma_n }\exp\left [\frac{ \left(g-\frac{\mu_n\sigma_p^2+\mu_p\sigma_n^2}{\sigma_n^2+\sigma_p^2} \right)^2}{\frac{2\sigma _n^2\sigma_p^2}{\sigma_n^2+\sigma_p^2}}\right]\cdot\exp\left[\frac{(\mu_p-\mu_n)^2}{2\sigma_p^2+2\sigma_n^2}\right].
        \label{product}
    \end{align}
    According to Eq.~\eqref{product}, let us substitute $g_{c_n}, g_{c_p}, g_d$ into it and derive:
    \begin{align}
        \mathbb{E}(g_{c_n}\cdot g_d) & = C_n\cdot\frac{-\mu_{c_n}\sigma_d^2+\mu_d\sigma_{c_n}^2}{\sigma_d^2+\sigma_{c_n}^2}=0\notag \\
        \mathbb{E}(g_{c_p}\cdot g_d) & = C_p\cdot\frac{(1-\mu_{c_p})\sigma_d^2+\mu_{d}\sigma_{c_p}^2}{\sigma_{d}^2+\sigma_{c_p}^2}=0,
        \label{expectation}
    \end{align}
    where $C_p, C_n$ are two constants. 
    Let us put Eq.~\eqref{expectation} back to Eq.~\eqref{grad_hat}, we derive :·
    \begin{equation}
        \mathbb{D}(\widehat{g}_c)=\alpha^2\sigma_{c_n}^2+(1-\alpha)^2\sigma_{c_p}^2 + \sigma_d^2\ge \mathbb{D}(g_c),
    \end{equation}
    where the equality holds only when $\sigma_d=0$.
    
\end{proof}
\subsection{Proof to Prop.~\ref{hessian}}\label{proof:prop1}
\begin{proposition}
    \label{hessian}
    Let $\mathbf{H}(\mathbf{w})$ be the Hessian matrix for the model with parameter $\mathbf{w}$. With the definition of eigenvector $\mathbf{s}$ and greatest eigenvalue $\rho$ of $\mathbf{H}(\mathbf{w})\mathbf{s}=\rho\mathbf{s}$, there lies a positive correlation of $\rho\propto g$. 
\end{proposition}

\begin{proof}
    To prove the above proposition, we need to expand the conclusion of Thm.~\ref{thm:thm1_app} to all model layers. According to the chain gradient, the forward process can be denoted as $\mathbf{x}_{(i+1)}=\mathbf{w}_{(i)}^{\small{\text{T}}}\mathbf{x}_{(i)} + b_{(i)}$. Thus, the gradient is represented as $\mathbf{w}_{(i)}^{\small{\text{T}}}\cdot \textbf{g}_{(i)}$. Given two logits gradients of $\textbf{g}$ for ERM and $\widehat{\textbf{g}}$ for \method, we have $\widehat{\textbf{g}}=\textbf{g}+\textbf{g}_d$ according to Thm.~\ref{thm:thm1_app}. Then, the gradient for penultimate layer fraction between \method and ERM can be derived as assuming it is bigger than 1 at first:
    \begin{align}
        \frac{\mathbb{D}_w^2\cdot (\mathbb{D}_g+\mathbb{D}_d)^2}{\mathbb{D}_w^2+(\mathbb{D}_g+\mathbb{D}_d)^2}\cdot\frac{\mathbb{D}_w^2+\mathbb{D}_g^2}{\mathbb{D}_w^2\cdot\mathbb{D}_g^2}&\ge1\notag\\
        \frac{(1+\frac{\mathbb{D}_d}{\mathbb{D}_g})^2}{1+\frac{2\mathbb{D}_g\mathbb{D}_d+\mathbb{D}_d^2}{\mathbb{D}_w^2+\mathbb{D}_g^2}}&\ge1\notag\\
        \frac{2\mathbb{D}_d}{\mathbb{D}_g}+\frac{\mathbb{D}_d^2}{\mathbb{D}_g^2}&\ge\frac{2\mathbb{D}_d\mathbb{D}_g+\mathbb{D}_d^2}{\mathbb{D}_w^2+\mathbb{D}_g^2}\notag\\
        \mathbb{D}_w^2+\mathbb{D}_g^2&\ge\mathbb{D}_g^2,
        \label{app:alllayer}
    \end{align}
    which obviously holds. Thus, the assumption also holds. The Eq.~\eqref{app:alllayer} shows the consistency properties between the logits gradient with model gradient.
    
    By now, we are ready to prove the above proposition.
    Given the Hessian matrix of $\mathbf{H}(w)$, the eigenvector $s$ can be defined as $\mathbf{H}(\mathbf{w})\mathbf{s}=\rho\mathbf{s}$, where $\rho$ is its corresponding greatest eigenvalue. 
    Recall that the gradient descent (GD) can be formulated as:
    \begin{equation}
        \mathbf{w}+\mathbf{v} = \mathbf{w}-\nabla \ell(w),
    \end{equation}
    where $\mathbf{v}$ denotes one weight step after one GD step. The updated weight is thus represented as $\mathbf{w}+\mathbf{v}$. Then, let us apply a two-order Taylor expansion to the loss function of $\mathbf{w}+\mathbf{v}$:
    \begin{equation}
        \ell(\mathbf{w}+\mathbf{v})=\ell(\mathbf{w})+\mathbf{v}^{\small{\text{T}}}\mathbf{J}(\mathbf{w})+\frac{1}{2}\mathbf{v}^{\small{\text{T}}}\mathbf{H}(\mathbf{w})\mathbf{v} + o(\left \| \mathbf{v} \right \|^2 ).
        \label{app:taylor}
    \end{equation}
    Based on Eq.~\eqref{app:taylor}, a smaller $\ell(\mathbf{w}+\mathbf{v})$ could be achieved when the right hand term is less than or equal to zero. Then, when we pick $\mathbf{s}$ as DG direction, let us represent weight step into the eigenvector with greatest eigenvalue of Hessian matrix as $\left \| \mathbf{v} \right \|\cdot\mathbf{s}$. Such a process can be formulated as:
    \begin{align}
        \mathbf{v}^{\small{\text{T}}}\mathbf{J}(\mathbf{w})+\frac{1}{2}\mathbf{v}^{\small{\text{T}}}\mathbf{H}(\mathbf{w})\mathbf{v}&\le0\notag\\
                 \mathbf{s}^{\small{\text{T}}}\cdot \mathbf{J}(\mathbf{w}) + \frac{1}{2}\rho \cdot\mathbf{s}^{\small{\text{T}}}\cdot\mathbf{s}&\le0\notag\\
                 \mathbf{s}^{\small{\text{T}}}\cdot \mathbf{J}(\mathbf{w}) +\frac{1}{2}\rho\cdot1&\le0\notag\\
        \left \| \mathbf{s}^{\small{\text{T}}}\cdot \mathbf{J}(\mathbf{w}) \right \|\cdot\frac{\mathbf{s}^{\small{\text{T}}}}{\left \| \mathbf{s}^{\small{\text{T}}} \right \| }\cdot\frac{\mathbf{J}(\mathbf{w})}{\left \| \mathbf{J}(\mathbf{w}) \right \| } +\frac{1}{2}&\rho\le0
    \end{align}
     Later, when $\rho$ is larger, the gradient projection on $\mathbf{s}$ becomes larger, which makes gradient larger.
\end{proof}

\subsection{Proof to Thm.~\ref{thm: p_grad}}\label{proof:thm2}
\begin{theorem}
    \label{thm: p_grad}
    Given gradients $g\sim \mathcal{N}(\mu_1, \sigma_1^2) $ and $\widehat{g}\sim \mathcal{N}(\mu_2, \sigma_2^2) $, where $\mu_1=\mu_2$ and $\sigma_1<\sigma_2$. Assume gradient descent has $T$ steps. The inequality between minimum order statistics of empirical gradients holds with a high probability of $P(g_{(1)}\ge\widehat{g}_{(1)})=\frac{1}{2}+T\int [1-F_g(\widehat{g})]^T-[1-F_{\widehat{g}}(\widehat{g})]^T\cdot (1-F_{\widehat{g}_{(1)}}) ^{T-1}\mathrm{d}F_{\widehat{g}_{(1)}} $, such a probability owns a numerical solution of $0.5+\Theta(\frac{\sigma_1}{\sigma_2})$, where $\Theta(\frac{\sigma_1}{\sigma_2})\ge0, \forall \sigma_1<\sigma_2\in \mathbb{R}$.   
\end{theorem}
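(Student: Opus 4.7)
The plan is to compute $P(g_{(1)}\ge\widehat{g}_{(1)})$ directly by conditioning on $\widehat{g}_{(1)}$, then split off a ``symmetric'' piece that evaluates to exactly $\tfrac12$, so that any imbalance due to $\sigma_1<\sigma_2$ appears as a non-negative correction $\Theta(\sigma_1/\sigma_2)$. Using $P(g_{(1)}\ge y)=(1-F_g(y))^T$ and $\mathrm{d}F_{\widehat{g}_{(1)}}(y)=T(1-F_{\widehat{g}}(y))^{T-1}f_{\widehat{g}}(y)\,\mathrm{d}y$, the probability becomes $\int(1-F_g(y))^T\,\mathrm{d}F_{\widehat{g}_{(1)}}(y)$. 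I would then add and subtract $(1-F_{\widehat{g}}(y))^T$ inside the integrand to produce the two summands appearing in the stated formula.

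For the first (``symmetric'') summand, $\int(1-F_{\widehat{g}}(y))^T\,\mathrm{d}F_{\widehat{g}_{(1)}}(y)$, I would use $1-F_{\widehat{g}_{(1)}}=(1-F_{\widehat{g}})^T$ together with the textbook identity $\int(1-F)\,\mathrm{d}F=\tfrac12$ to collapse it to $\tfrac12$, which supplies the leading constant. The whole proof then reduces to showing that the residual $\Theta(\sigma_1/\sigma_2)=\int\bigl[(1-F_g(y))^T-(1-F_{\widehat{g}}(y))^T\bigr]\,\mathrm{d}F_{\widehat{g}_{(1)}}(y)$ is non-negative for all $\sigma_1<\sigma_2$ (and vanishes in the degenerate cases $T=1$ or $\sigma_1=\sigma_2$).

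To get non-negativity I would exploit Gaussian symmetry about the common mean $\mu$. Because $\mu_1=\mu_2$ and $\sigma_1<\sigma_2$, the bracket $(1-F_g)^T-(1-F_{\widehat{g}})^T$ is positive on $(-\infty,\mu)$ and negative on $(\mu,\infty)$, so it is natural to pair reflected points $y=\mu\mp v$ for $v>0$. Writing $a(v)=\Phi(v/\sigma_1)$ and $b(v)=\Phi(v/\sigma_2)$ with $a>b>\tfrac12$, the positive contribution at $\mu-v$ has magnitude $(a^T-b^T)\cdot Tb^{T-1}\phi(v/\sigma_2)/\sigma_2$, while the negative contribution at $\mu+v$ has magnitude $((1-b)^T-(1-a)^T)\cdot T(1-b)^{T-1}\phi(v/\sigma_2)/\sigma_2$. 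A mean-value argument on $x\mapsto x^T$, comparing the intermediate points on $[b,a]\subset(\tfrac12,1)$ with those on $[1-a,1-b]\subset(0,\tfrac12)$, yields $a^T-b^T\ge (1-b)^T-(1-a)^T$, and separately $b^{T-1}\ge(1-b)^{T-1}$ since $b>\tfrac12$; multiplying these two inequalities produces a pointwise domination of the left-tail integrand over its right-tail reflection, which integrates to $\Theta\ge 0$. The main obstacle is precisely this final pointwise comparison: neither the magnitude factor nor the weight factor dominates trivially, and the two inequalities must be combined carefully under the $v$-reparametrization; once that is in place, the rest is routine order-statistic calculus and a change of variables.
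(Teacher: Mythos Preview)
Your decomposition is exactly the paper's: condition on $\widehat{g}_{(1)}$, use $P(g_{(1)}\ge y)=(1-F_g(y))^T$ and $\mathrm{d}F_{\widehat{g}_{(1)}}=T(1-F_{\widehat{g}})^{T-1}\,\mathrm{d}F_{\widehat{g}}$, then add and subtract $(1-F_{\widehat{g}})^T$ so the ``symmetric'' piece collapses to $\int_0^1 T(1-u)^{2T-1}\,\mathrm{d}u=\tfrac12$. Up to this point the two proofs are identical.

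Where you depart from the paper is in handling the residual $\Theta=\int\bigl[(1-F_g)^T-(1-F_{\widehat{g}})^T\bigr]\,\mathrm{d}F_{\widehat{g}_{(1)}}$. The paper does \emph{not} give an analytical proof that $\Theta\ge 0$: it observes qualitatively that the bracket changes sign at $\mu$, argues loosely that the weight skews the integral in the right direction, and then certifies positivity by numerical simulation over a grid of $(\sigma_1,\sigma_2)$. Your reflection argument---pairing $y=\mu-v$ with $y=\mu+v$, setting $a=\Phi(v/\sigma_1)>b=\Phi(v/\sigma_2)>\tfrac12$, and combining the mean-value inequality $a^T-b^T\ge(1-b)^T-(1-a)^T$ with the weight inequality $b^{T-1}\ge(1-b)^{T-1}$---gives a clean pointwise domination that the paper lacks. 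This buys you an honest analytical proof of $\Theta\ge 0$ for all $T\ge 1$ and all $\sigma_1<\sigma_2$, whereas the paper's route only delivers numerical evidence; the cost is the extra care you flag in matching the two factors under the $v$-substitution, but that step is elementary once written out.
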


\begin{proof}
    Given two minimal order statistics gradient of $g_{(1)}$ for ERM and $\widehat{g}_{(1)}$, the two models with the same initialization state are updated via $N$ steps to sample $N$ gradients. Moreover, we have $g\sim \mathcal{N}(\mu_1, \sigma_1^2) $ and $\widehat{g}\sim \mathcal{N}(\mu_2, \sigma_2^2) $, where $\mu_1=\mu_2$ and $\sigma_1<\sigma_2$. The probability density function and cumulative distribution function of minimal order statistics to a distribution variable can be derived as:
    \begin{align}
        F_{g_{(1)}}(g)=P(g_{(1)}\le g)&=1-P(g_{(1)}> g)\notag\\
        &=1-P(g_1>g,g_2>g,\cdots,g_N>g)\notag\\
        &=1-P(g_1>g)\cdot P(g_2>g)\cdots P(g_N>g)\notag\\
        &=1-[1-F(g)]^N\notag\\
        f_{g_{(1)}}(g)&=\frac{\mathrm{d} F_{g_{(1)}}(g)}{\mathrm{d} g}=N\cdot[1-F(g)]^{N-1}\cdot f(g).
    \end{align}
    Then, we can derive the probability of $P(g_{(1)}>\widehat{g}_{(1)})$ of Eq.~\eqref{derive}:
    \begin{align}
        \label{derive}
        P(g_{(1)}>\widehat{g}_{(1)}) & = P[I(g>\widehat{g} )|\widehat{g}] = \int P(g_{(1)}>\widehat{g}_{(1)}|\widehat{g}_{(1)} = \widehat{g})\cdot f_{\widehat{g}_{(1)}}(\widehat{g})\mathrm{d}\widehat{g}\notag \\
        &=\int [1-\int_{-\infty }^{g}f_{g_{(1)}}(g)\mathrm{d}g ]\cdot f_{\widehat{g}_{(1)}}(\widehat{g})\mathrm{d}\widehat{g}\notag \\
        &=\int N[1-F_g(\widehat{g} )]^N\cdot[1-F_{\widehat{g}}(\widehat{g})]^{N-1}\cdot f_{\widehat{g}}(\widehat{g})\mathrm{d}\widehat{g}\notag \\
        &=N\int_{0}^{1} [1-F_g(\widehat{g} )]^N\cdot[1-F_{\widehat{g}}(\widehat{g})]^{N-1}\cdot \mathrm{d}F_{\widehat{g} }(\widehat{g})\notag \\
        &=N\int_{0}^{1} [1-F_{\widehat{g}}(\widehat{g})]^{2N-1} \mathrm{d}F_{\widehat{g} }(\widehat{g})+N\int_{0}^{1}\Delta F\mathrm{d}F_{\widehat{g} }(\widehat{g})\notag \\
        &=\frac{1}{2}+N\int_{0}^{1}\Delta F\mathrm{d}F_{\widehat{g} }(\widehat{g})\ge\frac{1}{2};
    \end{align}
    in which $\Delta F$ in Eq.~\eqref{derive} has been represented by and has an inequality of Eq.~\eqref{delta_F}:
    \begin{equation}
        \Delta F=[1-F_g(\widehat{g})]^N-[1-F_{\widehat{g}}(\widehat{g})]^N.
        \label{delta_F}
    \end{equation}
    At next, we need to calculate the term of $N\int_{0}^{1}\Delta F\mathrm{d}F_{\widehat{g} }(\widehat{g})$. Due to the existence of the CDF of Gaussian distribution, this term could not be derived an analytical solution. To this end, we obtain its numerical solution as a number always larger than 0. To prove that, we conduct analysis from two perspective. On one hand, as shown in \figureautorefname~\ref{fig:minusCDF}, when a reversed distribution with larger variance subtracts another distribution, such a value is lower than 0 on the left side of $\mu$, which is greater than 0 on the right side of $\mu$. What' s more, when multiplying this scalar with the differentiation of CDF, which is growing larger as $x$-axis, the final summed results is always greater than 0. On the other hand, we simulate the numerical solution of \figureautorefname~\ref{fig:NS}. We firstly divide $N\int_{0}^{1}[1-F_{\widehat{g}}(\widehat{g})]^{2N-1} \mathrm{d}F_{\widehat{g} }(\widehat{g})+N\int_{0}^{1}\Delta F\mathrm{d}F_{\widehat{g} }(\widehat{g})$ into two terms, which are the fixed one $N\int_{0}^{1}[1-F_{\widehat{g}}(\widehat{g})]^{2N-1} \mathrm{d}F_{\widehat{g} }(\widehat{g})$ and the varying one of $N\int_{0}^{1}\Delta F\mathrm{d}F_{\widehat{g} }(\widehat{g})$. As the result shown, we have the above results and theorem holds.
\end{proof}

\begin{figure}[t!]
\centering
\subfigure[Cumulative Distribution Func.]{
\begin{minipage}[t]{0.30\linewidth}
\centering
\includegraphics[height=.8\textwidth]{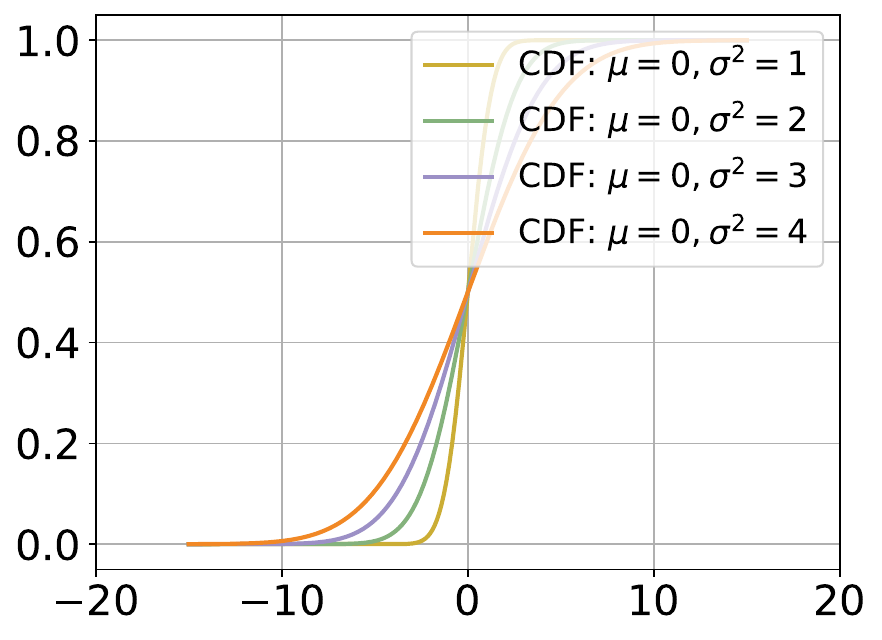}
% \label{fig:CDF}
\end{minipage}%
}
\hfill
\subfigure[Subtractions of two CDFs.]{
\begin{minipage}[t]{0.30\linewidth}
\centering
\includegraphics[height=.8\textwidth]{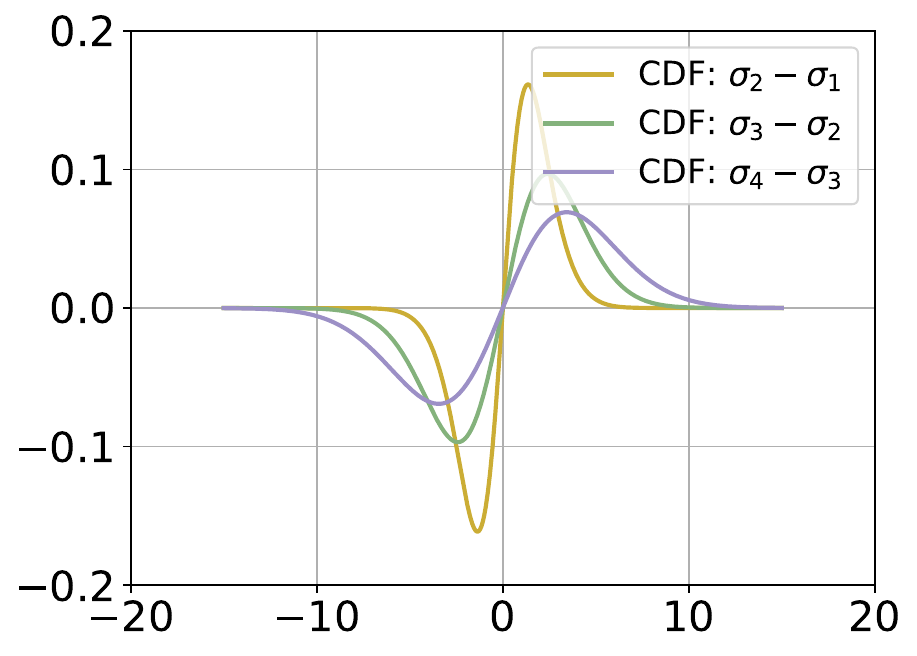}

\end{minipage}%
}%
\hfill
\subfigure[The Numerical Solution]{
\begin{minipage}[t]{0.30\linewidth}
\centering
\includegraphics[height=.8\textwidth]{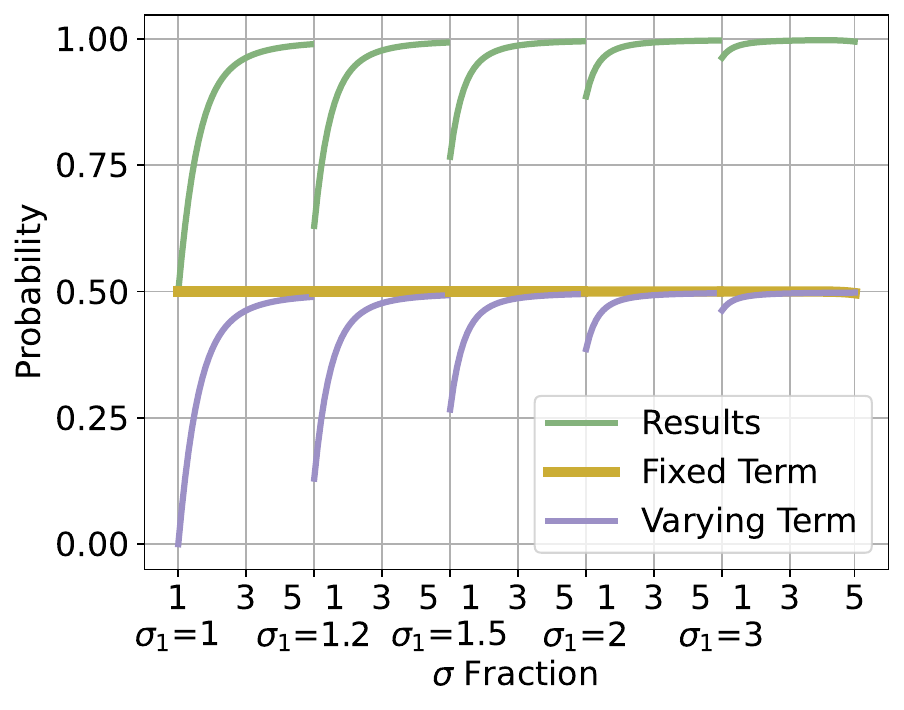}
\label{fig:NS}
\vspace{-.1in}

\end{minipage}%
}%
\centering
\vspace{-.1in}
\caption{(a) The CDF of four kinds of Gaussian distributions with different variance. (b) The distribution of subtractions between two reversed CDF-s ($1-F(g)$), in which the distribution with bigger variance is as the minuend. (c) The numerical solution of Eq.~\eqref{derive}. The Eq.~\eqref{derive} can be divided into two terms, namely the fixed one and the varying one. Then, the final results are recorded in the figure. }
% \vspace{-.1in}
\label{fig:minusCDF}
\end{figure}

\section{Implementation Details}
\subsection{Conventional Classification}
For conventional classification tasks, we implement experiments of ImageNet-1K\cite{imagenet} on one NVIDIA A100 with a total memory of 80G. For dataset preparation details, an image resolution of 224, an augmentation strategy of RandAug are utilized. For training details, a batch size of 512, a learning rate of 0.05, a rate decay of Step Scheduler, an optimizer of SGD along with momentum and $\ell_2$ regularization, an epoch of 90 are utilized.
For downstream vision classification datasets, we implement experiments on several NVIDIA Tesla V100 with memory below 16G. For dataset preparation, the CIFAR-10\cite{CIFAR} and CIFAR-100\cite{CIFAR} have resolution of 32 and others vision have resolution of 224. For training details, a batch size of 128, a learning rate of 0.01, an optimizer of SGD along with momentum and $\ell_2$ regularization, an epoch of 120 are adopted.
For classification on THUCNews and UrbanSound8K, we follow the implementation of \cite{THUCNews} and \cite{Urbansound}.
\paragraph{Adversarial Robustness}
For training details, we utilize the same parameters to the experiments to conventional classification. Concretely, for FGSM setting, we follow the work in \cite{FGSM}. For PGD setting, we follow the work in \cite{PGD}.
\paragraph{Long-tailed Recognition}
For training details, we utilize the same parameters to the experiments to conventional classification. As for dataset settings, we totally follow the works in \cite{LT_Kaihua}.

\subsection{Semantic Segmentation}
The semantic segmentation tasks are implemented on one NVIDIA A100 with a total memory below 80G. And all of our implementations are following \cite{MMSeg} default with only changing the number of output dimension to deploy \method. Concretely, for FCN-ResNet-101\cite{FCN}, HRNet-48\cite{HRNet} and PSPNet-ResNet18\cite{PSPNet}, we utilize a crop size of $512\times 1024$. An optimizer of SGD with a learning rate of 0.01, a momentum of 0.9 and a weight decay of 0.0005. The \emph{PolyLR} learning rate decay is also adopted with a minimum of 0.0001, a power of 0.9. Then, we train the model with 40000 iteration, among which the validation is deployed for every 4000 iterations. For Deeplab-ResNet101\cite{Deeplab}, a crop size of $512\times 512$ is leveraged. For Segformer B0 and B5\cite{Segformer}, a crop size of $1024\times 1024$ is utilized. An optimizer of AdamW along with learning rate of 0.00006, betas of (0.9, 0.99) and a weight decay of 0.01 are adopted. Then, we use a \emph{LinearLR} for learning rate decay with a start factor of 1.6. Later, the models are trained with 160000 iterations.

\subsection{OOD Generalization}
The experiments are implemented on one NVIDIA V100 with a total memory below 16G. Following the task setting of \cite{DomainBed}, each dataset is composed of four different domains. We run four times for one dataset to realize one domain for testing while the left three for training. For dataset preparation details, an image resolution of 224, an augmentation of RandAug are utilized. For training details, a batch size of 64, a learning rate of 0.01, an optimizer of SGD along with momentum, an epoch of 90 for each setting are utilized. The model selection is based on validation set divided from training set.

\section{Experimental Results}
\subsection{Detailed Classification Results}\label{app:clas}
In this subsection, we array the whole classification results of our three versions of \method in \tableautorefname~\ref{app:classification}. Concretely, we compare \method with three kinds of dummy class settings varying from 1 to 3. Then, the other experimental settings are kept as the same as \tableautorefname~\ref{tb-classification}, in which only \method-2 is conducted to verify the effectiveness the the devised \method. 
As shown in \tableautorefname~\ref{app:classification}, our \method can outperform ERM on all datasets with different backbone models. Among them, the greatest achievement 1.38$\%$ is obtained by experiments on dataset of Flower-102\cite{Flower} with a backbone of ViT-Tiny\cite{ViT} and \method-2. Please notice that our \method is almost a free lunch method. Thus, to achieve such an improvement is non-trivial.
\begin{table}
\centering
\caption{Classification accuracy on different datasets. ERM is the compared method, while DuRM-x is the different realize to our method, in which the x denotes counts of dummy class.}
\label{app:classification}
\resizebox{1.\textwidth}{!}{
\begin{tblr}{
  cells = {c},
  cell{5}{1} = {r=3}{},
  cell{8}{1} = {r=3}{},
  cell{11}{1} = {r=3}{},
  cell{14}{1} = {r=3}{},
  cell{17}{1} = {r=3}{},
  vline{4,7} = {-}{},
  hline{1-2,5,8,11,14,17,20} = {-}{},
}
Datasets     & Model       & ERM (\%)       & DuRM-1(\%)              & DuRM-2(\%)              & DuRM-3(\%)               & Improvement (\%)     \\
ImageNet-1K  & ResNet-18   & 69.74$\pm0.09$ & \textbf{69.87$\pm0.06$} & 69.76$\pm0.06$          & 69.68$\pm0.11$           & 0.13 / 0.02 /-0.06   \\
THUCNews     & Transformer & 89.55$\pm0.46$ & \textbf{90.08$\pm0.27$} & 89.92$\pm0.31$          & 90.02$\pm0.13$           & ~0.53 / 0.37 / 0.47  \\
UrbanSound8K & LSTM        & 65.16$\pm0.08$ & 66.13$\pm0.17$          & 66.00$\pm1.01$          & 64.96$\pm1.18$           & ~0.97 / 0.84 /-0.20  \\
CIFAR-10     & ResNet-18   & 82.08$\pm0.34$ & \textbf{82.47$\pm0.28$} & 82.11$\pm0.30$          & 82.31$\pm0.27$           & ~0.39 / 0.03 / 0.23~ \\
             & ResNet-50   & 84.24$\pm0.22$ & 84.22$\pm0.14$          & 84.10$\pm0.57$          & \textbf{84.25$\pm 0.32$} & -0.02 /-0.14 / 0.01  \\
             & ViT-Tiny    & 92.79$\pm0.67$ & 92.55$\pm0.86$          & \textbf{94.38$\pm0.44$} & 93.77$\pm0.31$           & -0.24 / 1.59 / 0.98  \\
CIFAR-100    & ResNet-18   & 58.42$\pm0.33$ & 58.15$\pm0.09$          & \textbf{58.50$\pm0.38$} & 58.29$\pm0.03$           & -0.27 / 0.08 /-0.13  \\
             & ResNet-50   & 60.17$\pm2.06$ & 58.32$\pm1.28$          & \textbf{61.03$\pm0.11$} & 60.33$\pm1.56$           & -1.85 / 0.86 / 0.16  \\
             & ResNext-50  & 65.02$\pm0.35$ & 64.90$\pm0.24$          & 64.44$\pm0.10$          & \textbf{65.32$\pm0.20$}  & -0.12 /-0.58 / 0.30  \\
STL-10       & ResNet-18   & 85.20$\pm0.75$ & 85.56$\pm1.02$          & 86.21$\pm0.73$          & \textbf{86.55$\pm0.38$}  & ~0.45 / 1.01 / 1.35  \\
             & Swin-Tiny   & 95.55$\pm0.47$ & 95.56$\pm0.69$          & \textbf{96.33$\pm0.14$} & 95.61$\pm0.25$           & ~0.01 / 0.78 / 0.06  \\
             & MobileViT-S & 94.32$\pm0.15$ & 95.20$\pm0.49$          & 95.03$\pm0.53$          & \textbf{95.31$\pm0.71$}  & ~0.88 / 0.71 / 0.99  \\
Oxford-Pet   & ResNet-18   & 90.23$\pm0.19$ & 90.13$\pm0.30$          & 90.30$\pm0.21$          & \textbf{90.38$\pm0.27$}  & -0.10 / 0.07 / 0.15  \\
             & ResNext-50  & 93.51$\pm0.16$ & 93.57$\pm0.18$          & 93.57$\pm0.17 $         & \textbf{93.58$\pm0.04$}  & ~0.06 / 0.06 / 0.07  \\
             & MLP-Mixer-B & 85.06$\pm0.33$ & \textbf{85.62$\pm0.29$} & 84.71$\pm0.47$          & 85.52$\pm0.62$           & ~0.56 /-0.35 / 0.46  \\
Flower-102   & ResNet-18   & 85.07$\pm0.24$ & 84.88$\pm0.21$          & 85.08$\pm0.19$          & \textbf{85.15$\pm0.13$}  & -0.19 / 0.01 / 0.08  \\
             & ResNet-101  & 86.69$\pm0.23$ & 86.63$\pm0.03$          & \textbf{86.91$\pm0.12$} & 86.72$\pm0.11$           & -0.06 / 0.22 / 0.03  \\
             & ViT-Tiny    & 87.33$\pm1.56$ & 86.95$\pm1.47$          & 88.71$\pm1.17$          & \textbf{89.13$\pm0.25$}  & -0.38 / 1.38 / 1.80  
\end{tblr}
}
\end{table}

\subsection{Detailed Semantic Segmentation Results}\label{app:seg}
In this subsection, we array the whole semantic segmentation results for each category, namely class-wise IoU along with the class-mean IoU (mIoU). Concretely, the category level results on CityScapes\cite{CityScapes} with 19 classes and Pascal VOC-2012\cite{PascalVoc} with 22 classes are listed as shown in \tableautorefname~\ref{tb:seg_city} and \tableautorefname~\ref{tb:seg_voc}. Concretely, as shown in \tableautorefname~\ref{tb:seg_city}, the results tell us that \method is inclined to be more robust to rare class or hard samples, which is reflected as that \method achieves an IoU of 47.01 on \emph{Truck} via FCN-ResNet101 comparing with an IoU of 10.03 achieved by ERM under the same setting. Moreover, there seems not to have the same phenomenon in the failure cases of \method. Concretely, when \method fails to outperform ERM on CityScapes, such a degradation is smaller than improvement bu \method, i.e., \method degradates IoU on \emph{Pole} with FCN-ResNet101, while this degradation is 1.28\%, being greatly smaller than the aforementioned improvement by 36.98$\%$.
As for the results in \tableautorefname~\ref{tb:seg_voc}, it shows that even our \method fails to outperform ERM on validation set, we still achieve a greater mIoU on test set, which is more vital and further demonsrtates \method aids model to achieve a better generalization state.

\begin{table}
\centering
\caption{The detailed category level semantic segmentation results on CityScapes, in which the 19 classes are included.}
\label{tb:seg_city}
\renewcommand\arraystretch{1.2}
\resizebox{1.\textwidth}{!}{
\begin{tabular}{cccccccccccccccccccc|c}
\toprule
Method                                  & Road           & Sidew.         & Buid           & Wal            & Fence          & Pole           & Ligt           & sign           & Vega.          & Ter.           & sky            & Person         & Rider          & Car            & Truck          & Bus            & Train          & Motor          & Bicycle        & mloU            \\ 
\midrule
\multicolumn{21}{c}{Validation Set}                                                                                                                                                                                                                                                                                                                                                          \\ 
\hline
FCN-R101                                & \textbf{97.85} & \textbf{82.41} & \textbf{90.3}  & \textbf{25.36} & \textbf{45.08} & \textbf{62.32} & 67.15          & \textbf{76.27} & 91.36          & 53.62          & 93.79          & \textbf{79.49} & \textbf{57.51} & 91.75          & 10.03          & \textbf{57.66} & 11.67          & \textbf{48.74} & 74.61          & 64.05           \\
\rowcolor[rgb]{0.863,0.863,0.863} +DuRM & 97.65          & 81.86          & 89.95          & 23.30          & 42.29          & 61.03          & \textbf{67.36} & 76.13          & \textbf{91.60} & \textbf{54.50} & \textbf{93.97} & 78.92          & 52.77          & \textbf{92.93} & \textbf{47.01} & 49.61          & \textbf{13.25} & 47.21          & \textbf{74.73} & \textbf{65.05}  \\ \hline
Mit-B0                                  & 97.86          & 82.87          & 91.17          & 57.16          & 52.33          & 56.53          & 63.93          & 73.02          & 92.10          & 63.07          & 94.37          & 77.65          & 52.04          & 93.49          & 75.99          & 77.78          & 66.38          & 56.58          & 73.4           & 73.57           \\
\rowcolor[rgb]{0.839,0.839,0.839} +DuRM & \textbf{98.05} & \textbf{84.03} & \textbf{91.91} & \textbf{59.14} & \textbf{55.02} & \textbf{59.63} & \textbf{66.90} & \textbf{75.47} & \textbf{92.39} & \textbf{65.3}  & \textbf{94.80} & \textbf{79.10} & \textbf{54.90} & \textbf{94.24} & \textbf{77.22} & \textbf{83.21} & \textbf{75.61} & \textbf{61.93} & \textbf{75.00} & \textbf{75.99}  \\ \hline
Mit-B5                                  & 98.55          & \textbf{87.96} & 93.66          & 67.93          & 65.62          & \textbf{68.91} & 74.50          & 81.13          & \textbf{93.26} & \textbf{67.19} & 95.68          & \textbf{84.55} & 67.07          & 95.50          & 81.90          & \textbf{92.20} & \textbf{86.56} & 72.83          & 79.47          & 81.82           \\
\rowcolor[rgb]{0.847,0.847,0.847} +DuRM & \textbf{98.55} & 87.86          & \textbf{93.75} & \textbf{68.43} & \textbf{66.98} & 68.78          & \textbf{74.55} & \textbf{81.45} & 93.12          & 64.77          & \textbf{95.64} & 84.51          & \textbf{67.80} & \textbf{95.83} & \textbf{88.58} & 91.32          & 84.42          & \textbf{73.59} & \textbf{79.65} & \textbf{82.08}  \\ 
\hline
\multicolumn{21}{c}{Test Set}                                                                                                                                                                                                                                                                                                                                                                \\ 
\hline
FCN-R101                                & \textbf{97.99} & \textbf{81.88} & \textbf{89.87} & \textbf{28.43} & \textbf{42.20} & \textbf{60.12} & \textbf{69.81} & \textbf{72.37} & \textbf{92.19} & \textbf{65.96} & \textbf{94.24} & \textbf{81.13} & \textbf{56.04} & 91.48          & 3.82           & \textbf{31.42} & 5.4            & 32.35          & 68.98          & 61.35           \\
\rowcolor[rgb]{0.867,0.867,0.867} +DuRM & 97.85          & 80.62          & 89.61          & 19.71          & 41.63          & 59.49          & 68.33          & 72.04          & 92.17          & 64.55          & 93.98          & 81.06          & 52.92          & \textbf{92.99} & \textbf{32.68} & 30.28          & \textbf{9.14}  & \textbf{55.53} & \textbf{69.06} & \textbf{63.35}  \\ \hline
Mit-B0                                  & 98.00          & 82.34          & 91.27          & \textbf{51.31} & 51.23          & 53.86          & 64.02          & 70.64          & 92.57          & \textbf{69.62} & 95.08          & 80.83          & 59.12          & 94.14          & \textbf{66.11} & \textbf{78.62} & \textbf{72.63} & 56.83          & 69.36          & 73.56           \\
\rowcolor[rgb]{0.855,0.855,0.855} +DuRM & \textbf{98.19} & \textbf{83.36} & \textbf{91.84} & 49.57          & \textbf{54.77} & \textbf{57.59} & \textbf{67.04} & \textbf{72.58} & \textbf{92.69} & 69.83          & \textbf{95.30} & \textbf{81.92} & \textbf{61.16} & \textbf{94.67} & 64.23          & 74.75          & 69.52          & \textbf{60.10} & \textbf{71.17} & \textbf{74.23}  \\ \hline
Mit-B5                                  & 98.59          & 86.33          & 93.61          & \textbf{56.68} & 62.82          & \textbf{68.14} & 74.83          & \textbf{79.29} & 93.54          & \textbf{72.60} & 95.84          & 86.56          & 70.58          & 95.85          & 73.14          & 88.86          & 87.56          & \textbf{71.18} & \textbf{76.43} & 80.66           \\
\rowcolor[rgb]{0.8,0.8,0.8} +DuRM       & \textbf{98.66} & \textbf{86.75} & \textbf{93.64} & 55.30          & \textbf{63.06} & 68.09          & \textbf{75.26} & 79.09          & \textbf{93.57} & 72.42          & \textbf{95.84} & \textbf{86.58} & \textbf{70.94} & \textbf{95.91} & \textbf{76.25} & \textbf{90.39} & \textbf{87.74} & 70.99          & 76.32          & \textbf{80.88} \\ \bottomrule
\end{tabular}
}

\end{table}

\begin{table}
\centering
\caption{The detailed category level semantic segmentation results on Pascal VOC2012, in which the 22 classes are included.}

\renewcommand\arraystretch{1.2}
\label{tb:seg_voc}
\resizebox{1.\textwidth}{!}{
\begin{tabular}{cccccccccccccccccccccc|c}
\toprule
Method                                  & back.          & aero.          & bicycle        & bird           & boat           & bottle         & bus            & car            & cat            & chair          & cow            & table          & dog            & horse          & motor.         & person         & plant          & sheep          & sofa           & train          & monitor        & mIoU            \\ 
\midrule
\multicolumn{23}{c}{Validation Set}                                                                                                                                                                                                                                                                                                                                                                                            \\ 
\hline
DLB-R101                                & 92.32          & \textbf{85.43} & 44.57          & \textbf{67.88} & \textbf{55.49} & 50.32          & \textbf{81.00} & \textbf{78.52} & \textbf{79.82} & \textbf{27.51} & \textbf{67.35} & 32.88          & 62.65          & 72.05          & 76.96          & 80.34          & 44.85          & \textbf{72.23} & 37.67          & \textbf{72.46} & \textbf{55.83} & \textbf{63.72}  \\
\rowcolor[rgb]{0.831,0.831,0.831} +DuRM & \textbf{92.58} & 78.46          & \textbf{51.55} & 65.99          & 53.01          & \textbf{53.32} & 74.42          & 77.57          & 77.57          & 25.90          & 63.34          & \textbf{52.66} & \textbf{64.63} & \textbf{74.17} & \textbf{80.01} & \textbf{81.34} & \textbf{53.65} & 46.33          & \textbf{42.14} & 71.86          & 53.89          & 63.54           \\ \hline
HRNet-W48                               & 91.03          & 55.52          & \textbf{58.24} & \textbf{68.91} & \textbf{57.00} & 52.03          & 51.43          & 64.36          & \textbf{70.53} & 22.15          & \textbf{59.43} & 35.41          & \textbf{61.68} & \textbf{68.08} & 55.35          & 79.28          & 34.29          & \textbf{68.05} & \textbf{36.25} & 62.78          & 56.32          & 57.53           \\
\rowcolor[rgb]{0.867,0.867,0.867} +DuRM & \textbf{91.47} & \textbf{77.29} & 57.50          & 66.42          & 34.68          & \textbf{53.82} & \textbf{78.64} & \textbf{77.15} & 52.52          & \textbf{25.21} & 59.40          & \textbf{36.93} & 52.08          & 54.03          & \textbf{72.38} & \textbf{79.75} & \textbf{40.42} & 64.10          & 35.87          & \textbf{68.77} & \textbf{64.00} & \textbf{59.16}  \\ \hline
PSPNet-R18                              & \textbf{90.32} & 75.47          & \textbf{49.14} & 52.46          & 51.84          & \textbf{47.57} & 80.22          & \textbf{73.04} & 68.54          & 15.55          & \textbf{56.71} & \textbf{48.16} & 53.09          & \textbf{61.91} & 72.73          & 72.08          & \textbf{33.21} & 64.57          & \textbf{27.59} & 70.50          & 50.54          & 57.87           \\
\rowcolor[rgb]{0.843,0.843,0.843} +DuRM & 90.10          & \textbf{75.81} & 44.79          & \textbf{56.42} & \textbf{54.53} & 46.91          & \textbf{81.44} & 70.40          & \textbf{69.07} & \textbf{17.59} & 53.94          & 47.29          & \textbf{55.74} & 57.17          & \textbf{73.01} & \textbf{73.09} & 29.85          & \textbf{68.94} & 26.97          & \textbf{73.14} & \textbf{51.46} & \textbf{57.98}  \\ 
\hline
\multicolumn{23}{c}{Test Set}                                                                                                                                                                                                                                                                                                                                                                                                  \\ 
\hline
DLB-R101                                & 91.98          & \textbf{79.02} & 45.48          & 62.74          & \textbf{50.93} & \textbf{53.72} & \textbf{83.89} & 71.51          & 71.03          & 21.16          & \textbf{74.26} & 29.73          & 60.58          & \textbf{80.47} & 78.84          & \textbf{79.23} & 46.5           & \textbf{65.13} & 36.35          & \textbf{70.72} & 39.25          & 61.55           \\
\rowcolor[rgb]{0.839,0.839,0.839} +DuRM & \textbf{92.39} & 76.04          & \textbf{51.46} & \textbf{72.84} & 44.63          & 52.57          & 77.73          & \textbf{73.08} & \textbf{75.73} & \textbf{23.63} & 62.79          & \textbf{53.32} & \textbf{64.66} & 66.64          & \textbf{79.60} & 78.31          & \textbf{48.44} & 52.03          & \textbf{41.35} & 58.41          & \textbf{48.74} & \textbf{61.64}  \\ \hline
HRNet-W48                               & 91.48          & 52.37          & \textbf{54.80} & \textbf{68.96} & \textbf{44.15} & 60.95          & 53.14          & 69.88          & \textbf{72.93} & 20.59          & 54.41          & 41.00          & \textbf{64.79} & \textbf{65.89} & 51.61          & \textbf{78.35} & 40.25          & 65.63          & 42.61          & \textbf{68.23} & 51.35          & 57.78           \\
\rowcolor[rgb]{0.867,0.867,0.867} +DuRM & \textbf{92.00} & \textbf{74.01} & 52.17          & 62.68          & 31.02          & \textbf{63.69} & \textbf{79.00} & \textbf{76.83} & 51.60          & \textbf{23.46} & \textbf{62.43} & \textbf{41.09} & 53.97          & 57.95          & \textbf{70.70} & 77.48          & \textbf{47.15} & \textbf{69.05} & \textbf{43.40} & 68.13          & \textbf{54.35} & \textbf{59.63}  \\ \hline
PSP-R18                                 & 89.92          & 70.47          & \textbf{47.80} & 59.10          & 39.85          & 52.33          & \textbf{78.05} & 70.58          & \textbf{69.95} & 11.36          & \textbf{55.57} & 51.16          & \textbf{57.19} & 56.29          & 73.97          & \textbf{69.61} & \textbf{31.57} & 58.57          & 32.03          & \textbf{64.70} & 42.13          & 56.29           \\
\rowcolor[rgb]{0.863,0.863,0.863} +DuRM & \textbf{90.06} & \textbf{70.82} & 46.28          & \textbf{65.5}  & \textbf{40.37} & \textbf{54.28} & 74.37          & \textbf{73.20} & 65.50          & \textbf{17.10} & 54.26          & \textbf{51.51} & 53.92          & \textbf{61.22} & \textbf{75.03} & 68.94          & 31.09          & \textbf{60.35} & \textbf{32.27} & 60.83          & \textbf{49.23} & \textbf{56.96} \\ \bottomrule
\end{tabular}
}
\end{table}
\subsubsection{LoveDA Results}\label{app:loveda}
In this subsection, we array the detailed results of LoveDA\cite{LoveDA} dataset. Concretely, we implement Segmentor-Large\cite{Segmenter} and UperNet-SwinTransformer\cite{UperNet} methods on it which are two popular semantic segmentation paradigm.

What interesting is the Segmentor-Large with \method predict some pixels into the dummy class. To further explore the phenomenon, we make some visualization on LoveDA prediction with some dummy class predictions. To achieve accurate analysis, we seek for the author of LoveDA to obtain two ground truth masks.
As shown in \figureautorefname~\ref{fig:app_loveDA}, a red box annotates the dummy class region. As we can see in image \emph{4700}, the dummy class marks a small portion of mis-labeled pixels by two models (ERM and \method), which means \method could facilitate exploiting novel knowledge. As for image \emph{4409}, the segmentor obviously fails to segment the \emph{water} region , where some uncertain pixels are segmented into the dummy class. To this end, the dummy class could also be an indicator on measuring the model confidence or prediction credibility.
\begin{figure}
    \centering
    \includegraphics[width=0.95\textwidth]{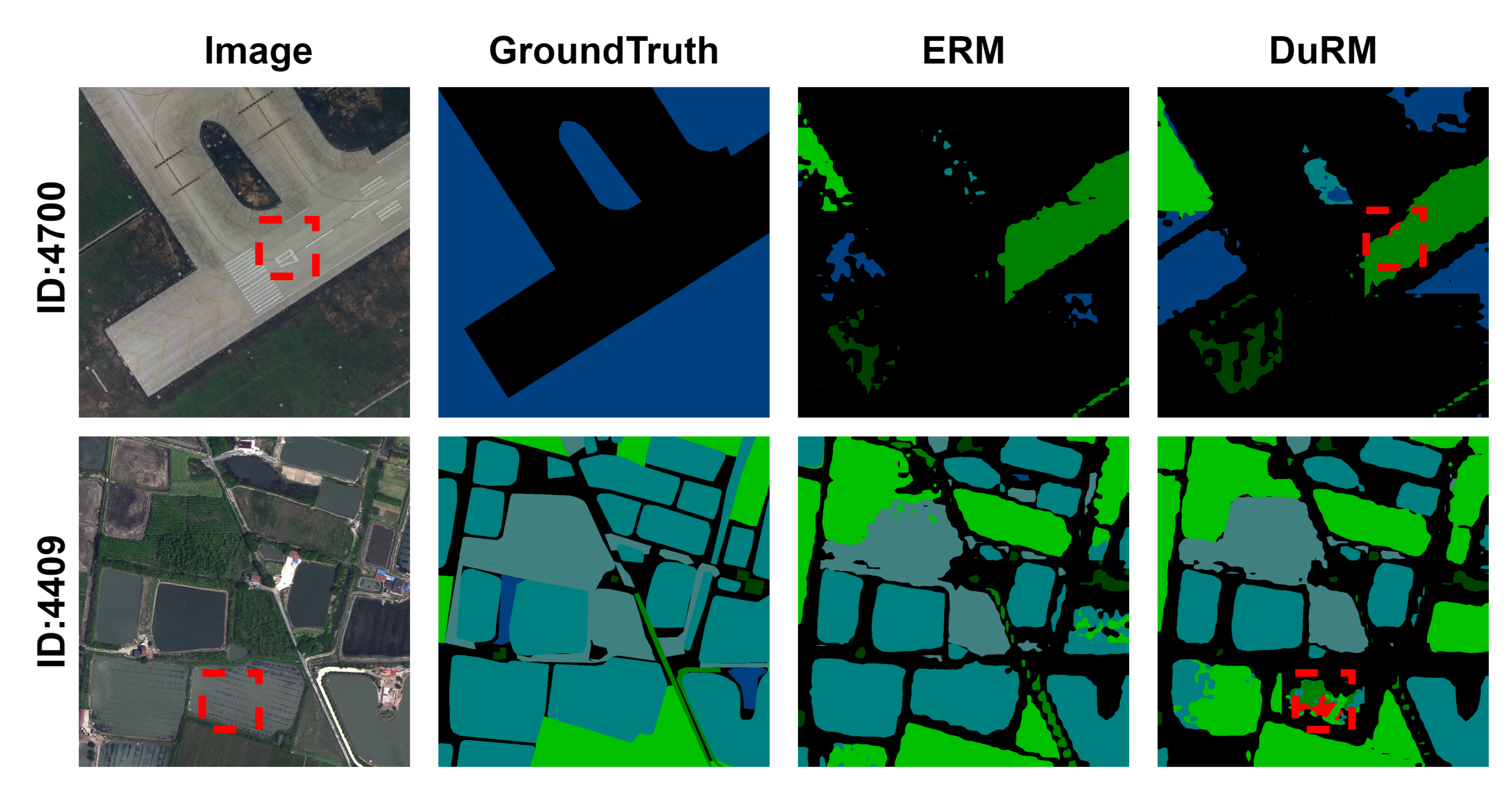}
    % \vspace{-.5in}
    \caption{The visualization of results on LoveDA with dummy class output. The models are trained with a segmentor-L. The dummy class is marked in red, while other normal classes are marked in the same color among figures. Best view in color.}
    \label{fig:app_loveDA}
\end{figure}

\subsection{Detailed OOD Generalization Results}\label{app:ood}
\definecolor{Alto}{rgb}{0.823,0.823,0.823}
\definecolor{Alto1}{rgb}{0.847,0.847,0.847}
\definecolor{Alto2}{rgb}{0.839,0.839,0.839}
\definecolor{Silver}{rgb}{0.725,0.725,0.725}
\definecolor{Alto3}{rgb}{0.85,0.85,0.85}
\definecolor{Alto4}{rgb}{0.827,0.827,0.827}
\definecolor{Alto5}{rgb}{0.819,0.819,0.819}
\definecolor{Silver1}{rgb}{0.796,0.796,0.796}
\definecolor{Silver2}{rgb}{0.776,0.776,0.776}
\definecolor{Alto6}{rgb}{0.835,0.835,0.835}
\begin{table}
\centering
\caption{The extensive domain level accuracy. By coincidence, all of these four benchmarks are composed of four different domains. Thus, the results when each domain is as testing set with three others as training are all listed. Finally, the results among four domains are averaged.}
\label{tb:ood_app}
\resizebox{1.\textwidth}{!}{
\begin{tblr}{
  cells = {c},
  row{3} = {Alto},
  row{5} = {Alto},
  row{7} = {Alto},
  row{9} = {Alto},
  row{11} = {Alto},
  row{13} = {Alto},
  row{16} = {Alto},
  row{18} = {Alto},
  row{20} = {Alto},
  row{22} = {Alto},
  row{24} = {Alto},
  row{26} = {Alto},
  cell{2}{1} = {r=12}{},
  cell{2}{8} = {r=12}{},
  cell{15}{1} = {r=12}{},
  cell{15}{8} = {r=12}{},
  vlines,
  hline{1-2,14-15,27} = {-}{},
  hline{4,6,8,10,12,17,19,21,23,25} = {2-7,9-14}{},
}
Benchmark    & Method  & C               & L               & S               & V               & Average         &       Benchmark     & Method  & A               & C               & P               & R               & Average         \\
VLCS & Vanilla & \textbf{94.40~} & \textbf{62.44~} & 65.26~          & \textbf{67.58~} & \textbf{72.42~} & OfficeHome & Vanilla & \textbf{55.51~} & 50.68~          & 71.13~          & 73.27~          & 62.65~          \\
     & DuRM    & 92.16~          & 61.60~          & \textbf{66.65~} & 67.33~          & 71.93~          &            & DuRM    & 55.27~          & \textbf{51.20~} & \textbf{71.73~} & \textbf{73.50~} & \textbf{62.92~} \\
     & SWAD    & 97.11~          & 61.19~          & 70.93~          & 74.08~          & 75.83~          &            & SWAD    & 54.60~          & 49.10~          & 70.96~          & 72.75~          & 61.85~          \\
     & DuRM    & \textbf{97.41~} & \textbf{61.76~} & \textbf{71.00~} & \textbf{75.82~} & \textbf{76.50~} &            & DuRM    & \textbf{54.63~} & \textbf{50.00~} & \textbf{71.00~} & \textbf{72.65~} & \textbf{62.07~} \\
     & DANN    & 92.30~          & \textbf{62.57~} & 61.52~          & \textbf{61.25~} & 69.41~          &            & DANN    & 52.84~          & 46.74~          & 68.88~          & \textbf{69.05~} & 59.38~          \\
     & DuRM    & \textbf{92.65~} & 62.33~          & \textbf{62.26~} & 61.11~          & \textbf{69.59~} &            & DuRM    & \textbf{52.96~} & \textbf{47.70~} & \textbf{69.13~} & 68.99~          & \textbf{59.70~} \\
     & VRex    & 95.45~          & \textbf{62.75~} & 69.55~          & \textbf{72.86~} & \textbf{75.15~} &            & VRex    & \textbf{56.82~} & 48.85~          & \textbf{72.55~} & 74.89~          & 63.28~          \\
     & DuRM    & \textbf{95.62~} & 61.76~          & \textbf{69.87~} & 72.43~          & 74.92~          &            & DuRM    & 56.57~          & \textbf{49.66~} & 72.54~          & \textbf{75.07~} & \textbf{63.46~} \\
     & RSC     & \textbf{95.22~} & \textbf{64.02~} & 68.77~          & 71.49~          & 74.87~          &            & RSC     & 55.19~          & 47.30~          & 70.37~          & 72.32~          & 61.30~          \\
     & DuRM    & 94.30~          & 63.23~          & \textbf{70.81~} & \textbf{72.44~} & \textbf{75.19~} &            & DuRM    & \textbf{55.23~} & \textbf{48.09~} & \textbf{70.62~} & \textbf{72.44~} & \textbf{61.60~} \\
     & MMD     & 95.85~          & 60.72~          & \textbf{70.97~} & 70.03~          & 74.39~          &            & MMD     & \textbf{57.53~} & \textbf{49.92~} & \textbf{72.30~} & \textbf{74.10~} & \textbf{63.46~} \\
     & DuRM    & \textbf{97.24~} & \textbf{60.82~} & 70.81~          & \textbf{71.71~} & \textbf{75.15~} &            & DuRM    & 56.37~          & 49.91~          & 72.19~          & 73.99~          & 63.12~          \\
Benchmark     & Method  & A               & C               & P               & S               & Average         &           Benchmark & Method  & L100~         & L38~          & L43~          & L46~          & Average         \\
PACS & Vanilla & 79.87~          & 76.80~          & \textbf{92.77~} & \textbf{77.67~} & 81.78~          & TerraInc   & Vanilla & \textbf{36.97~} & 49.69~          & \textbf{35.78~} & \textbf{44.20~} & \textbf{41.66~} \\
     & DuRM    & \textbf{80.36~} & \textbf{77.49~} & 92.59~          & 76.98~          & \textbf{81.85~} &            & DuRM    & 34.43~          & \textbf{49.89~} & 34.57~          & 43.93~          & 40.71~          \\
     & SWAD    & \textbf{83.67~} & 76.17~          & 95.41~          & 76.71~          & 82.99~          &            & SWAD    & \textbf{46.35~} & 33.31~          & 53.79~          & 33.92~          & 41.84~          \\
     & DuRM    & 83.24~          & \textbf{76.39~} & \textbf{95.71~} & \textbf{77.54~} & \textbf{83.22~} &            & DuRM    & 49.27~          & \textbf{33.59~} & \textbf{53.87~} & \textbf{34.79~} & \textbf{42.88~} \\
     & DANN    & 70.36~          & 73.86~          & 88.68~          & \textbf{79.03~} & 77.98~          &            & DANN    & 34.15~          & \textbf{37.24~} & \textbf{26.58~} & \textbf{33.26~} & \textbf{32.81~} \\
     & DuRM    & \textbf{71.58~} & \textbf{74.82~} & \textbf{89.38~} & 78.54~          & \textbf{78.58~} &            & DuRM    & \textbf{39.89~} & 32.80~          & 25.47~          & 29.70~          & 31.97~          \\
     & VRex    & \textbf{80.24~} & \textbf{76.19~} & \textbf{95.51~} & 72.60~          & \textbf{81.13~} &            & VRex    & 37.60~          & \textbf{52.56~} & 36.16~          & \textbf{41.68~} & 42.00~          \\
     & DuRM    & 79.67~          & 75.44~          & 95.33~          & \textbf{72.77~} & 80.80~          &            & DuRM    & \textbf{38.06~} & 52.44~          & \textbf{37.20~} & 41.03~          & \textbf{42.18~} \\
     & RSC     & 80.31~          & \textbf{75.98~} & \textbf{93.37~} & \textbf{74.19~} & \textbf{80.96~} &            & RSC     & \textbf{40.29~} & \textbf{55.92~} & 38.59~          & 42.96~          & 44.44~          \\
     & DuRM    & \textbf{80.31~} & 74.47~          & 93.29~          & 73.08~          & 80.29~          &            & DuRM    & 38.84~          & 55.91~          & \textbf{39.31~} & \textbf{45.00~} & \textbf{44.77~} \\
     & MMD     & 80.63~          & 74.12~          & \textbf{92.26~} & 77.40~          & 81.10~          &            & MMD     & 30.66~          & 42.75~          & \textbf{35.92~} & \textbf{40.55~} & 37.47~          \\
     & DuRM    & \textbf{81.04~} & \textbf{74.71~} & 92.16~          & \textbf{77.96~} & \textbf{81.47~} &            & DuRM    & \textbf{33.86~} & \textbf{42.92~} & 34.48~          & 39.40~          & \textbf{37.67~} 
\end{tblr}
}
\end{table}
In this subsection, we list the detailed performance of our \method plugged in the various state-of-the-art OOD generalization methods. Concretely, we widely validate these methods on four mainstream OOD generalization benchmark, namely VLCS\cite{VLCS}, PACS\cite{PACS}, OfficeHome\cite{OfficeHome} and TerraInc\cite{TerraInc}. Among these benchmarks, four domains are included within each benchmark. To better validate \method performance, we list the testing accuracy on each domain for all adopted benchmarks in \tableautorefname~\ref{tb:ood_app}. 
As the results shown, there goes with clear evidence that \method facilitates obtaining a better generalization.

\subsection{Detailed Adversarial Robustness Results}\label{app:adv}
\begin{table}[h]
\centering
\caption{Top-1 accuracy on adversarial robustness. The experiments are conducted on CIFAR-10 with a ResNet-18 model.}
\label{tb:adv_app}
\begin{tblr}{
  cells = {c},
  hlines,
  vline{2-5} = {-}{},
  hline{3,5} = {-}{dashed},
}
Method             & ERM   & DuRM-1         & DuRM-2         & DuRM-3         \\
FGM Attack         & 28.49 & 29.39          & \textbf{30.13} & 30.08          \\
FGM+Adv Training   & 53.54 & \textbf{54.18} & 54.09          & 53.97          \\
PGD Attack         & 12.55 & 13.29          & 13.09          & \textbf{13.79} \\
PGD + Adv Training & 46.85 & 46.89          & 46.65          & \textbf{47.10} 
\end{tblr}
\end{table}
In this subsection, we validate how \method performs under the two kinds of mainstream adversarial attacking scenarios. Concretely, as shown in \tableautorefname~\ref{tb:adv_app}, we test FGSM\cite{FGSM} and PGD\cite{PGD} attack to a ResNet-18 model on CIFAR-10 dataset. Then, we also conduct adversarial training to cope with the corresponding attack. As the results shown, our proposed \method comprehensively outperform ERM on various task settings.

\subsection{Detailed Long-tailed Recognition Results}\label{app:ltr}
\begin{table}[h]
\centering
\caption{Classification accuracy on long-tailed CIFAR-10 datasets with a backbone model of ResNet-18.}
\label{tb:lt_app}
\label{classification}
\begin{tblr}{
  cells = {c},
  cell{1}{1} = {r=2}{},
  cell{1}{2} = {c=4}{},
  cell{3}{1} = {r},
  cell{4}{1} = {r},
  cell{5}{1} = {r},
  vline{3} = {2}{},
  vline{2-3} = {3-5}{},
  hline{1,3-6} = {-}{},
  hline{2} = {2-5}{},
}
Imb. Ratio & Method (Top-1 Accuracy \%) &                &                &                \\
           & ERM                        & DuRM-1         & DuRM-2         & DuRM-3         \\
100        & 63.29                      & 63.60          & 63.33          & \textbf{64.29} \\
50         & ~ ~ 69.63                  & \textbf{70.82} & 69.97          & 70.17          \\
10         & 81.01                      & 80.73          & \textbf{81.15} & 80.25          
\end{tblr}
\end{table}
In this subsection, we further conduct \method on three kinds of long-tailed classification scenarios. Concretely, following the previous works, we manually construct long-tailed CIFAR-10 with imbalanced ratios of 100, 50, and 10, among which 100 is the hardest settings and 10 is the easiest setting. As shown in \tableautorefname~\ref{tb:lt_app}, \method shows better generalization on long-tailed recognition scenarios. Moreover, we observe that \method performs better under more hard scenes.
\subsection{Supplementary for Gradient Variance}\label{app:fail}
In this subsection, we conduct another failed case of \method on CIFAR-10 to compare the results of gradient variance. As shown in \figureautorefname~\ref{fig:supp_gradvar}, the successful case of \method achieves a greater gradient variance, while the failed case didn' t outperform ERM on gradient variance. What' s more, the accuracy values are not listed. Then we report here as: the failed case has an accuracy of 85.57\%, the successful case has an accuracy of 84.76\% and the ERM has an accuracy of 84.63\%. Thus, the failed case can be blamed as the failure on improving gradient variance. To further support this claim, the dashed line mark where the best validated epoch appears. Furthermore, we notice that the successful case has better accuracy on all category, while the failed case has poor accuracy on all category.

\begin{figure}[t!]
\centering
\subfigure[\method Successful Case]{
\begin{minipage}[t]{0.48\linewidth}
\centering
\includegraphics[width=1\textwidth]{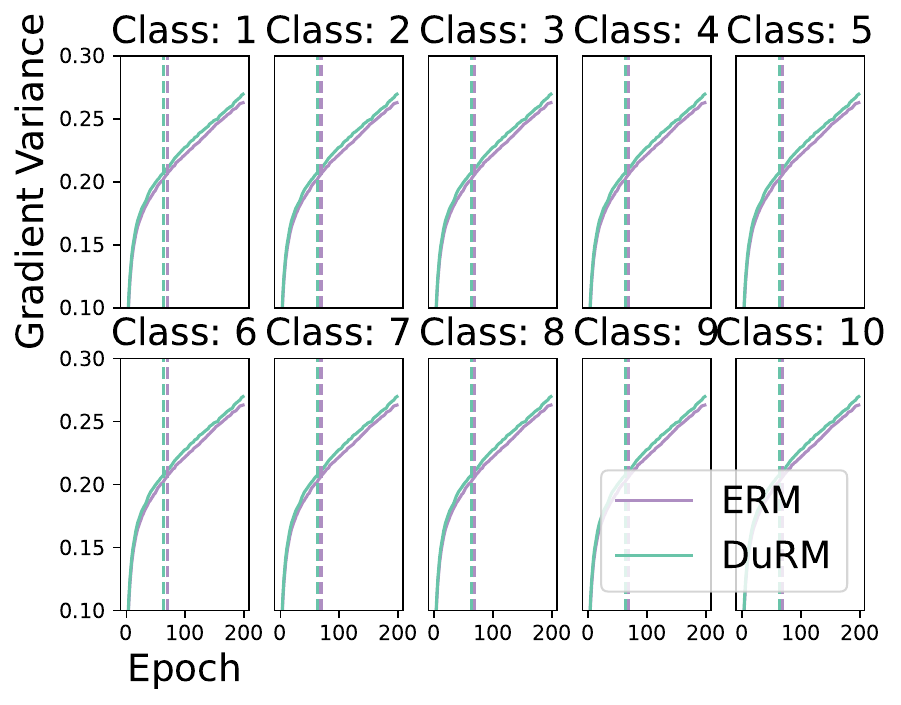}
% \label{fig:CDF}
\end{minipage}%
}
\subfigure[\method Failed Case]{
\begin{minipage}[t]{0.48\linewidth}
\centering
\includegraphics[width=1\textwidth]{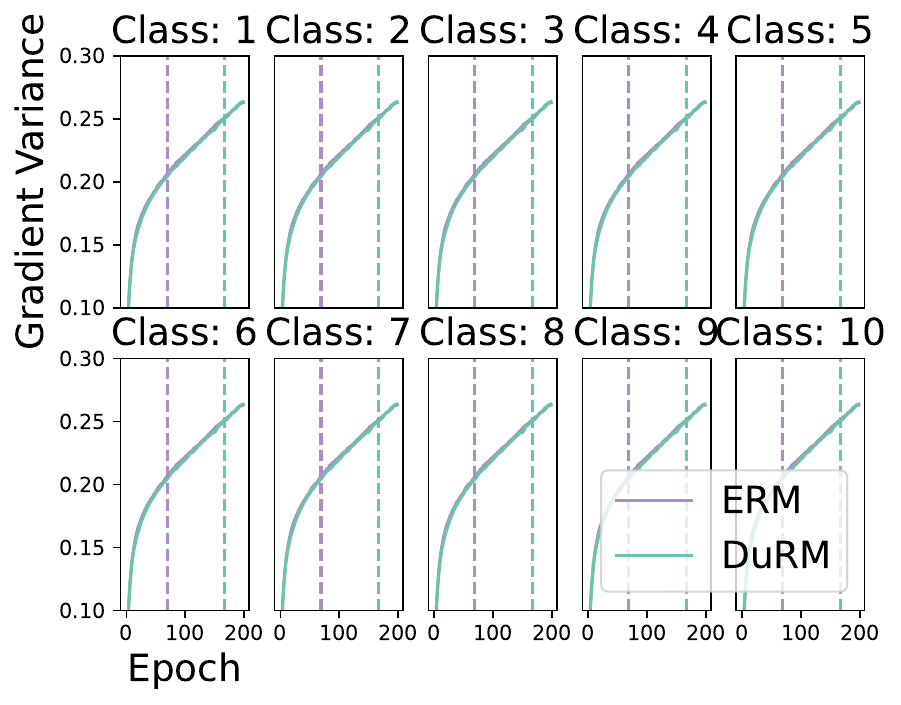}

\end{minipage}%
}%
\centering
\vspace{-.1in}
\caption{The convergence curve of training a ResNet-18 on CIFAR-10 with ERM, \method-1 (successful case) and \method-2 (failed case). We record the class-wise gradient variance and corresponding accuracy.}
% \vspace{-.1in}
\label{fig:supp_gradvar}
\end{figure}

\subsection{Supplementary for Number of Dummy Class}\label{app:oxford}
\begin{figure}[t!]
\centering
\subfigure[CIFAR-10]{
\begin{minipage}[t]{0.3\linewidth}
\centering
\includegraphics[width=1\textwidth]{images/CIFAR10.pdf}
% \label{fig:CDF}
\end{minipage}%
}
\subfigure[STL-10]{
\begin{minipage}[t]{0.30\linewidth}
\centering
\includegraphics[width=1\textwidth]{images/STL.pdf}

\end{minipage}%
}%
\subfigure[Oxford-Pet]{
\begin{minipage}[t]{0.3\linewidth}
\centering
\includegraphics[width=1\textwidth]{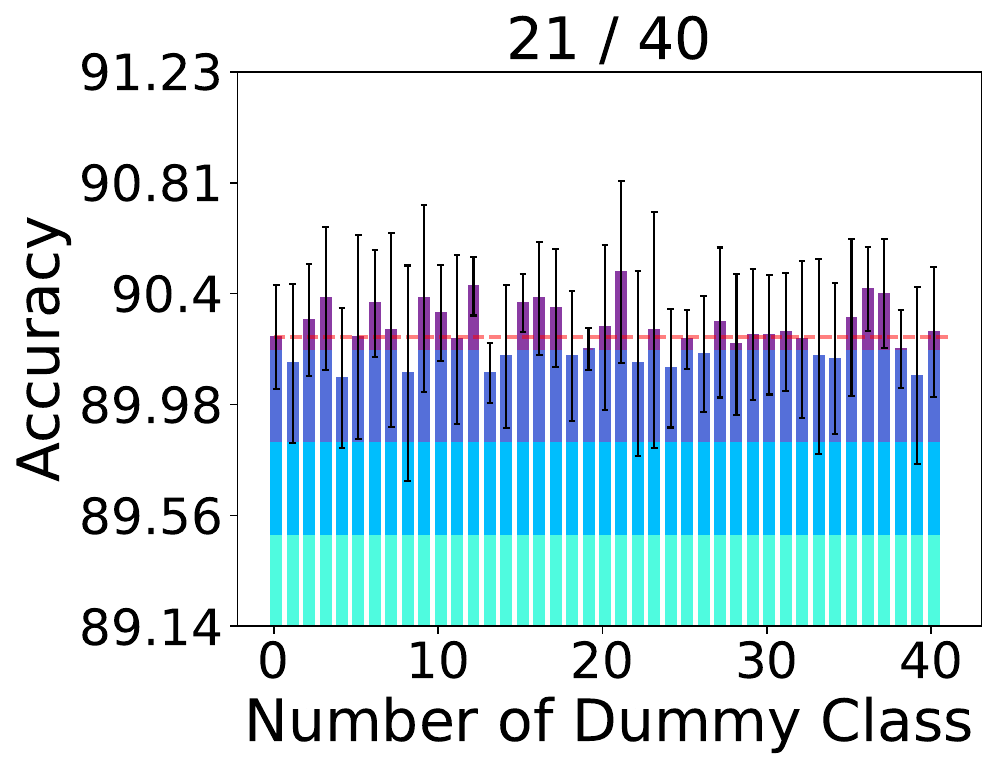}
\label{fig:NS}

\end{minipage}%
}%
\centering
\vspace{-.1in}
\caption{The accuracy variance curve along the number of dummy class on CIFAR-10, STL-10 and Oxford-Pet.}
% \vspace{-.1in}
\label{fig:supp_ndc}
\end{figure}
In this subsection, we add the supplementary for the experiments in ablation study on the number of dummy class, as shown in \figureautorefname~\ref{fig:supp_ndc}. Concretely, we additionally add experiments on another widely used dataset Oxford-Pet, which has more original number of class comparing with CIFAR-10 and STL-10. Despite that the count of \method outperforming ERM on Oxford-Pet becomes lower, there still goes without clear evidence that the number of dummy class is influencing the model generalization performance. 

\subsection{Supplementary for Gradient Fraction}\label{app:grad_frac4}
\begin{figure}[t!]
\centering
\subfigure[\method-2 Grad. Fraction]{
\begin{minipage}[t]{0.3\linewidth}
\centering
\includegraphics[width=1\textwidth]{images/DC2.pdf}
% \label{fig:CDF}
\end{minipage}%
}
\subfigure[\method-3 Grad. Fraction]{
\begin{minipage}[t]{0.30\linewidth}
\centering
\includegraphics[width=1\textwidth]{images/DC3.pdf}

\end{minipage}%
}%
\subfigure[\method-4 Grad. Fraction]{
\begin{minipage}[t]{0.3\linewidth}
\centering
\includegraphics[width=1\textwidth]{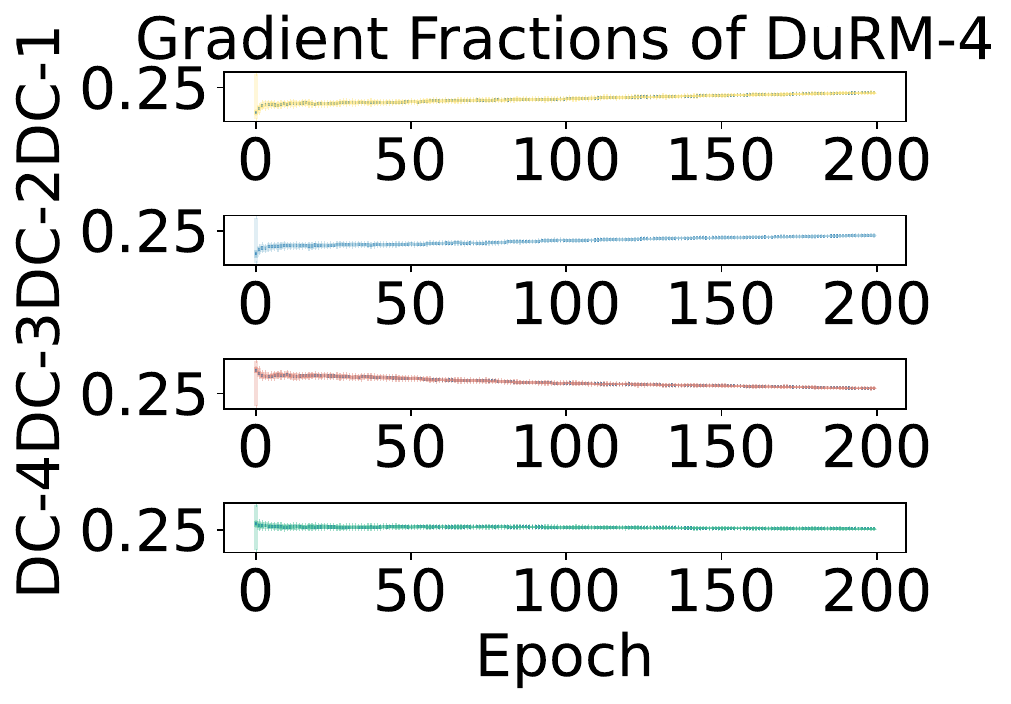}
\label{fig:DC4}
\end{minipage}%
}%
\centering
\vspace{-.1in}
\caption{The gradient fraction convergence curves for dummy classes under \method-2, \method-3 and \method-4.}
% \vspace{-.1in}
\label{fig:supp_frac}
\end{figure}

In this subsection, we add the supplementary for the experiments in analyzing the gradient fraction under multi-dummy class settings, including \method-2, \method-3 and \method-4, as shown in \figureautorefname~\ref{fig:supp_frac}. Concretely, we additionally add experiments on \method-4, in which the results further support our analysis in the main text that all the dummy classes are contributing to the corresponding gradient. 

\subsection{Detailed Regularization Results}\label{app:trick}

\begin{table}
\centering
\caption{Compatibility to other regularization on CIFAR-10.}
\label{tb-app-trick}
\begin{tabular}{c|c|c|c|c} 
\toprule
Method    & ERM    & \methodx{1}          & \methodx{2}          & \methodx{3}          \\ 
\midrule
Vanilla   & 79.82~ & 79.86~          & 79.75~          & \textbf{80.01~}  \\ 

EarlyStop & 78.68~ & 78.85~          & 78.16~          & \textbf{78.96~}  \\ 

L2        & 79.70~ & 79.91~          & \textbf{80.13}~ & 79.78\textbf{~}  \\ 

Momentum  & 81.98~ & 82.05~          & 82.22~          & \textbf{82.33~}  \\ 

EMA       & 79.62~ & 79.65~          & \textbf{79.99}~ & 79.92\textbf{~}  \\ 

MixUp     & 82.91~ & 82.85~          & \textbf{83.50}~ & 82.76~           \\ 

SWA       & 93.46~ & \textbf{93.48}~ & 93.43~          & \textbf{93.48}~  \\
\bottomrule
\end{tabular}
\end{table}
In this subsection, we provide the whole results about the comparison with other regularization methods, including Early Stop, L2, Momentum Gradient, EMA, MixUp and SWA. As shown in \tableautorefname~\ref{tb-app-trick}, our devised \method is able to co-operate with other regularization methods well and achieve better performance than simply applying them as a regularization. What' s more, our \method is easy enough, which makes us almost a free launch method. 

\end{document}